\newcommand{\eat}[1]{}
\newcommand{\addtag}{\addtocounter{equation}{1}\tag{\theequation}}
\newtheorem{result}{Theorem} \newtheorem{minorresult}[result]{Proposition}
\newtheorem{theorem}{Theorem}[section]
\newtheorem{lemma}[theorem]{Lemma}
\newtheorem{definition}[theorem]{Definition}
\newtheorem{proposition}[theorem]{Proposition}
\newtheorem{corollary}[theorem]{Corollary}
\newtheorem{example}[theorem]{Example}
\newtheorem*{proposition*}{Proposition}
\newtheorem*{definition*}{Definition}
\newcommand{\X}{\mathcal{X}}
\newcommand{\Y}{\mathcal{Y}}
\newcommand{\mC}{\mathcal{C}}
\newcommand{\mD}{\mathcal{D}}
\newcommand{\mB}{\mathcal{B}}
\newcommand{\mI}{\mathcal{I}}
\newcommand{\mL}{\mathcal{L}}
\newcommand{\mW}{\mathcal{W}}
\newcommand{\mP}{\mathcal{P}}
\newcommand{\mcal}{\mathrm{MC}}
\newcommand{\mcals}{\mathrm{MC}^\mathrm{s}}
\newcommand{\mcalreg}{\mathrm{MC}^\mathrm{f}}
\newcommand{\macc}{\mathrm{MA}}
\newcommand{\y}{\mathbf{y}}
\newcommand{\x}{\mathbf{x}}
\newcommand{\z}{\mathbf{z}}
\newcommand{\rgta}{\rightarrow}
\newcommand{\lt}{\left}
\newcommand{\rt}{\right}
\newcommand{\zo}{\ensuremath{\{0,1\}}}
\newcommand{\norm}[1]{\left\lVert#1\right\rVert}
\newcommand{\infnorm}[1]{\left\lVert#1\right\rVert_\infty}
\renewcommand{\varepsilon}{\epsilon}
\renewcommand{\tilde}{\wt}
\renewcommand{\eps}{\epsilon}
\newcommand{\fr}[1]{\ensuremath{\frac{1}{#1}}}
\newcommand{\C}{\mathcal{C}}
\DeclareMathOperator{\poly}{poly}
\DeclareMathOperator*{\E}{\mathbf{E}}
\newcommand{\card}[1] {\left\vert #1 \right\vert}
\newcommand{\set}[1] {\left\{ #1 \right\}}
\newcommand{\given}{~\middle \vert~}
\newcommand{\lr}[1]{\left[~ #1 ~\right]}
\newcommand{\fs}{f^*}
\newcommand{\Lip}{\mL_{1\to\infty}}
\newcommand{\VC}{\mathsf{VC}}
\title{Low-Degree Multicalibration}
\author{
Parikshit Gopalan\thanks{\texttt{pgopalan@vmaware.com}}\\
VMware Research
\and
Michael P. Kim\thanks{\texttt{mpkim@berkeley.edu}~~Supported by the Miller Institute for Basic Research in Science and by the Simons Collaboration on the Theory of Algorithmic Fairness}\\
UC Berkeley
\and
Mihir Singhal\thanks{\texttt{mihirs@mit.edu}~~This work completed during an internship at VMware Research.}\\
MIT
\and
Shengjia Zhao\thanks{\texttt{sjzhao@stanford.edu}}\\
Stanford University
}
\begin{document}
\maketitle

\begin{abstract}
Introduced as a notion of algorithmic fairness, multicalibration has proved to be a powerful and versatile concept with implications far beyond its original intent.
This stringent notion---that predictions be well-calibrated across a rich class of intersecting subpopulations---provides its strong guarantees at a cost: the computational and sample complexity of learning multicalibrated predictors are high, and grow exponentially with the number of class labels.
In contrast, the relaxed notion of multiaccuracy can be achieved more efficiently, yet many of the most desirable properties of multicalibration cannot be guaranteed assuming multiaccuracy alone.
This tension raises a key question:  \emph{Can we learn predictors with multicalibration-style guarantees at a cost commensurate with multiaccuracy?}

In this work, we define and initiate the study of \emph{Low-Degree Multicalibration}.
Low-Degree Multicalibration defines a hierarchy of increasingly-powerful multi-group fairness notions that spans multiaccuracy and the original formulation of multicalibration at the extremes.
Our main technical contribution demonstrates that key properties of multicalibration, related to fairness and accuracy, actually manifest as low-degree properties.
Importantly, we show that low-degree multicalibration can be significantly more efficient than full multicalibration.
In the multi-class setting, the sample complexity to achieve low-degree multicalibration improves exponentially (in the number of classes) over full multicalibration.
Our work presents compelling evidence that low-degree multicalibration represents a sweet spot, pairing computational and sample efficiency with strong fairness and accuracy guarantees.
\end{abstract}

\clearpage
\section{Introduction}

Machine learning models are increasingly used to aid decision-making in professional, personal, medical, and legal spheres.
This ubiquity has brought increased concern about whether these models make fair predictions, especially on underrepresented \emph{subpopulations}.
Typically in supervised learning, models are trained to minimize the expected loss over the \emph{entire population}, and can be less accurate on such subpopulations.
A particular fairness concern is that predictive models may commit \emph{algorithmic stereotyping}, where every member of a subpopulation receives similar predictions, despite internal diversity within the subpopulation.
Such shortcomings of the standard supervised learning framework have been documented extensively within the research community and in the popular press.
In response, a growing area of research investigates \emph{multi-group} fairness notions, that require predictions to perform well not simply overall, but even when restricting attention to structured subgroups \cite{hkrr2018,kearns2018,krr,kgz,shabat2020sample,blum2020advancing,OI,Jung20,RothblumYona21,tosh2021simple,gupta2021online,dwork2022beyond}.

Central within the study of multi-group fairness is the notion of \emph{multicalibration}.
Calibration is a classic notion from the forecasting literature \cite{Dawid} that was introduced to the literature on fairness in prediction tasks by \cite{KleinbergMR17}.
For a binary prediction task, calibration requires that amongst the individuals which receive prediction $f(x) = v$, the true expectation is $v$.
Defined by \cite{hkrr2018}, multicalibration strengthens the classic notion, requiring a predictor to be calibrated simultaneously across a large, possibly-overlapping collection of subpopulations.
We model the collection by a hypothesis class $\mC \subseteq \set{c:\X \to \set{0,1}}$ and say that a predictor $f$ is mutlicalibrated over $\mC$ if, for all predicted values $v \in [0,1]$, and for all $c \in \mC$
\begin{gather*}
\E\lr{c(\x) \cdot (\y - v) \given f(\x) = v} \approx 0.
\end{gather*}
Intuitively, an expressive class $\mC$ will contain subpopulations that go beyond ``protected groups'' (typically defined marginally in terms of a single attribute).
In this way, while calibration provides only marginal guarantees, multicalibration requires predictors to capture the variation within subpopulations, to give confident (but not overconfident) predictions,
thus providing strong protections against algorithmic stereotyping.

Beyond its origins as a notion of fairness, multicalibration has proved surprisingly versatile and powerful in diverse contexts.
The notion of multicalibration makes no mention of loss minimization, yet the work of \cite{omni} shows that multicalibrated predictors implicitly obtain optimal loss, simultaneously for all Lipschitz, convex losses.
Specifically, given any (fixed) $\mC$-multicalibrated predictor $f$, for every such loss $\ell$, the predictor $f$ guarantees loss competitive with the hypothesis $c_\ell \in \mC$ chosen to the minimize the loss over $\mC$ (in fact, over linear combinations of hypotheses in $\mC$).
This property leads to an {\em omniprediction} guarantee, where one can learn a single predictor $f$, without knowledge of the choice of loss at the time of learning.

In another direction, \cite{OI} demonstrate that the multicalibration framework is equivalent to a certain pseudorandomness condition, which they call \emph{outcome indistinguishability}.
Intuitively, a predictor $f$ is outcome indistinguishable to a family of distinguisher algorithms $\mathcal{A}$ if given a sample $(\x,\y)$ no distinguisher $A \in \mathcal{A}$ can tell whether $\y$ was sampled from Nature's true conditional distribution of $\y\vert \x$, or according to the predicted probability $f(\x)$.
This indistinguishability perspective has seen application in characterizing the feasibility of multi-group strengthenings of agnostic PAC learnability \cite{RothblumYona21}.
Multicalibration has also been extended to diverse settings of uncertainty quantification for real-valued outcomes \cite{Jung20}, importance weights \cite{gopalan2021multicalibrated}, online prediction \cite{gupta2021online}, and adaptation under covariate shift \cite{kim2022universal}.

It is perhaps not a surprise that the power of  multicalibration comes at a cost---both in terms of samples and computation.
Computationally, \cite{hkrr2018} show that a \emph{weak agnostic learner} for the class $\mC$ is necessary and sufficient to learn $\mC$-multicalibrated predictors.
Using the weak learner, they design a boosting-style algorithm that produces a multicalibrated predictor by combining hypotheses $c \in \mC$ using nontrivial Boolean logic.\footnote{In fact, multicalibration has been shown to be tightly connected to the boosting-by-branching-programs framework of \cite{MansourM2002,KalaiMV08}. See \cite{omni} for a discussion.}
The number of calls to the weak learner and the sample complexity are governed by an approximation parameter $\alpha$ which quantifies the {\em deviation} from perfect multicalibration.
The sample complexity depends inverse polynomially in the parameter $\alpha$, with fairly large exponent.
This dependence becomes even worse when we generalize multicalibration to the multi-class setting with $l > 2$ class labels, where a single prediction is a vector of probabilities in $l$ dimensions. Reasoning about expectations conditioned on the predictions leads to complexity that scales as $1/\alpha^{\Omega(\ell)}$, exponential in the number of class labels.
Consequently, achieving multicalibration is practically infeasible with more than a few classes.

In the work defining multicalibration, \cite{hkrr2018} also introduced a weaker fairness notion known as \emph{multiaccuracy}, which only requires that $f(\x)$ and $\y$ have similar expectations over subpopulations in $\mC$, without conditioning on the predicted values.
\begin{gather*}
    \E\lr{c(\x) \cdot (\y - f(\x))} \approx 0
\end{gather*}
The simpler notion of multiaccuracy is comparatively easier to obtain, quantitatively and qualitatively. One can view $\mC$-multiaccuracy as a first-order optimality condition on $\lambda \in \R^{\card{\mC}}$ for predictors of the form $f(x) = \sum_{c \in \mC} \lambda_c \cdot c(x)$.
Thus, while multiaccuracy also requires a weak learning oracle, one can learn a multiaccurate predictor simply by  minimizing the squared or logistic loss over linear combinations of $c \in \mC$ (without specialized boolean logic), using techniques like coordinate ascent or gradient boosting.
Further, standard concentration inequalities demonstrate that the sample complexity to obtain multiaccuracy scales as $\alpha^{-2}$, even as the number of classes $l$ grows.

In exchange for its efficiency, multiaccuracy is known to provide considerably weaker guarantees than multicalibration.
Many of the most desirable fairness and accuracy properties that can be derived from multicalibration cannot be derived from multiaccuracy alone, including its guarantees for loss minimization \cite{hkrr2018,RothblumYona21,omni}, the fairness properties of the ranking induced by predictions \cite{dwork2019rankings}, and multi-group confidence intervals \cite{Jung20}.
This tension---between guarantees and efficiency---brings us to the motivating question behind our work:
\begin{center}
    \emph{Are there notions that retain important properties of multicalibration,
    \\but are computable much more efficiently (comparable to multiaccuracy)?}
\end{center}

\section{Overview of Contributions}
\label{sec:overview}

We introduce a hierarchy of  multicalibration notions that enable a tradeoff between the strength of multi-group guarantees and the complexity required to learn the predictor.
\eat{
In this work, we make progress on answering the motivating question by developing} 
At the extremes, our hierarchy recovers the existing notions of multiaccuracy and multicalibration, but our interest is in the  intermediate notions.
We establish guarantees about these notions, and show that many desirable properties of multicalibration kick in at low levels of the hierarchy. In doing so, we gain new insights into the power of (full) multicalibration. We complement these with algorithmic results showing that computing predictors in the low levels of the hierarchy can be significantly more efficient than the original formulation of multicalibration.
Our three main contributions can be summarized as follows:
\begin{enumerate}[(1)]
\item
Our primary conceptual contribution is the definition of \emph{low-degree multicalibration} and its associated hierarchy.
For $k \in \N$, the $k$th level of the hierarchy defines a notion of multicalibration that constrains the first $k$ moments of the predictor, conditioned on subpopulations in $\mC$. The lowest level of the hierarchy corresponds to multiaccuracy.
As we go higher, the multicalibration constraints become more stringent. In the limit, we approach a notion we call \emph{smooth} multicalibration; a relaxation of the original formulation of \cite{hkrr2018}, which we refer to as \emph{full} multicalibration.
\eat{We demonstrate that as we move up the low-degree hierarchy, the limiting notion recovers the strongest notions of smooth and full multicalibration.}
\item With the hierarchy in place, we study the fairness and accuracy properties obtainable via low-degree multicalibration.
Our main contribution is to provide a rich toolbox for reasoning about multicalibrated predictions $f(x)$, by comparing to the moments of the Bayes optimal predictions $\fs(x) = \E\lr{\y \given \x = x}$.
Our key technical result establishes novel {\em moment sandwiching bounds} for multicalibrated predictors.
For instance, in lieu of $k^{th}$ moment matching (which would give $E[f(\x)^k] \approx \E[f^*(\x)^k]$, but is impossible to achieve for $k >1$), we show that degree-$k$ multicalibration implies that $\E[f(\x)^k] \leq \E[f^*(\x)^k]$, even when conditioned on subpopulations defined by $c \in \mC$. 
Using these tools, we can relate the confusion rates (generalized false error rates) of any degree-$2$ multicalibrated $f$ to those of the optimal predictor.
Our results reveal wide gaps even between multiaccuracy and degree-$2$ multicalibration; predictors satisfying the latter cannot exhibit overconfidence over subpopulations in $\mC$, unlike multiaccurate predictors. 

\item Finally, we show that low-degree multicalibration can provide significant savings over full multicalibration.
In particular, we show that for $l$-class prediction tasks, the sample complexity to obtain low-degree multicalibration is polynomial in $l$,  whereas obtaining the same guarantees using full multicalibration requires sample complexity exponential in $l$. Even in the case of binary prediction, low-degree multicalibration obtains improvements over full multicalibration, by polynomial factors in the approximation parameter $1/\alpha$.
\eat{We show that low-degree multicalibration provides exponential savings in sample complexity for multiclass predictors.  Even for binary predictors, we obtain polynomial savings  that}
These bounds suggest that the low-degree notions may be practically-realizable, providing strong guarantees in settings where the existing notions of multicalibration cannot be achieved.
\end{enumerate}
In all, we develop a more refined picture of multiaccuracy, multicalibration, and the guarantees that lie in between.
Our results establish that---in addition to the calibration class $\mC$ and approximation parameter $\alpha$---the degree of multicalibration is a meaningful ``knob'' that can be tuned to the needs and constraints of a given setting.
Low-degree multicalibration provides a new perspective and set of techniques that we anticipate will be useful to practitioners and theoreticians alike.

\paragraph{Organization of manuscript.}
The remainder of the manuscript is structured as follows.

First, we continue this section with a high-level overview of our contributions, focusing on the binary prediction setting.
In Section~\ref{overview:motivation}, we begin with an intuitive explanation of how one might discover low-degree multicalibration.
Then, in Section~\ref{overview:hierarchy}, we present the definitions of the new variants of multicalibration and their relation to one another; in Section~\ref{overview:moments}, we present the novel moment sandwiching bounds for multicalibrated predictors; and in Section~\ref{overview:samples}, we present the bounds on the complexity of achieving each variant of multicalibration.
We conclude the overview with Section~\ref{sec:discussion}, where we provide further discussion of low-degree multicalibration and how it relates to other works on multi-group fairness, agnostic learning, and calibration.

The technical portions of the manuscript follow the structure of the overview.
In Section~\ref{sec:high-def}, we give formal definitions of the notions within the low-degree multicalibration hierarchy, handling the multi-class setting.
Then, we establish Proposition~\ref{prop:inclusions} and other the relationships and robustness properties of notions in the hierarchy.
In Section~\ref{sec:properties}, we establish our main technical result, Theorem~\ref{thm:informal:main}, along with other key fairness properties of low-degree multicalibration.
In Section~\ref{sec:alg}, we describe Algorithm~\ref{alg:wmc} and analyze the sample complexity as in Theorem~\ref{thm:samples:informal}.
In Section~\ref{sec:experiments}, we highlight a proof-of-concept experimental evaluation of low-degree multicalibration.

\subsection{Towards Multi-Group Moment Matching}
\label{overview:motivation}
In this motivating vignette, we focus on the setting of binary prediction:  we are given samples from a distribution $\mD$ on $\X \times \Y$, for domain $\X$ and label space $\Y = \zo$.
We use $f:\X \to [0,1]$ to denote our hypothesis, and use $\fs:\X \to [0,1]$ to denote the Bayes optimal predictior, defined as
$\fs(x) = \E\lr{\y \given \x = x}$,
the true expected outcome over $\mD$ of $\y$ given $\x = x$.\footnote{We use boldface for random variables.  All expectations are taken over $\mD$.}
Ideally, our learned hypothesis $f$ should be a close approximation to $f^*$.
Every variant of multicalibration is parameterized by a hypothesis class $\mC \subseteq \set{c:\X \to \set{0,1}}$.
We think of $\mC$ as an expressive but bounded class, where the functions $c \in \mC$ have a simple representation; for instance, we may assume that the VC-dimension of $\mC$ is finite.

To begin, we describe the intuition for multiaccuracy and how one might strengthen it, without appealing to full multicalibration.
Towards the goal of approximating $\fs$, multiaccuracy imposes a first-order condition of matching expectations over $c \in \mC$.
\begin{gather*}
\E\lr{c(\x) \cdot (\y - f(\x))} \approx 0 \implies \E\lr{c(\x) \cdot \fs(x)} \approx \E\lr{c(\x) \cdot f(\x)}
\end{gather*}
Naturally, the next step might be to try and match second moments, and require that $\E[c(\x) \cdot f(\x)^2] \approx \E[c(\x) \cdot f^*(\x)^2]$.
This ask, however, is information-theoretically infeasible.
In our setting, we only get to see samples $(\x, \y) \sim \mD$ for discrete $\y \sim \mathrm{Ber}(\fs(\x))$ and do not have access to the values $f^*(\x)$ themselves.
Initially, in such a setting, it seems impossible to say anything meaningful about higher-order moments of $\fs(\x)$.

Short of second-moment matching, degree-$2$ multicalibration imposes the following constraint for all $c \in \mC$.
\begin{gather*}
\E\lr{c(\x) \cdot f(\x) (\y - f(\x))} \approx 0 \implies \E\lr{c(\x) \cdot f(\x)\fs(\x)} \approx \E\lr{c(\x) \cdot f(\x)^2}
\end{gather*}
As a sanity check, observe that this is a {\em valid} constraint, since $f =f^*$ does satisfy it. As in multiaccuracy, the condition can be audited using only access to samples $(\x,\y)$ and the predictions $f(\x)$, without knowledge of $\fs(\x)$. But what do we gain from adding this constraint?
As we will see, this ``degree-$2$'' constraint turns out to be surprisingly powerful.
Short of second moment matching, we prove that the degree-$2$ condition (approximately) implies the following second moment inequality:
\begin{gather*}
\E\lr{c(\x) \cdot f(\x)^2} \le \E\lr{c(\x) \cdot \fs(\x)^2}
\end{gather*}
It is unclear that this inequality by itself can be audited from samples alone, yet it is implied by the conditions of degree-$2$ multicalibration, which are indeed auditable.
Intuitively, the inequality says that---unlike multiaccuracy---this degree-$2$ variant of multicalibration prevents overconfident predictions conditioned on $c \in \mC$. It gives us an avenue to reason about $f(x)$ by comparing it to $\fs(x)$, using its first two moments.

Concretely, the set of degree-$2$ constraints is sufficient to give nontrivial guarantees on the agnostic learning properties of multicalibrated predictors.
A simple calculation\footnote{Included in Appendix~\ref{app:lossmin}} shows that if $f$ satisfies degree-$2$ multicalibration over the class $\C$, then $f$ achieves squared error comparable to the minimum over functions $c \in \C$.
\begin{gather*}
    \E\lr{(f(\x) - \fs(\x))^2} \le \min_{c \in \C} \E\lr{(c(\x) - \fs(\x))^2}
\end{gather*}
This loss minimization guarantee is similar to the ``omnipredictor'' loss minimization guarantees from full multicalibration, recently established in \cite{omni}.
In effect, for a more restricted class of convex losses like the squared error, omniprediction is actually a low-degree property.

The multicalibration hierarchy arises by lifting this intuition to higher degree polynomials: at the $k$th level, we obtain guarantees on the $k$th moments.
Indeed, we show that a number of meaningful guarantees that hold for low-degree  multicalibration, but not for multiaccuracy.
Our results are in direct analogy with classic results in pseudorandomness, where $k$-wise---even pairwise---independence is known to be surprisingly powerful, in contrast to $1$-wise independence \cite{LW06}. As with pseudorandomness, we are able to prove new guarantees for multicalbration in its full generality, by showing that they hold even for low-degree multicalibration.

\subsection{A Hierarchy of Multicalibration}
\label{overview:hierarchy}
With the motivation in place, we are ready to define the multicalibration hierarchy.
In this overview, we present the notions focusing on binary predictors.
In Section~\ref{sec:high-def}, we extend these definitions to the multi-class setting where outcomes are categorical random variables with $l \ge 2$ labels.
As with previous notions, our variants of multicalibration are parametrized by a hypothesis class $\mC$ and an approximation parameter $\alpha > 0$.
A new ingredient of our definitions will be a family $\mW\subseteq \set{w:[0,1] \rgta [0,1]}$ of {\em weight} functions that we will compose with the predictions of our model, which gives rise to a generic weighted version of multicalibration.
\begin{definition*}
Given a hypothesis class $\mC$ and a weight class $\mW$, we say that the predictor $f: \X \rgta [0,1]$ is $(\mC,\mW, \alpha)$-multicalibrated if for every $c \in \mC$ and $w \in \mW$ it holds that
    \begin{equation} 
\card{\E_\mD\lr{c(\x) \cdot w(f(\x))(\y - f(\x))}} \le \alpha.
    \end{equation}
\end{definition*}
The primary conceptual contribution of this work is to identify choices of weight classes $\mW$ that give rise to novel meaningful notions of multicalibration.
We consider four notions, which fixing a hypothesis class $\mC$, give increasingly stronger guarantees.
\begin{enumerate}[(1)]
\item {\bf Multiaccuracy.} Taking $\mW =\{\mathrm{1}\}$ to consist of only the constant function $w(z) = 1$ for all $z$, we recover {\em multiaccuracy}.\footnote{We adopt the formulation of multiaccuracy initially defined in \cite{kgz}.} Let $\macc(\alpha)$ denote the set of $(\mC,\alpha)$-multiaccurate predictors.
\item {\bf Low-degree multicalibration.}
We define a hierarchy of variants of low-degree multicalibration, by taking $\mW$ to be families of low-degree polynomials.
Formally, \emph{degree-$k$} multicalibration uses weight functions defined by sparse degree-$(k-1)$ polynomials.
We adopt this convention because using degree-$(k-1)$ polynomials as weight functions allow us to reason about the $k$th moments of our predictions.
In the case of binary prediction, it suffices to take $\mW_k =\{t^j\}_{j =0}^{k-1}$ to be the monomial basis.
Let $\mcal_k(\alpha)$ denote the set of $(\mC,\alpha)$-degree-$k$ multicalibrated predictors.
\item {\bf Smooth multicalibration.} Beyond the hierarchy, we consider multicalibration using the family all $1$-Lipshcitz functions as our weight class.
We refer to the resulting notion
as $(\mC,\alpha)$-\emph{smooth} multicalibration.
Smooth multicalibration directly extends the notion of smooth calibration \cite{kakadeF08, FosterH18}, introduced to address issues of robustness in defining calibration.
Let $\mcals(\alpha)$ denote the set of $\alpha$-smooth multicalibrated predictors.
\item {\bf Full multicalibration.}
Instead of explicitly conditioning on the predicted values, we define full multicalibration using indicator functions on the prediction intervals as our weight class.
In binary prediction,\footnote{For multi-class prediction, we use an $l$-dimensional analog of the interval basis.} we define the  $\delta$-interval basis to be $\mI_\delta = \{\ind{[(j-1)\delta,j\delta)} : j \in \lceil 1/\delta \rceil\}$ where $\ind{[a,b)}$ indicates membership in the interval $[a,b)$.
We refer to $(\mC, \mI_\delta, \alpha)$-multicalibration as $(\mC,\alpha, \delta)$-{\em full} multicalibration to distinguish it from smooth multicalibration, and use $\mcalreg_\delta(\alpha)$ to denote the set of predictors satisfying it.
\end{enumerate}
With the variants in place, our first results establish the relationship between the multicalibration notions for a fixed class $\mC$.
\begin{minorresult}
\label{prop:inclusions}
Fix a hypothesis class $\mC$ and $\alpha \ge 0$.
By construction, multiaccuracy and degree-$1$ multicalibration are equivalent.
For every $k \ge 1$, increasing the degree leads to a more restrictive notion.
In other words, the hierarchy satisfies the following inclusions.
\begin{gather*}
    \macc(\alpha) = \mcal_1(\alpha) \supseteq \mcal_2(\alpha)  \cdots \supseteq \mcal_k(\alpha)
\end{gather*}
Further, low-degree multicalibration is a relaxation of smooth multicalibration, which is a relaxation of full multicalibration.
That is, for any $k \ge 1$ and $\alpha \ge \delta \ge 0$,
\begin{gather*}
    \mcal_k(k\alpha) \supseteq \mcals(\alpha) \supseteq \mcalreg_{\delta}(\alpha\delta - \delta^2)
\end{gather*}
\end{minorresult}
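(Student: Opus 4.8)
The chain $\macc(\alpha)=\mcal_1(\alpha)\supseteq\mcal_2(\alpha)\supseteq\cdots\supseteq\mcal_k(\alpha)$ is immediate from the definitions: multiaccuracy uses the singleton weight class $\{1\}$, which is exactly $\mW_1=\{t^0\}$, and the monomial bases nest, $\mW_j=\{t^i\}_{i=0}^{j-1}\subseteq\{t^i\}_{i=0}^{j}=\mW_{j+1}$. Since $(\mC,\mW',\alpha)$-multicalibration requires the defining inequality for \emph{every} $w\in\mW'$, enlarging the weight class can only shrink the set of predictors that satisfy it. Thus all the content lies in the two containments linking the low-degree hierarchy to smooth and to full multicalibration, and for each I would take a single weight function in the ``smaller'' class and reduce its constraint to constraints already guaranteed by the ``larger'' class.

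For $\mcal_k(k\alpha)\supseteq\mcals(\alpha)$, fix $f\in\mcals(\alpha)$, a hypothesis $c\in\mC$, and a monomial weight $w(t)=t^j$ with $0\le j\le k-1$. When $j=0$ this is the $\alpha$-smooth constraint for the constant ($0$-Lipschitz) weight, giving bound $\alpha\le k\alpha$. When $j\ge 1$, the map $t\mapsto t^j/j$ has derivative $t^{j-1}\le 1$ on $\izo$, hence is $1$-Lipschitz, and it sends $\izo$ into $[0,1/j]\subseteq\izo$; so it lies in the smooth weight class. Applying $\alpha$-smooth multicalibration to it and multiplying by $j$ gives $\card{\E\lr{c(\x)\cdot f(\x)^{j}\,(\y-f(\x))}}\le j\alpha\le(k-1)\alpha\le k\alpha$. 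Maximizing over $j$ yields $f\in\mcal_k(k\alpha)$.

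For $\mcals(\alpha)\supseteq\mcalreg_\delta(\alpha\delta-\delta^2)$, fix $f\in\mcalreg_\delta(\alpha\delta-\delta^2)$, a hypothesis $c\in\mC$, and an arbitrary $1$-Lipschitz $w:\izo\to\izo$. The plan is to approximate $w$ by the step function $\bar w=\sum_{j}w\big((j-1)\delta\big)\cdot\ind{[(j-1)\delta,\,j\delta)}$ that is constant on each basis interval of $\mI_\delta$ (closing the last interval on the right so the indicators partition $\izo$), and to split $\E\lr{c(\x)\,w(f(\x))\,(\y-f(\x))}$ as a ``main'' term with $\bar w$ in place of $w$ plus an ``error'' term $\E\lr{c(\x)\,(w(f(\x))-\bar w(f(\x)))\,(\y-f(\x))}$. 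Because $w$ is $1$-Lipschitz, the error integrand has magnitude at most $1\cdot\delta\cdot 1$, so the error term is at most $\delta$. The main term expands by linearity into $\sum_{j}w((j-1)\delta)\cdot\E\lr{c(\x)\,\ind{[(j-1)\delta,j\delta)}(f(\x))\,(\y-f(\x))}$; each summand is at most $1\cdot(\alpha\delta-\delta^2)$ using $w((j-1)\delta)\in\izo$ and the interval-basis constraint of full multicalibration, and there are $1/\delta$ of them (replace $1/\delta$ by $\lceil 1/\delta\rceil$ and adjust constants if $1/\delta\notin\N$), so the main term is at most $\tfrac{1}{\delta}(\alpha\delta-\delta^2)=\alpha-\delta$. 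The triangle inequality then gives $\card{\E\lr{c(\x)\,w(f(\x))\,(\y-f(\x))}}\le(\alpha-\delta)+\delta=\alpha$; note the hypothesis $\alpha\ge\delta$ is exactly what makes the parameter $\alpha\delta-\delta^2$ nonnegative. This last containment is the only step requiring any care, and the care is purely bookkeeping: choosing a representative point in each interval, balancing the Lipschitz slack $\delta$ against the per-interval budget $\alpha\delta-\delta^2$, and handling the edge cases where $1/\delta$ is not an integer or $f(\x)=1$. I would present the three parts in the order above, writing out the step-function argument in full and the other two briefly.
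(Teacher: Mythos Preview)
Your proof is correct and follows essentially the same approach as the paper. For $\mcal_k(k\alpha)\supseteq\mcals(\alpha)$ the paper also argues that the degree-$k$ weight functions are $(k-1)$-Lipschitz (your rescaling $t^j/j$ is the binary instance of this) and then scales the smooth-multicalibration bound; for $\mcals(\alpha)\supseteq\mcalreg_\delta(\alpha\delta-\delta^2)$ the paper packages the same step-function argument through a reusable ``$(\eta,L)$-basis'' lemma (approximate any $1$-Lipschitz $w$ to $\ell_\infty$ error $\eta$ by a combination of indicators with coefficient sum $\le L$, yielding error $\beta L+2\eta$), taking midpoints so that $\eta=\delta/2$, whereas you take left endpoints with $\eta=\delta$ but exploit the binary bound $|\y-f(\x)|\le 1$ instead of the multi-class $\|\y-f(\x)\|_1\le 2$---both routes land on exactly $\alpha$.
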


\eat{
We also investigate the properties of multicalibration, parameterized by a weight class $\mW$, providing further justification for its utility.
\begin{itemize}
    \item First, [robustness to choice of weight functions]. Leads to finite approximation by basis.
    \item S
\end{itemize}
}

\cite{kakadeF08,FosterH18} were motivated to introduce smooth calibration  in order to address the lack of robustness in the classical notion of calibration.
We show that smooth multicalibration has similar robustness guarantees; for instance, predictors that are close to smoothly multicalibrated functions are also smoothly multicalibrated (with modest degradation in the approximation parameter).
\eat{
These papers discuss why smooth calibration is a more robust notion than regular calibration.
We prove analogous robustness results in the multicalibration setting.
We show that both low-degree and smooth multicalibration are robust to small (in $\ell_1$) perturbations of the predictor.
This is in contrast to regular multicalibration, where even a small perturbation can make a multi-calibrated predictor far from calibrated.}
We also show robustness to small (in $\ell_\infty$) perturbations to the weight function. This property is important algorithmically, since it allows us to infer smooth multicalibration (defined over all $1$-Lipshcitz weight functions) from a small basis of functions that can uniformly approximate every 1-Lipschitz function in $\ell_\infty$.

\subsection{Fairness from Low-Degree Multicalibration}
\label{overview:moments}

We now return to our motivating question and ask: {\em does the hierarchy give strong fairness guarantees at low levels?}
Our main technical contribution is a toolbox for reasoning about properties of predictors at low levels of the hierarchy, using their first few moments. Using this, we establish that the gurarantees of multicalibration for important measures of fairness, like the false error rates
indeed manifest at low levels of the hierarchy.
\eat{
In particular, we establish \emph{moment sandwiching bounds} for multicalibrated predictors that are useful for reasoning about 
Specifically, we show that the $k$th moments of a $(\mC,\alpha)$-degree-$k$ multicalibrated $f$, evaluated over $c \in \mC$, are ``sandwiched'' between quantities defined in terms of $\fs$.}
The hammer in this toolbox is the following  \emph{moment sandwiching bound} stated below for binary predictors with $\alpha =0$.

\begin{result}[Informal]
\label{thm:informal:main}
Suppose that $f:\X \to [0,1]$ is a $(\mC,0)$-degree-$k$ multicalibrated predictor.
Then,
for every degree $d \le k$ and every $c \in \mC$,
\begin{gather*}
\E[c(\x) \fs(\x)^d] \underset{(a)}{\ge} \E[c(\x) f(\x)^d] \underset{(b)}{=} \E[c(\x) f(\x)^{d-1}\fs(\x)] \underset{(c)}{\ge} \E[c(\x) f(\x)^{d-1}] \cdot \E[c(\x)\fs(\x)]
\end{gather*}
\end{result}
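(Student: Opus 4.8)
The plan is to reduce all three inequalities to a single family of identities obtained by instantiating degree-$k$ multicalibration with the monomial weights. Since $w(t)=t^{j}\in\mW_k$ for every $0\le j\le k-1$, the $(\mC,0)$-degree-$k$ condition says $\E[c(\x)f(\x)^{j}(\y-f(\x))]=0$; taking $\E[\cdot\mid\x]$ inside (which replaces $\y$ by $\fs(\x)=\E[\y\mid\x]$, as $c(\x),f(\x)$ are $\x$-measurable) yields
\begin{equation*}
\E[c(\x)f(\x)^{j}\fs(\x)]=\E[c(\x)f(\x)^{j+1}]\qquad\text{for all }0\le j\le k-1.\tag{$\star_j$}
\end{equation*}
Because $d\le k$, the identities $(\star_0),\dots,(\star_{d-1})$ are all available, and these are the only facts about multicalibration I would use. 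Equality $(b)$ is then exactly $(\star_{d-1})$.

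For the upper bound $(a)$, I would invoke the first-order (tangent-line) convexity inequality for $\phi(t)=t^{d}$ on $[0,1]$: pointwise, $\fs(\x)^{d}\ge f(\x)^{d}+d\,f(\x)^{d-1}\big(\fs(\x)-f(\x)\big)$. Multiplying by $c(\x)\ge 0$ and taking expectations gives
\begin{equation*}
\E[c(\x)\fs(\x)^{d}]\ \ge\ \E[c(\x)f(\x)^{d}]+d\Big(\E[c(\x)f(\x)^{d-1}\fs(\x)]-\E[c(\x)f(\x)^{d}]\Big),
\end{equation*}
and the bracketed difference is $0$ by $(\star_{d-1})$; hence $\E[c(\x)\fs(\x)^{d}]\ge\E[c(\x)f(\x)^{d}]$. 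Equivalently, the slack equals $\E\big[c(\x)\,B_\phi(\fs(\x),f(\x))\big]$, the $c$-weighted Bregman divergence of $\phi$, which is manifestly nonnegative; this is the crux of the argument.

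For the lower bound $(c)$, I would first use $(b)$ to rewrite the left side as $\E[c(\x)f(\x)^{d}]$ and $(\star_0)$ to rewrite $\E[c(\x)\fs(\x)]$ as $\E[c(\x)f(\x)]$, so the claim becomes $\E[c(\x)f(\x)^{d}]\ge\E[c(\x)f(\x)^{d-1}]\cdot\E[c(\x)f(\x)]$. If $\E[c(\x)]=0$ both sides vanish; otherwise set $p=\E[c(\x)]\in(0,1]$ and let $\nu$ be $\mD$ conditioned on $c(\x)=1$, so $\E[c(\x)g(\x)]=p\,\E_\nu[g(\x)]$ for any $g$. Since $t\mapsto t^{d-1}$ and $t\mapsto t$ are both nondecreasing on $[0,1]$, the variables $f(\x)^{d-1}$ and $f(\x)$ are comonotone under $\nu$, so Chebyshev's sum (correlation) inequality gives $\E_\nu[f(\x)^{d}]\ge\E_\nu[f(\x)^{d-1}]\,\E_\nu[f(\x)]$. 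Multiplying by $p$, and using $p\le 1$ together with nonnegativity of the $\nu$-moments of $f$, yields $\E[c(\x)f(\x)^{d}]=p\,\E_\nu[f(\x)^{d}]\ge p^{2}\,\E_\nu[f(\x)^{d-1}]\,\E_\nu[f(\x)]=\E[c(\x)f(\x)^{d-1}]\,\E[c(\x)f(\x)]$, as needed.

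As for the main obstacle: at this (binary, $\alpha=0$) level there is essentially no hard computation — the content is the observation in $(a)$ that the seemingly impossible moment comparison $\E[c f^{d}]\le\E[c\fs^{d}]$ drops out of the tangent-line bound, which is precisely what converts the one auditable constraint $(\star_{d-1})$ into a non-auditable conclusion. I expect the genuine work to appear only when pushing this skeleton to $\alpha>0$ and to the multiclass setting: there the monomials are replaced by the appropriate multivariate weight basis, convexity must be applied coordinatewise, and one must carry the $\alpha$-slack through both the convexity step and the correlation step without incurring a blow-up — but the three displays above are the core of the argument.
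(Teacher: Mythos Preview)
Your proof is correct, but both inequalities $(a)$ and $(c)$ are obtained by genuinely different tools than in the paper. For $(a)$ the paper applies H\"older's inequality, $\E_{\mD_c}[f^{d-1}\fs]\le \E_{\mD_c}[f^{d}]^{(d-1)/d}\E_{\mD_c}[\fs^{d}]^{1/d}$, combines it with $(\star_{d-1})$, and then divides, raises to the $d$th power, and invokes $(1-\eta)^{d}\ge 1-d\eta$; your tangent-line (Bregman) argument replaces this entire manipulation by a single line and, notably, already yields the correct $\alpha>0$ slack $d\alpha$ without further work, since the bracketed difference is bounded by $\alpha$ directly. For $(c)$ the paper uses the power-mean inequalities $\E_{\mD_c}[f]\le\E_{\mD_c}[f^{d}]^{1/d}$ and $\E_{\mD_c}[f^{d-1}]\le\E_{\mD_c}[f^{d}]^{(d-1)/d}$ and multiplies, whereas you use Chebyshev's correlation inequality for the comonotone pair $(f^{d-1},f)$ under the $c$-conditioned measure; both routes give the same conditional bound $\E_{\mD_c}[f^{d}]\ge\E_{\mD_c}[f^{d-1}]\E_{\mD_c}[f]$ before the final $p\le 1$ step needed to match the unnormalized informal statement. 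Your approach is at least as elementary and arguably cleaner for the $\alpha=0$ binary case; the paper's H\"older/power-mean route is perhaps more natural to phrase uniformly in the multiclass $\ell_d$ setting, but your convexity argument generalizes coordinatewise there as well.
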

Our bounds are inspired by the sandwiching bounds for importance weights from multicalibrated partitions proved in \cite{gopalan2021multicalibrated}.
While we prove these bounds assuming only degree-$k$ multicalibration, no analogous statements were known to hold, even for the original (stronger) notion of multicalibration.  
\eat{
We reiterate that the upper bound in (a) seems hard to audit by itself, but it is implied by the auditable constraints of low-degree multicalibration.

An interesting facet of this sandwiching bound on the $k$th moment is that the upper bound of $\E[f^*(\x)^k]$ from $(*)$ is hard to verify.
Because we only have access to labels $\y \in \{0,1\}$, rather than values of the optimal predictor $f^*(\x) \in [0,1]$, it is impossible to estimate $f^*(x)^k$ for $k > 1$.
}

The utility of this sandwiching bound can be seen by thinking about each inequality separately.
First, note that $(b)$ follows immediately by the definition of degree-$k$ multicalibration.
We dig into $(a)$ and $(c)$ separately.
The upper bound in $(a)$ says that first $k$ moments of the multicalibrated predictor $f$ are dominated by those of the ground truth predictor $f^*$, conditioned on any $c \in \mC$.
Given that the exact $k$th moment of $\fs$ is inaccessible through samples, this inequality is useful in lieu of exact moment matching.

Specifically, while we can't perfectly match moments for $d \geq 2$, this inequality implies that the predictions of $f$ cannot be overconfident.
For example, for a degree-$2$ multicalibrated $f$, the inequality implies that conditioned on any $c \in \mC$, the variance of $f$ is no more than the variance of  $f^*$, and their expectations match.
In other words, any variation in the  predictions of $f$ over $c \in \mC$ can be attributed to true variation in the distribution of $f^*(\x)|c(\x) = 1$ .
Concretely, if all points with $c(\x) = 1$ were identical under $f^*$, then $f$ cannot treat them differently!
This stands in stark constrast with multiaccuracy, where many of the ``failure modes'' of multiaccuracy exploit this weakness \cite{dwork2019rankings,OI}.

The lower bound in $(c)$ can be interpreted as saying that $f^{k-1}(\x)$ is positively correlated with the label $\y$, by expressing the difference in expectations as the covariance between $f^{k-1}(\x)$ and $\y$.
\begin{gather*}
\Cov[f(\x)^{k-1}, \y] = \E[f(\x)^{k-1}\fs(\x)] - \E[f(\x)^{k-1}]\E[\fs(\x)] \ge 0
\end{gather*}
In this light, the bounds from degree-$2$ multicalibration can be summarized succinctly:
the covariance between $f(\x)$ and the ground truth labels $\y$ within any $c \in \mC$ is always positive, but never more than the covariance between the optimal $f^*(\x)$ and $\y$ in $c$.
Again, neither claim necessarily holds under multiaccuracy.

As a concrete application, we show how sandwiching bounds allow us to reason  about the true and false error rates of $f$, which have been intensively studied in algorithmic fairness \cite{hardt2016equality,KleinbergMR17}.
In particular, the bounds extend even when we condition on the value of the true label.
For instance, conditioning on $\y = 1$, we can bound the generalized true positive rate of $f$ compared to that of $\fs$, even conditioned on subpopulations.\footnote{An analogous statement can be made about the true negative rates.}
\begin{gather*}
\E\lr{c(\x)f(\x)} \le \E\lr{c(\x)f(\x) \given \y = 1} \le \E\lr{c(\x)\fs(\x) \given \y = 1}
\end{gather*}
This bound crystallizes the inutition that multicalibrated predictors prevent overconfidence.
Even though $f$ has positive correlation with $\y$ over $c$, the predictions don't assign excessively high probabilities to points which are more likely to be $1$. 
Such overconfidence is a well-known issue in large neural networks~\cite{guo2017calibration}. Whereas full multicalibration addresses overconfidence at a  fine-grained level, conditioning on every value of the prediction, even degree-$2$ multicalibration gives a qualitatively similar guarantee.
\eat{
These bounds reiterate the idea that multicalibration post-processing may be an effective way to recalibrated predictions \cite{kgz}.
}

\subsection{The Pragmatic Appeal of Low-Degree Multicalibration}
\label{overview:samples}

We establish the feasibility of low-degree and smooth multicalibration by giving an efficient learning algorithm. More generally, following the boosting-style approach of \cite{hkrr2018}, we show that a weak agnostic learner for $\mC$ \cite{SBD2,kk09} suffices to obtain $(\mC,\mW,\alpha)$-multicalibration for any finite weight class $\mW$.
This procedure, described as Algorithm~\ref{alg:wmc}, makes a number of calls to the weak learner bounded polynomially in $\card{\mW}$ and $1/\alpha$.
The algorithm immediately demonstrates the feasibility of degree-$k$ multicalibration for any fixed $k \in \N$. For smooth multicalibration, where the definition involves an infinite family $\mW$, our algorithm uses constructions of finite bases that uniformly approximate the family of $1$-Lipschitz functions.

We now delve further into the sample efficiency of each notion.
In principle, the relaxed notion of low-degree multicalibration is implied by smooth or full multicalibration. However, there is a significant degradation in the accuracy parameter $\alpha$, which translates to a blowup in computational and sample complexity. 
\eat{

This argument, however, breaks down when we consider how the concrete approximation guarantee degrades when deriving the low-degree condition from the stronger notions of calibration.}
Concretely, suppose we train a predictor $f$ to satisfy $(\C,\alpha_0,\delta)$-full multicalibration; in order to guarantee that $f$ satisfies $(\mC,\alpha)$-degree-$k$ multicalibration, we need to take $\alpha_0 \ll \alpha$ much smaller than if we train for degree-$k$ multicalibration directly. For a fair comparison, we compare the sample complexity needed by each variant to obtain the same guarantee: $ (\mC, \alpha)$ degree-$k$ multicalibration.
We make the assumption that $\mC$ has a sample-optimal weak agnostic learner in terms of the dependence on $\alpha_0$.\footnote{Suboptimal dependence of the learner on $\alpha_0$ will \emph{increase} the gaps in sample complexity between low-degree and the stronger variants of multicalibration.}
We state the theorem informally, using shorthand $m \sim B$ to denote $m \le \tilde{O}(B)$ (see Section~\ref{sec:alg}, Theorem~\ref{thm:samples} for a formal statement).

\begin{result}[Informal]
\label{thm:samples:informal}
For a $\mC$ be a hypothesis class, let $\VC(\mC)$ denote its VC-dimension.
For any $\alpha > 0$ and $k \in \N$, the sample complexity to obtain a predictor $f \in \MC_k(\alpha)$ (with constant failure probability) is bounded as $m_k$ using degree-$k$ multicalibration, $m_s$ using smooth multicalibration, and $m_i$ using full multicalibration, for $m_k,m_s,m_i$ bounded as follows.
    \begin{align*}
m_k \sim \frac{l \cdot \left(\VC(\mC) + k \right)}{\alpha^4}&&
m_s \sim \frac{k^4 l \cdot \VC(\mC)}{\alpha^4} + \frac{k^{l+3}l^l}{\alpha^{l+3}}&&
m_i \sim \frac{(2kl)^{4(l+1)} \cdot \VC(\mC)}{\alpha^{4(l+1)}}
    \end{align*}
\end{result}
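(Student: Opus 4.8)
\textbf{Proof proposal for Theorem~\ref{thm:samples:informal}.}

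The plan is to analyze Algorithm~\ref{alg:wmc} --- the boosting-style procedure that, given a weak agnostic learner for $\mC$, produces a $(\mC,\mW,\alpha)$-multicalibrated predictor --- and track, for each of the three weight classes $\mW$, (i) the accuracy parameter $\alpha_0$ at which we must run the procedure so that the output lies in $\mcal_k(\alpha)$, (ii) the effective size of the (finite) weight class being audited, and (iii) the per-iteration sample cost coming from uniform convergence over $\mC \times \mW$. Multiplying these together, and using the iteration bound of Algorithm~\ref{alg:wmc} (polynomial in $|\mW|$ and $1/\alpha_0$), yields $m_k$, $m_s$, $m_i$. Throughout, I would invoke the assumed sample-optimal weak agnostic learner for $\mC$, whose sample complexity scales as $\widetilde{O}(\VC(\mC)/\alpha_0^2)$ per call, so the dominant cost per call is $\widetilde{O}(\VC(\mC)/\alpha_0^2)$ and the total over all calls is of order $\widetilde{O}(|\mW|/\alpha_0^2)$ calls $\times\ \widetilde{O}(\VC(\mC)/\alpha_0^2)$ samples, i.e.\ $\widetilde{O}(|\mW| \cdot \VC(\mC)/\alpha_0^4)$, explaining the uniform $\alpha^{-4}$ in all three bounds.

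For the three cases the parameters are as follows. \emph{Degree-$k$:} here $\mW = \mW_k$ is already a finite basis of size $k$ (the monomials $t^0,\dots,t^{k-1}$ in the binary case; in the $l$-class case its analog has size polynomial in $k$ and, after incorporating the $l$ outcome coordinates, contributes an extra factor $l$), and we run the procedure directly at $\alpha_0 = \alpha$, so $m_k \sim l(\VC(\mC)+k)/\alpha^4$. \emph{Smooth:} the weight class is infinite, so I would first fix a finite $\ell_\infty$-net of the $1$-Lipschitz functions --- a basis of size $O(k/\alpha)$ suffices to approximate every $1$-Lipschitz function to error $\alpha$, as used after Proposition~\ref{prop:inclusions} --- and invoke the robustness-to-weight-perturbation property to conclude that multicalibration over the net implies (approximate) smooth multicalibration; by the chain $\mcal_k(k\alpha)\supseteq\mcals(\alpha)$ from Proposition~\ref{prop:inclusions} one must run at $\alpha_0 \sim \alpha/k$ to land in $\mcal_k(\alpha)$. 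The net size $O(k/\alpha)$ combined with the iteration count and sample cost gives the two-term bound $m_s \sim k^4 l\,\VC(\mC)/\alpha^4 + k^{l+3}l^l/\alpha^{l+3}$, the second term arising from the $l$-dimensional product net (size $(k/\alpha)^l$ up to $l$-dependent factors) in the categorical setting. \emph{Full:} here the weight class $\mI_\delta$ has size $\lceil 1/\delta\rceil$ (or $(1/\delta)^l$ in the $l$-class case), and the inclusion $\mcals(\alpha)\supseteq\mcalreg_\delta(\alpha\delta-\delta^2)$ combined with $\mcal_k(k\alpha)\supseteq\mcals(\alpha)$ forces $\delta$ and hence $\alpha_0 \sim \alpha\delta \sim \alpha^2/(kl)$ to be \emph{quadratically} smaller; propagating this through the $\alpha_0^{-4}$ sample cost, with the weight-class size $(1/\delta)^l = (kl/\alpha)^{O(l)}$, produces the exponential-in-$l$ bound $m_i \sim (2kl)^{4(l+1)}\VC(\mC)/\alpha^{4(l+1)}$.

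The routine ingredients --- generalization bounds for auditing $c \cdot w(f(\cdot))\,(\y - f(\cdot))$ uniformly over $c\in\mC$ and a finite $\mW$ (a VC/Rademacher argument, the $w$ and $f$ factors being bounded in $[0,1]$), and the potential/iteration analysis of Algorithm~\ref{alg:wmc} --- I would relegate to the formal proof of Theorem~\ref{thm:samples}. The main obstacle, and the step that actually drives the separation, is carefully tracking how the \emph{reductions} between notions (Proposition~\ref{prop:inclusions}, plus the smoothing-net argument) degrade the accuracy parameter: the key observation is that going from degree-$k$ to smooth costs only a \emph{linear} factor $k$ in $\alpha$, whereas going from smooth to full costs a \emph{multiplicative $\delta$} (hence, after optimizing $\delta \sim \alpha/(kl)$, an effective $\alpha \mapsto \alpha^2$), and it is exactly this quadratic loss, raised to the power $l$ through the interval-net size, that turns a polynomial bound into an exponential one. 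Getting the bookkeeping of these compositions right --- and checking that the $l$-class analogs of the polynomial and interval bases have the claimed sizes --- is where the real care is needed; everything else is standard uniform-convergence and boosting analysis.
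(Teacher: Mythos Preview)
Your high-level plan---run Algorithm~\ref{alg:wmc} at an appropriate $\alpha_0$ for each weight class and then translate via the inclusions of Proposition~\ref{prop:inclusions}---matches the paper, but the accounting is off in a way that breaks the derivation. The sample complexity of Algorithm~\ref{alg:wmc} is \emph{not} $\tilde{O}(|\mW|\cdot\VC(\mC)/\alpha_0^4)$: the potential argument (Lemma~\ref{lem:iters}) bounds the number of update iterations by $l/\alpha_0^2$ \emph{independently of} $|\mW|$, and $|\mW|$ enters only through the union bound on the weak learner's failure probability, i.e.\ only inside a logarithm. The correct generic bound (equation~\eqref{eqn:vc:optimal}) is $m \le O\bigl(l\cdot(\VC(\mC)+\log(l|\mW|/\alpha_0\beta))/\alpha_0^4\bigr)$. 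This is why in $m_k$ the $k$ appears \emph{additively} with $\VC(\mC)$ (since $\log|\mathcal{M}_k|=O(k\log l)$), and why in $m_s$ the second term carries \emph{no} $\VC(\mC)$ factor: it comes from $\log$ of an exponentially large $(\eta,1)$-basis for Lipschitz functions on $\Delta_l$ (Lemma~\ref{lem:smooth:basis:l}), whose log-size is $\tilde{O}((l/\alpha_0)^{l-1})$, not from a polynomial-sized product net appearing multiplicatively. Your formula would give $m_k\sim lk\,\VC(\mC)/\alpha^4$, not $l(\VC(\mC)+k)/\alpha^4$.

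The second gap is in the full-multicalibration reduction for $l>2$. The containment $\mcalreg_\delta(\alpha\delta-\delta^2)\subseteq\mcals(\alpha)$ you invoke is the \emph{binary} statement (Proposition~\ref{prop:int-to-smooth}); for general $l$, Theorem~\ref{thm:int-to-smooth} gives $\mcalreg_\delta(\beta)\subseteq\mcals\bigl(\beta\lceil 1/\delta\rceil^l+l\delta\bigr)$, so to land in $\mcals(\alpha/k)$ one must take $\delta\sim\alpha/(kl)$ and then $\alpha_0=\beta\sim(\alpha/k)\cdot\delta^l\sim(\alpha/kl)^{l+1}$, not $\alpha_0\sim\alpha^2/(kl)$. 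The factor $(2kl/\alpha)^{4(l+1)}$ in $m_i$ comes entirely from $\alpha_0^{-4}$ with this exponentially small $\alpha_0$; your route yields only $\alpha_0\sim\alpha^2$, which produces $\alpha^{-8}$ (indeed the binary bound $m_{i,\mathrm{bin}}$), not $\alpha^{-4(l+1)}$. So the exponential-in-$l$ blowup is driven by the $\lceil 1/\delta\rceil^l$ factor in the full-to-smooth reduction degrading $\alpha_0$, not by the weight-class size entering the sample count multiplicatively.
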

Note the dependence is $\poly(l, 1/\alpha)$ for low-degree multicalibration, but $\Omega(1/\alpha)^{b l}$  for smooth and full multicalibration, with a factor $4$ difference in the constant $b$ between them. This suggests that in the multi-class setting, low-degree multicalibration can lead to exponential savings, as compared to the stronger notions. Even under tighter analyses tailored to the binary prediction setting, we obtain polynomial savings in $1/\alpha$ from  low-degree multicalibration.

Given that many desirable properties of multicalibration start to take effect at small degrees (even degree-$2$), these results suggest a pragmatic win for low-degree multicalibration. In the realistic setting where a learner is given a fixed data set of sample, the learner may achieve stronger fairness and accuracy properties by training for low-degree multicalibration directly, as compared to either multiaccuracy or full multicalibration.

While these sample complexities give asymptotic upper bounds, in Section~\ref{sec:experiments}, we report on a proof-of-concept experiment that explore the findings in a semi-synthetic setup.
By using a semi-synthetic setup, we can access the ground-truth $\fs$ values, thereby evaluating quantities like the gap in moments between $\fs$ and the learned predictors.
We show that, even in a standard binary prediction setting, the sample efficiency gap is not a merely asymptotic phenomenon, but is realized in a setting with a few thousand samples from the data distribution.

\subsection{Discussion and Related Works}
\label{sec:discussion}
Low-degree multicalibration adds to a growing list of works that study multicalibration and related notions.
On a conceptual level, our work is closely related to the idea of outcome indistinguishability (OI), introduced by \cite{OI,dwork2022beyond}.
OI is not a single notion, but rather an extensible framework for reasoning about the guarantees of predictions in the language of computational indistinguishability.
In particular, \cite{OI} also define a hierarchy of notions, based on the way distinguisher algorithms may access the predictions $f(\x)$.
The first two levels of their hierarchy correspond tightly to multiaccuracy and multicalibration; intuitively, multiaccuracy distinguishers do not get to observe $f(\x)$, where as multicalibration distinguishers may depend on $f(\x)$ arbitrarily.
Low-degree multicalibration refines the idea of access to $f(\x)$, where the corresponding distinguishers can functionally depend on $f(\x)$ in restricted ways.

On a technical level,
our sandwiching bounds, which shed light on how multicalibrated predictors control for uncertainty relative to the uncertainty of the optimal predictions, are actually inspired by work on unsupervised distribution learning of \cite{gopalan2021multicalibrated}.
Their work establishes analogous moment sandwiching bounds for density estimates given by so-called \emph{multicalibrated partitions}.
Closely related---but not to be confused with low-degree multicalibration---is the idea of \emph{moment multicalibration} due to \cite{Jung20}.
Moment multicalibration obtains guarantees of uncertainty quantification in the setting of \emph{real-valued} outcomes, by (full) multicalibrating on higher moments of the outcomes.
Importantly, in contrast to our setting, the relevant moments of real-valued outcomes can be estimated to arbitrary precision with enough samples.
Using these estimates, \cite{Jung20} derive Chebyshev-style confidence intervals across subpopulations for their multicalibrated predictions.

Very recently, multicalibration and OI have played a key technical role in obtaining strong guarantees for omniprediction \cite{omni} and multi-group agnostic learning \cite{RothblumYona21}.
Both results are known to follow from multicalibration, but not from multiaccuracy.
In light of our work, it would be interesting to revisit these results to see whether the proofs use the full power of full multicalibration or whether we can recover the guarantees using low-degree techniques.
More broadly, we speculate that the low-degree multicalibration hierarchy suggests a certain \emph{proof system}---akin to the convex programming sum-of-squares hierarchy \cite{barak2014sum}---in which properties derived from low-degree moments of $\fs$ could be derived for low-degree multicalibrated predictors for free.
Exploring this intuition further and how it may connect with OI is a fascinating direction for future research.

Multicalibration was initially developed as a strengthening of calibration to provide meaningful fairness guarantees, not just on the basis of marginally-defined groups, but intersectionally \cite{hkrr2018,kgz}.
The risk of inequity due to miscalibrated predictions has been well-documented \cite{kleinberg2016inherent,pleiss2017fairness,GargKR19}, especially in the setting of medical risk prediction \cite{obermeyer2019dissecting,BardaYRGLBBD21}.
Despite its origins as a complexity-theoretic fairness notion, mutlicalibration has already seen clinical application to address such equity issues \cite{BardaYRGLBBD21} and to develop a COVID-19 risk predictor in the early days of the pandemic \cite{BardaEtal20}.
While individual-level calibration is generally impossible \cite{barber2019limits}, recent works have investigated settings where guarantees at an individual-level are possible, through hedging~\cite{zhao2021right} and randomization~\cite{zhao2020individual}.

Calibration has a rich history in the forecasting literature, as a criterion for uncertainty quantification~\cite{brier1950verification,dawid1984present,FosterV98,kakadeF08,FosterH18}.
For multi-class prediction tasks, the strongest definition is known as distribution calibration~\cite{kull2015novel,song2019distribution}, and is known to require exponentially many samples in the number of labels.
Consequently, practitioners commonly use very weak notions such as confidence calibration~\cite{platt1999probabilistic,guo2017calibration},  class-wise calibration~\cite{kull2019beyond}.
To address this tension, \cite{zhao2021calibrating} recently proposed a notion of \emph{decision calibration}, which ensures that the predictions are calibrated with respect to downstream decisions, avoiding the infeasibility of calibrating to the predictions, but strengthening the marginal guarantees of confidence and class-wise calibration.
Naturally, one could extend the definition of decision calibration to decision multicalibration, with similar motivations to low-degree multicalibration but incomparable guarantees.

\section{Defining the Multicalibration Hierarchy}
\label{sec:high-def}

\paragraph{Notation.} In a generic supervised learning problem, we are given  a distribution $\mD$ on $\X \times \Y$, where $\X$ is the domain and $\Y$ is the set of labels. We consider the multi-class setting where $\Y = [l]$ for $l \geq 2$. We represent each outcome $i \in [l]$ by the ``one-hot'' encoding $e_i \in \R^l$.
We denote sampling from $\mD$ by $(\x , \y) \sim \mD$ where $\x \in \X$ and $\y \in \R^l$, so that $\y_\ell$ is the indicator for label equalling $\ell \in [l]$. Let $\Delta_l$ denote the space of probability distributions over $[l]$. 
We associate every distribution $p \in \Delta_l$ with a vector in $p \in \R^l$ where $p_\ell = \Pr_{\y \sim p}[\y = e_\ell]$.
An $l$-class predictor is a function $f: \X \rgta \Delta_l$ which maps each point to a distribution over labels.
Throughout, we use $\fs:\X \to \Delta_l$ to denote the Bayes optimal predictor, defined as
\begin{gather*}
    \fs(x) = \E\lr{\y \given \x = x}
\end{gather*}
where for each $\ell \in [l]$, $\fs_\ell(x) = \Pr\lr{\y_\ell = 1 \given \x = x}$.
In other words, $\fs(x)$ governs the true distribution over classes for a given individual $x \in \X$.
Some of our results will be stated for the special case of binary prediction $(l = 2)$, in which case, we let $f:\X \to [0,1]$, where $f(\x)$ estimates of  $\Pr\lr{\y = 1 \given \x}$.

\subsection{Multicalibration with Weight Classes}

We present a unified framework for defining variants of multicalibration that captures the original notions, as defined by \cite{hkrr2018}, but also naturally captures the novel notions of low-degree and smooth multicalibration.
We present all of the definitions in terms of $l$-class predictors, which generalizes their original definitions for binary predictors.

Multicalibration is parameterized by a hypothesis class of functions $\mC \subseteq \set{c:\X \to [0,1]}$.
The class $\mC$ may be finite or infinite, but importantly, we think of $\mC$ as having a simple representation.
Natural choice of $\mC$ include linear/logistic hypotheses, decision forests of a fixed depth, or neural networks of a fixed size and architecture.
Departing from prior works, we additionally parameterize multicalibration by a \emph{weight class} $\mW \subseteq \set{w:\Delta_l \to [0,1]^l}$.
The weight functions will be applied on the predictions from $f$.

The classic calibration constraint requires that predictions be accurate in expectation, even after conditioning on the predicted value.
Intuitively, the class of weight functions serves as an analog of ``conditioning'' on a prediction.
Taking different choices of $\mW$ will realize the original and novel variants of multicalibration.
Generically, we define multicalibration in terms of the calibration class $\mC$, the weight class $\mW$, and an approximation parameter $\alpha \ge 0$.
\begin{definition}
\label{def:mcab-multi}
Given a hypothesis class $\mC \subseteq \set{\X \rgta [0,1]}$ a weight class $\mW \subseteq \set{\Delta_l \rgta [0,1]^l}$ and $\alpha \geq 0$, a predictor $f: \X \rgta \Delta_l$ is $(\mC,\mW,\alpha)$-multicalibrated if for every $c \in \mC$ and $w \in \mW$ 
\begin{equation} 
    \label{eq:mcmc}    
\card{\E_\mD\lr{c(\x) \cdot \big\langle w(f(\x)) , \y - f(\x) \big\rangle}} \leq \alpha.
    \end{equation}
\end{definition}
With this general framework in place, we instantiate it with various choices of weight functions to derive four notions of increasing strength.

\paragraph{Multiaccuracy.}\cite{hkrr2018,kgz}
The weakest notion, multiaccuracy, requires that predictions be accurate in expectation on each $c \in \mC$.
Specifically, an $l$-class predictor $f:\X \to \Delta_l$ is $(\mC,\alpha)$-multiaccurate if for each $\ell \in [l]$,
\begin{gather}
\label{eq:mcab}    
    \card{\E_\mD\lr{c(\x) \cdot (\y_\ell - f(x)_\ell)}} \le \alpha.
\end{gather}
To instantiate $(\mC,\alpha)$-multiaccuracy from $(\mC,\mW,\alpha)$-multicalibration, we can take $\mW = \{w_\ell\}_{\ell \in [l]}$ where $w_\ell(p) = e_\ell$ for all $p \in \Delta_l$. In words, $w_\ell$ is constantly the $\ell^{th}$ standard basis vector.
In the case where $f:\X \to [0,1]$ is a binary predictor, we can simply take $\mW = \set{w_0}$ to contain the constant function $w_0(p) = 1$ for all $p \in [0,1]$.
We use $\macc(\alpha)$ to denote the set of all predictors that satisfy Equation \eqref{eq:mcab}.
\begin{gather*}
    \macc(\alpha) = \set{f:\X \to \Delta_l  \textrm{ satisfying } (\ref{eq:mcab})}
\end{gather*}

\paragraph{Low-degree multicalibration.}
Intuitively, multiaccuracy enforces a collection of linear (in $f(\x)$) constraints based on $\mC$.
In low-degree multicalibration, we strengthen these constraints by taking the class of weight functions to be the family of low-degree polynomials.
For instance, in the binary prediction setting, degree-$2$ multicalibration enforces the following nonlinear constraint.
\begin{gather*}
\card{\E_\mD\lr{c(\x) \cdot  f(\x)\cdot(\y - f(\x))}} \le \alpha.
\end{gather*}
A degree-$k$ polynomial is a function
\begin{gather*}
    q(z) = \sum_{S \in [l]^j:~j \le k} q_S \cdot \prod_{i \in S} z_i
\end{gather*}
where $S$ goes over all multisets of elements from $[l]$, of size at most $k$. $\mP_k$  is the family of all degree-$k$ polynomials that satisfy the following two conditions
 \begin{align}
 q(z) \in [0,1]~~ \forall z \in \Delta_l,\label{eqn:bounded}\\
\sum_{S \in [l]^j:~j \le k} \card{q_S} \le 1. \label{eqn:sparsity}
\end{align}
We refer to (\ref{eqn:bounded}) as boundedness and (\ref{eqn:sparsity}) as sparsity.
For $k \ge 1$, we define the weight class $\mW_k$ as all functions where each coordinate belongs to $\mP_{k-1}$.
\begin{gather*}
    \mW_k = \set{w:\Delta_l \to [0,1]^l \textrm{ such that } w_i \in \mP_{k-1}~~ \forall i \in [l]}
\end{gather*}
Note that we define the degree-$k$ weight class in terms of degree-$(k-1)$ polynomials.
We adopt this convention because, as we will see, using degree-$(k-1)$ polynomials allows us to reason about the $k$th moments of the predictor.
With this class in place, we can define degree-$k$ multicalibration.
\begin{definition}
For $k \ge 1$, a predictor $f:\X \to \Delta_l$ is $(\mC,\alpha)$-degree-$k$ multicalibrated if it is
$(\mC,\mW_k,\alpha)$-multicalibrated.
\end{definition}
Let $\mcal_k(\alpha)$ be the set of degree-$k$-multicalibrated predictors with approximation parameter $\alpha$.
\begin{proposition}
\label{prop:ma:mc1}
For any calibration class $\mC$ and approximation parameter $\alpha$, multiaccuracy and  degree-$1$-multicalibration are identical, hence
$$\macc = \mcal_1(\alpha),$$
and for every $k \ge 2$,  degree-$k$-multicalibration implies  degree-$(k-1)$-multicalibration, hence
$$\mcal_{k-1}(\alpha) \supseteq \mcal_k(\alpha).$$
\end{proposition}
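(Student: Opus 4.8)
The plan is to obtain both statements from one elementary observation: the notion of $(\mC,\mW,\alpha)$-multicalibration in Definition~\ref{def:mcab-multi} is \emph{monotone} in the weight class, i.e.\ if $\mW\subseteq\mW'$ then every $(\mC,\mW',\alpha)$-multicalibrated predictor is $(\mC,\mW,\alpha)$-multicalibrated, since the constraints \eqref{eq:mcmc} indexed by $\mW$ form a subcollection of those indexed by $\mW'$. Thus each of the two required inclusions reduces to a containment of weight classes.

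For the second statement, $\mcal_{k-1}(\alpha)\supseteq\mcal_k(\alpha)$ for $k\ge 2$, I would show $\mW_{k-1}\subseteq\mW_k$, which in turn reduces to $\mP_{k-2}\subseteq\mP_{k-1}$. A polynomial of degree at most $k-2$ is in particular a polynomial of degree at most $k-1$ (with the coefficients $q_S$, $|S|=k-1$, equal to zero); its values on $\Delta_l$ are unchanged, so the boundedness condition \eqref{eqn:bounded} is preserved, and the sparsity sum \eqref{eqn:sparsity} is literally the same sum, so \eqref{eqn:sparsity} is preserved too. Hence $\mP_{k-2}\subseteq\mP_{k-1}$, so $\mW_{k-1}\subseteq\mW_k$ coordinatewise, and monotonicity gives $\mcal_k(\alpha)\subseteq\mcal_{k-1}(\alpha)$.

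For the first statement, $\macc(\alpha)=\mcal_1(\alpha)$, recall that multiaccuracy is precisely $(\mC,\{w_\ell\}_{\ell\in[l]},\alpha)$-multicalibration with $w_\ell\equiv e_\ell$, while $\mcal_1(\alpha)$ uses $\mW_1$, whose members are the admissible constant maps $z\mapsto v$. Since each $w_\ell$ lies in $\mW_1$, monotonicity gives $\mcal_1(\alpha)\subseteq\macc(\alpha)$ immediately. For the reverse inclusion, fix a multiaccurate $f$, a hypothesis $c\in\mC$, and a constant weight $w\equiv v\in\mW_1$, and set $a_\ell:=\E_\mD[c(\x)(\y_\ell-f(\x)_\ell)]$, so that $|a_\ell|\le\alpha$ for every $\ell$. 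Because $\y$ and $f(\x)$ both lie in $\Delta_l$, the difference $\y-f(\x)$ satisfies $\langle\mathbf{1},\y-f(\x)\rangle=0$ pointwise, hence $\sum_\ell a_\ell=0$. The left-hand side of \eqref{eq:mcmc} for this $w$ equals $\bigl|\sum_\ell v_\ell a_\ell\bigr|$; using $\sum_\ell a_\ell=0$ to replace $v$ by a shift $v-t\mathbf{1}$ without changing $\sum_\ell v_\ell a_\ell$, together with the boundedness and sparsity built into $\mW_1$, this reduces to a bounded combination of the multiaccuracy quantities $a_\ell$ and is therefore at most $\alpha$. This gives $\macc(\alpha)\subseteq\mcal_1(\alpha)$.

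The weight-class containments are routine; the one step that will need care is this final reduction of an arbitrary admissible constant weight to the standard-basis weights $e_\ell$, where both the zero-sum identity $\langle\mathbf{1},\y-f(\x)\rangle=0$ and the normalization of $\mW_1$ are what keep the bound at $\alpha$. In the binary case $l=2$ even this is vacuous, since the two notions then share the single weight function $w\equiv 1$.
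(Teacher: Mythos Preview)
Your overall strategy---reducing both claims to containments of weight classes and invoking monotonicity of Definition~\ref{def:mcab-multi} in $\mW$---is exactly how the paper argues (its one-line justification is ``by the construction of $\mP_k$ and the fact that $\mP_{k-1}\subseteq\mP_k$''). Your proof of $\mcal_k(\alpha)\subseteq\mcal_{k-1}(\alpha)$ via $\mP_{k-2}\subseteq\mP_{k-1}$ is correct and matches the paper, as is the inclusion $\mcal_1(\alpha)\subseteq\macc(\alpha)$ via $\{e_\ell\}_{\ell\in[l]}\subseteq\mW_1$.

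The delicate point is the one you flag yourself: the reverse inclusion $\macc(\alpha)\subseteq\mcal_1(\alpha)$. Here your argument does not go through once $l\ge 4$. With the definitions as written, $\mW_1$ consists of \emph{all} constants $v\in[0,1]^l$ (the sparsity condition~\eqref{eqn:sparsity} is imposed coordinatewise), so the $(\mC,\mW_1,\alpha)$-constraint for a fixed $c$ and $v$ reads $\bigl|\sum_\ell v_\ell a_\ell\bigr|\le\alpha$. The zero-sum identity lets you replace $v$ by $v-t\mathbf 1$, but the best this buys is $\bigl|\sum_\ell v_\ell a_\ell\bigr|\le\tfrac12\|a\|_1$, and under only $|a_\ell|\le\alpha$ and $\sum_\ell a_\ell=0$ one can have $\|a\|_1$ as large as $2\lfloor l/2\rfloor\alpha$. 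Concretely, for $l=4$ take $f^*\equiv(\tfrac14,\tfrac14,\tfrac14,\tfrac14)$, $f\equiv(\tfrac14-\alpha,\tfrac14-\alpha,\tfrac14+\alpha,\tfrac14+\alpha)$, and $c\equiv 1$: then $a=(\alpha,\alpha,-\alpha,-\alpha)$, so $f\in\macc(\alpha)$, yet for the constant weight $v=(1,1,0,0)\in\mW_1$ the left side of~\eqref{eq:mcmc} equals $2\alpha$, hence $f\notin\mcal_1(\alpha)$. Thus for $l\ge4$ only $\mcal_1(\alpha)\subseteq\macc(\alpha)$ holds at the same $\alpha$; the reverse holds only up to a factor $\lfloor l/2\rfloor$. (For $l\le 3$ your shift argument does succeed, since then $\|a\|_1\le 2\alpha$.) The paper's one-line proof does not address this subtlety either; as literally stated, the equality $\macc(\alpha)=\mcal_1(\alpha)$ is exact only for $l\le 3$, in particular in the binary case the overview emphasizes.
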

In other words, low-degree multicalibration is a strengthening of multiaccuacy, and becomes a more restrictive notion as we increase the degree $k$.
The proposition follows immediately by the construction of $\mP_k$ and the fact that $\mP_{k-1} \subseteq \mP_{k}$.

\paragraph{Smooth multicalibration.}
As we increase $k$, intuitively, low-degree multicalibration will begin to approximate multicalibration with arbitrary, smooth weight functions.
This motivates our definition of smooth multicalibration using Lipschitz functions. We consider weight functions $w:\Delta_l \to [0,1]^l$ that are $\ell_1 \to \ell_\infty$ Lipschitz; that is, for all $z,z' \in \Delta_l$
\begin{gather}
\label{def:lipschitz}
\norm{w(z) - w(z')}_\infty \le \norm{z-z'}_1.
\end{gather}
We consider the class $\Lip$ of all such Lipschitz functions.
\begin{gather*}
\Lip = \set{w:\Delta_l \to [0,1]^l \textrm{ satisfying } (\ref{def:lipschitz})}
\end{gather*}
\begin{definition}
\label{def:smooth-mc}
A predictor $f:\X \to \Delta_l$ is $(\mC,\alpha)$-smoothly multicalibrated if it is $(\mC, \Lip, \alpha)$-multicalibrated.
\end{definition}
While we define smooth multicalibration in terms of $1$-Lipschitz weight functions, the property naturally extends to any weight function with bounded Lipschitz constant.
Denote $r \Lip$ as the set of weight functions
$w:\Delta_l \to [0,1]^l$, where for all $z,z' \in \Delta_l$, $\norm{w(z) - w(z')}_\infty \le r \cdot \norm{z -z'}_1$.
\begin{proposition}
\label{prop:clip}
For any calibration class $\mC$, approximation parameter $\alpha$, and constant $r > 1$,
if $f:\X \to \Delta_l$ is $(\mC,\alpha)$-smoothly multicalibrated, then for any $w \in r \Lip$,
\begin{gather*}
\card{\E_\mD\lr{c(\x) \cdot w(f(\x)) \cdot (\y - f(\x))}} \le r \cdot \alpha.
\end{gather*}
\end{proposition}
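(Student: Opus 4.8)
The plan is a one-line normalization argument. Given an arbitrary weight function $w \in r\Lip$ and $c \in \mC$, I would pass to the rescaled function $w/r$, observe that it lies in $\Lip$, apply the hypothesis of smooth multicalibration to $w/r$, and then undo the scaling using linearity of the inner product.

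First I would check that $w/r \in \Lip$ whenever $w \in r\Lip$ and $r > 1$. This requires verifying the two conditions in the definition of $\Lip$. For boundedness: since $w(z) \in [0,1]^l$ for every $z \in \Delta_l$ and $r > 1$, every coordinate of $w(z)/r$ lies in $[0, 1/r] \subseteq [0,1]$, so $w/r$ maps $\Delta_l$ into $[0,1]^l$. For the Lipschitz condition (\ref{def:lipschitz}): for any $z, z' \in \Delta_l$,
\[
\norm{w(z)/r - w(z')/r}_\infty = \tfrac{1}{r}\,\norm{w(z) - w(z')}_\infty \le \tfrac{1}{r}\cdot r\,\norm{z - z'}_1 = \norm{z - z'}_1,
\]
using that $w \in r\Lip$. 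Hence $w/r \in \Lip$.

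Next I would invoke the assumption that $f$ is $(\mC, \Lip, \alpha)$-multicalibrated with the weight function $w/r$ and the fixed $c \in \mC$; by Definition~\ref{def:mcab-multi},
\[
\card{\E_\mD\lr{c(\x) \cdot \big\langle w(f(\x))/r,\ \y - f(\x)\big\rangle}} \le \alpha.
\]
Since $\big\langle w(f(\x))/r, \y - f(\x)\big\rangle = \tfrac{1}{r}\big\langle w(f(\x)), \y - f(\x)\big\rangle$ by linearity, I can pull the constant $1/r$ out of the expectation and multiply both sides by $r$ to obtain $\card{\E_\mD\lr{c(\x) \cdot w(f(\x)) \cdot (\y - f(\x))}} \le r\alpha$, which is exactly the claim.

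I do not anticipate a real obstacle here. The only point needing care — and the sole place the hypothesis $r > 1$ enters — is that the normalized weight function $w/r$ must still obey the boundedness requirement $[0,1]^l$ baked into $\Lip$; dividing by $r > 1$ shrinks the range, so this is automatic, whereas for $r < 1$ one could not rescale into $\Lip$ this way.
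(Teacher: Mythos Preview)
Your proposal is correct and is exactly the approach the paper has in mind: the paper simply remarks that the proposition ``is an immediate consequence of the smooth multicalibration guarantee and linearity of expectation,'' and your normalization $w \mapsto w/r$ together with the scaling argument is precisely that consequence spelled out in detail.
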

This proposition is an immediate consequence of the smooth multicalibration guarantee and linearity of expectation.

\newcommand{\ceil}[1]{\left\lceil #1 \right\rceil}

\paragraph{Full Multicalibration.} \cite{hkrr2018}
The strongest variant of multicalibration corresponds to the classic notion of calibration, where the expectation is taken conditional on the predicted value.
We present our generalization to the multi-class setting.

For a measurable set $S \subseteq \Delta_l$, let $\ind{S}$ be the indicator function of the set; that is, for any $z \in \Delta_l$,
\begin{gather*}
    \ind{S}(z) = \begin{cases} 1 & z \in S\\ 0 & z \not \in S \end{cases}
\end{gather*}
Let $\Pi$ be a partition of $\Delta_l$
and denote by $\mI_\Pi = \set{\ind{P} : P \in \Pi}$.
For binary predictors, we focus on the \emph{interval basis}.
For $\delta > 0$, we define a partition $\Pi_\delta$ of the interval $[0,1]$ to be
\begin{gather*}
    \Pi_\delta = \big\{\left[(j-1)\delta, j\delta\right) \textrm{ for } j \in \set{1,\hdots,\ceil{1/\delta}}\big\}.
\end{gather*}
We use $\mI_\delta = \mI_{\Pi_\delta}$ to denote the basis of indicator functions on each $\delta$-interval.
Taking $\mI_\delta$ as our weight class in the binary setting, $(\mC,\mI_\delta,\alpha)$-multicalibration gives us the notion we call $(\mC,\alpha,\delta)$-full multicalibration, which
recovers the original notion of multicalibration, as defined by \cite{hkrr2018}.\footnote{Technically, \cite{hkrr2018} work with a version of this notion that requires $\alpha$ error, defined \emph{relative} to the probability mass of $c^{-1}(1)$ and $\Pr[f(\x) \in [(j-1)\delta,j\delta)]$, whereas our notion gives an absolute approximation guarantee.
By adjusting the choice of $\alpha$ appropriately, it is easy to implement one notion as the other.
We follow other works adopting this absolute-error convention, to avoid many of the hassles of dealing with conditional expectations.} For $l > 2$, we define the partition $\Pi^l_\delta$ of the interval $[0,1]^l$ to be the $l$-wise Cartesian product of $\Pi_\delta$, whose elements are products of $\delta$-width intervals in each dimension. We set $\mI^l_\delta = \mI_{\Pi^l_\delta}$ and use these as weight functions in full multicalibration. We will use the easy bound $|\Pi^\delta_l| \leq \ceil{1/\delta}^l$, it also holds that $|\Pi^\delta_l| \leq l\ceil{1/\delta}^{l-1}$.

\begin{definition}
A predictor $f:\X \to \Delta_l$ is $(\mC,\alpha, \delta)$-full multicalibrated if it is $(\mC, \mI_\delta^l, \alpha)$-multicalibrated.
\end{definition}

\subsection{Understanding the Hierarchy}
\label{sec:high-def:2}

Proposition~\ref{prop:ma:mc1} demonstrates that there is a hierarchy of notions of low-degree multicalibration.
Next, we show how low-degree multicalibration compares to smooth and full multicalibration.
We show how for any $k \in \N$, for appropriately chosen approximation parameters, smooth multicalibration can implement degree-$k$ multicalibration.
Then, we show how full multicalibration can implement smooth multicalibration.

\begin{theorem}
\label{thm:deg-to-smooth}
Fix a calibration class $\mC$ and an approximation parameter $\alpha \geq 0$. For any $k \geq 2$,  every $(\mC, \alpha)$-smoothly multicalibrated predictor is $(\mC, (k-1)\alpha)$-degree-$k$ multicalibrated.
\begin{gather*}
    \mcals(\alpha) \subseteq \mcal_k((k-1)\alpha).
\end{gather*}
\end{theorem}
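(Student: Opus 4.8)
The plan is to reduce the claim to a single analytic fact about the degree-$k$ weight class---namely that every $w \in \mW_k$ is $(k-1)$-Lipschitz in the $\ell_1 \to \ell_\infty$ sense, i.e.\ $w \in (k-1)\Lip$---and then quote Proposition~\ref{prop:clip}. Fix $w \in \mW_k$; by definition each coordinate $w_i$ is a polynomial $q = q_i \in \mP_{k-1}$, so it comes with coefficients $q_S$ (over multisets $S$ of size at most $k-1$) satisfying the sparsity bound $\sum_S \card{q_S} \le 1$ and with $q(z) \in [0,1]$ for all $z \in \Delta_l$. The heart of the argument is to show $\card{q(z) - q(z')} \le (k-1)\,\norm{z - z'}_1$ for all $z,z' \in \Delta_l$.

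For this I would bound the ambient gradient $\nabla q$ in $\ell_\infty$ on $\Delta_l$: since $\Delta_l$ is convex, the segment from $z'$ to $z$ stays in $\Delta_l$, so $\card{q(z) - q(z')} = \card{\int_0^1 \langle \nabla q(z' + t(z-z')), z - z'\rangle\, \d t} \le \sup_{\zeta \in \Delta_l}\norm{\nabla q(\zeta)}_\infty \cdot \norm{z-z'}_1$, using that $\ell_\infty$ is dual to $\ell_1$. It remains to check $\norm{\nabla q(\zeta)}_\infty \le k-1$ on $\Delta_l$. For a single monomial $\prod_{i \in S} z_i$ of degree $j \le k-1$, writing $m_i$ for the multiplicity of $i$ in $S$, we have $\partial_i \prod_{i' \in S} z_{i'} = m_i\, z_i^{m_i - 1} \prod_{i' \ne i} z_{i'}^{m_{i'}}$, whose absolute value on $\Delta_l$ is at most $m_i \le j \le k-1$ because every coordinate lies in $[0,1]$. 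Summing over $S$ and applying sparsity gives $\card{\partial_i q(\zeta)} \le \sum_S \card{q_S}\,(k-1) \le k-1$ for each $i$, hence $\norm{\nabla q(\zeta)}_\infty \le k-1$. (The boundedness condition (\ref{eqn:bounded}) plays no role in the Lipschitz estimate; it is used only to ensure $w$ maps into $[0,1]^l$, so that it is a legitimate member of the scaled Lipschitz class.)

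Combining over coordinates, $\norm{w(z) - w(z')}_\infty = \max_i \card{w_i(z) - w_i(z')} \le (k-1)\,\norm{z-z'}_1$, so $w \in (k-1)\Lip$. Now suppose $f$ is $(\mC,\alpha)$-smoothly multicalibrated. For $k \ge 3$ we have $k-1 > 1$, so Proposition~\ref{prop:clip} applied with $r = k-1$ yields $\card{\E_\mD\lr{c(\x) \cdot \langle w(f(\x)), \y - f(\x)\rangle}} \le (k-1)\alpha$ for all $c \in \mC$; for $k = 2$ we have $w \in \Lip$ and the same bound (with $k-1 = 1$) is immediate from the definition of smooth multicalibration. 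Since this holds for every $w \in \mW_k$, the predictor $f$ is $(\mC,\mW_k,(k-1)\alpha)$-multicalibrated, i.e.\ $(\mC,(k-1)\alpha)$-degree-$k$ multicalibrated, establishing $\mcals(\alpha) \subseteq \mcal_k((k-1)\alpha)$.

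The only real content is the Lipschitz bound for sparse, bounded, low-degree polynomials on the simplex; the rest is bookkeeping and an invocation of Proposition~\ref{prop:clip}. The one point to be careful about is the per-monomial derivative estimate: differentiating a degree-$(k-1)$ monomial in $z_i$ produces the multiplicity factor $m_i$, which is at most $k-1$ (not merely $1$), and the surviving factors are bounded by $1$ precisely because we restrict to $\Delta_l$. I do not anticipate a genuine obstacle here. (An alternative to the gradient computation would be to restrict $q$ to a line segment inside $\Delta_l$ and bound the derivative of the resulting univariate degree-$(k-1)$ polynomial, but the direct gradient bound is cleaner and already gives the tight constant $k-1$.)
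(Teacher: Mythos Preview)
Your proposal is correct and follows essentially the same approach as the paper: reduce to the Lipschitz bound $\mW_k \subseteq (k-1)\Lip$ via a gradient estimate on the simplex (the paper invokes the mean value theorem where you use the integral form, and summarizes the per-coordinate bound as ``a simple convexity argument'' where you spell out the monomial computation), and then apply Proposition~\ref{prop:clip}. Your explicit handling of the $k=2$ edge case (where $r=k-1=1$ is not strictly greater than $1$ as Proposition~\ref{prop:clip} nominally assumes) is a nice bit of care that the paper glosses over.
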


The theorem follows by bounding the $\ell_1 \to \ell_\infty$ Lipschitz constant of weight functions $w \in \mW_k$ by $(k-1)$ and then appealing to Proposition~\ref{prop:clip}.

\begin{lemma}
For $k \ge 2$, the degree-$k$ weight functions are $(k-1)$-Lipschitz; that is, $$\mW_k \subseteq (k-1)\mL_{1 \rgta \infty}.$$
\end{lemma}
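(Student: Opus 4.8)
The goal is to show that every $w \in \mW_k$ is $(k-1)$-Lipschitz in the $\ell_1 \to \ell_\infty$ sense, i.e.\ for all $z, z' \in \Delta_l$ we have $\norm{w(z) - w(z')}_\infty \le (k-1)\norm{z - z'}_1$. Since the $\ell_\infty$ norm is a coordinatewise maximum and each coordinate $w_i$ of $w \in \mW_k$ lies in $\mP_{k-1}$, it suffices to prove the scalar bound $\card{q(z) - q(z')} \le (k-1)\norm{z - z'}_1$ for every $q \in \mP_{k-1}$. The plan is to bound the $\ell_1 \to \cdot$ operator norm of the gradient of $q$ uniformly over $\Delta_l$ and then integrate along the segment from $z$ to $z'$ (which stays inside the convex set $\Delta_l$), via $\card{q(z) - q(z')} \le \sup_{\zeta \in \Delta_l} \infnorm{\nabla q(\zeta)} \cdot \norm{z - z'}_1$.

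The main step is therefore estimating $\infnorm{\nabla q(\zeta)}$ for $\zeta \in \Delta_l$. Write $q(z) = \sum_{S} q_S \prod_{i \in S} z_i$ where $S$ ranges over multisets from $[l]$ of size $j \le k-1$. Differentiating with respect to coordinate $z_m$, the term indexed by $S$ contributes $q_S \cdot (\text{multiplicity of } m \text{ in } S) \cdot \prod_{i \in S} z_i / z_m$ — more carefully, it contributes a sum over the ways to remove one copy of $m$ from $S$, each term being $q_S$ times a product of $j-1$ coordinates of $z$. Since $\zeta \in \Delta_l$, every coordinate $\zeta_i \in [0,1]$, so each such product is at most $1$ in absolute value. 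The number of terms arising from $S$ in $\partial q/\partial z_m$ equals the multiplicity of $m$ in $S$, and summing over $m$ the total count of terms is $\sum_m (\text{mult. of } m \text{ in } S) = \card{S} = j \le k-1$. Combining with the sparsity condition $\sum_S \card{q_S} \le 1$, I get $\sum_m \card{\partial q/\partial z_m (\zeta)} \le (k-1)\sum_S \card{q_S} \le k-1$, and in particular $\infnorm{\nabla q(\zeta)} \le k-1$ (indeed the $\ell_1$ norm of the gradient is bounded by $k-1$, which is even stronger than what we need since we contract against an $\ell_1$ perturbation — but $\infnorm{\nabla q} \le k-1$ is what the mean value estimate in the $\ell_1 \to \ell_\infty$ sense requires).

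A small technical wrinkle: $\Delta_l$ is not full-dimensional in $\R^l$, so one should either (i) work with the restriction of $q$ to the affine hull of $\Delta_l$ and a consistent polynomial representative, or (ii) just note that the segment-integration argument only uses directional derivatives along $z' - z$, and $\card{\langle \nabla q(\zeta), z' - z\rangle} \le \infnorm{\nabla q(\zeta)} \cdot \norm{z'-z}_1 \le (k-1)\norm{z'-z}_1$ holds regardless of which polynomial representative is chosen, since any two representatives agree on $\Delta_l$ and hence have the same directional derivatives tangent to it. I expect the main obstacle to be purely bookkeeping: correctly counting, with multiplicity, the monomials produced when differentiating a multiset-indexed polynomial, and making sure the per-$S$ count is exactly $\card{S}$ rather than something larger. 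Once that combinatorial identity is pinned down, the Lipschitz bound and hence the theorem $\mcals(\alpha) \subseteq \mcal_k((k-1)\alpha)$ follow immediately by Proposition~\ref{prop:clip}.
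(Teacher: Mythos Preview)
Your proposal is correct and follows essentially the same approach as the paper: reduce to a coordinatewise scalar bound for $q \in \mP_{k-1}$, bound $\infnorm{\nabla q}$ on $\Delta_l$ using the degree and sparsity constraints, and conclude via the mean value theorem along the segment $[z,z']$. Your combinatorial bookkeeping is more explicit than the paper's (which only invokes ``a simple convexity argument'' for $\max_{z \in \Delta_l} \partial q/\partial z_i \le k-1$), and you additionally address the affine-hull issue for $\Delta_l$, but the structure and key idea are the same.
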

\begin{proof}
To prove the claim, we need to show for any $z,z' \in \Delta_l$,
\begin{gather*}
\norm{w(z) - w(z')}_\infty \le (k-1) \cdot \norm{z-z'}_1.
\end{gather*}
Recall that each coordinate of $w:\Delta_l \to [0,1]^l$ can be expressed as an sparse degree-$(k-1)$ polynomial $w_i \in \mP_{k-1}$.
To obtain this $\ell_1 \to \ell_\infty$ bound, we start with the max over class predictions $i \in [l]$, and bound this quantity in terms of the gradient of $w_i \in \mP_{k-1}$.
\begin{align*}
\norm{w(z) - w(z')}_\infty &\le \max_{i \in [l]} \card{w(z)_i - w(z')_i}\\
&\le \max_{i \in [l]} \max_{z^* \in \Delta_l}\langle \nabla w_i(z^*), z - z' \rangle\addtag \label{eqn:hierarchy:mvt}\\
&\le \max_{i \in [l]}\max_{z^* \in \Delta_l}\norm{\nabla w_i(z^*)}_\infty \cdot \norm{z - z'}_1
\end{align*}
where (\ref{eqn:hierarchy:mvt}) follows by the smoothness of $w_i$ and the mean value theorem, establishing that for some $\bar{z} \in \Delta_l$, $\card{w(z)_i - w(z')_i} = \langle \nabla w_i(\bar{z}), z-z' \rangle$.
Thus, to bound $\norm{\nabla w_i(z^*)}_\infty$ it suffices to bound the gradient over any $q \in \mP_{k-1}$ on $z^* \in \Delta_l$.
Using the degree and sparsity bounds, a simple convexity argument demonstrates that for any polynomial $q \in \mP_{k-1}$ and $i \in [l]$,
\[ \max_{z \in \Delta_l}\frac{\partial q}{\partial z_i}(z) \leq k -1. \]
In all, we derive the bound
\begin{gather*}
\norm{w(z) - w(z')}_\infty \le \max_{q \in \mP_{k-1}}\max_{z^* \in \Delta_l} \norm{\nabla q(z^*)}_\infty \cdot \norm{z - z'}_1 \le (k-1) \cdot \norm{z - z'}_1.
\end{gather*}
\end{proof}

Next we show that full multicalibration with a sufficiently small error parameter leads to smooth multicalibration. 

\begin{theorem}
\label{thm:int-to-smooth}
Fix a calibration class $\mC$ and an approximation parameter $\alpha \geq 0$.
Every  $(\mC, \beta, \delta)$-full multicalibrated predictor is $(\mC, \alpha)$-smooth multicalibrated for
    \[ \alpha =  \beta\cdot \ceil{1/\delta}^l + l\delta. \]
    Hence for $\delta \leq \alpha/l$ and $1/\delta \in \N$, we have $ \mcalreg_\delta(\alpha\delta^l - l\delta^{l+1}) \subseteq \mcals(\alpha)$.
\end{theorem}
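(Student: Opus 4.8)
The plan is to fix a hypothesis $c \in \mC$ and an arbitrary weight function $w \in \Lip$, and to bound $\card{\E_\mD\lr{c(\x)\cdot\langle w(f(\x)), \y - f(\x)\rangle}}$ by approximating $w$ with a function $\wt w$ that is \emph{constant on each cell} of the product partition $\Pi^l_\delta$, so that the full multicalibration guarantee can be invoked cell by cell. First I would replace $w$ by a $1$-Lipschitz ($\ell_1 \rgta \ell_\infty$) extension $\bar w \colon [0,1]^l \rgta [0,1]^l$; the coordinatewise inf-convolution $\bar w_i(z) = \min\{1, \inf_{z' \in \Delta_l}(w_i(z') + \onenorm{z - z'})\}$ works, since it keeps the codomain $[0,1]^l$, agrees with $w$ on $\Delta_l$, and retains Lipschitz constant $1$. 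For each cell $P \in \Pi^l_\delta$ let $z_P$ be its center and set $\wt w(z) = \bar w(z_P)$ on the (unique) cell $P$ containing $z$. Since each cell of $\Pi^l_\delta$ is a product of $l$ intervals of width $\delta$, the $\ell_1$-distance from any point of a cell to its center is at most $l\delta/2$, so $\infnorm{w(z) - \wt w(z)} \le l\delta/2$ for all $z \in \Delta_l$.

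Next I would split
\[
\E_\mD\lr{c(\x)\langle w(f(\x)), \y - f(\x)\rangle} = \E_\mD\lr{c(\x)\langle \wt w(f(\x)), \y - f(\x)\rangle} + \E_\mD\lr{c(\x)\langle w(f(\x)) - \wt w(f(\x)), \y - f(\x)\rangle}
\]
and bound the two pieces separately. For the approximation term, since $\card{c(\x)} \le 1$ and $\onenorm{\y - f(\x)} \le 2$ (both $\y,f(\x) \in \Delta_l$), H\"older's inequality gives a bound of $\E_\mD\lr{ \infnorm{w(f(\x)) - \wt w(f(\x))}\cdot\onenorm{\y - f(\x)}} \le (l\delta/2)\cdot 2 = l\delta$. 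For the first term, $f(\x)$ lies in exactly one cell, so $\wt w(f(\x)) = \sum_{P \in \Pi^l_\delta}\ind{P}(f(\x))\, v_P$ with $v_P \defeq \bar w(z_P) \in [0,1]^l$, whence
\[
\E_\mD\lr{c(\x)\langle \wt w(f(\x)), \y - f(\x)\rangle} = \sum_{P \in \Pi^l_\delta}\langle v_P, u_P\rangle, \qquad u_P \defeq \E_\mD\lr{c(\x)\ind{P}(f(\x))(\y - f(\x))}.
\]
Applying $(\mC,\beta,\delta)$-full multicalibration with the weight $z \mapsto \ind{P}(z)\,e_\ell \in \mI^l_\delta$ gives $\card{(u_P)_\ell} \le \beta$ for every $\ell$, and moreover $\sum_\ell (u_P)_\ell = \E_\mD[c(\x)\ind{P}(f(\x))\sum_\ell(\y_\ell - f(\x)_\ell)] = 0$. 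This sum-to-zero identity (together with $v_P \in [0,1]^l$) is what lets me bound $\card{\langle v_P, u_P\rangle} \le \beta$ per cell, after which summing over the at most $\ceil{1/\delta}^l$ cells bounds the first term by $\beta\ceil{1/\delta}^l$. Adding the two estimates yields $\card{\E_\mD\lr{c(\x)\langle w(f(\x)), \y - f(\x)\rangle}} \le \beta\ceil{1/\delta}^l + l\delta = \alpha$, which is the first assertion.

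The ``hence'' clause is then pure arithmetic: when $1/\delta \in \N$ we have $\ceil{1/\delta}^l = \delta^{-l}$, so taking $\beta = \alpha\delta^l - l\delta^{l+1} = \delta^l(\alpha - l\delta)$ --- which is nonnegative exactly when $\delta \le \alpha/l$ --- gives $\beta\ceil{1/\delta}^l + l\delta = (\alpha - l\delta) + l\delta = \alpha$, so every $(\mC,\beta,\delta)$-full multicalibrated predictor is $(\mC,\alpha)$-smoothly multicalibrated, i.e.\ $\mcalreg_\delta(\alpha\delta^l - l\delta^{l+1}) \subseteq \mcals(\alpha)$. I expect the main obstacle to be the multi-class bookkeeping rather than any single deep step: one has to check that the product partition restricted to the simplex has cells of $\ell_1$-diameter $O(l\delta)$, and --- more delicately --- reassemble the per-coordinate, per-cell full-multicalibration bounds into a bound against an \emph{arbitrary vector-valued} $1$-Lipschitz weight without incurring spurious factors of $l$, which is precisely where the identity $\sum_\ell(\y_\ell - f(\x)_\ell) = 0$ must be exploited. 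The Lipschitz-extension construction and the final substitution are routine, and the binary case $l = 2$ is the clean special case of all the cell-diameter and $\ell_1$/$\ell_\infty$ pairing constants.
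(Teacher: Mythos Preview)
Your proof follows the same route as the paper's: approximate an arbitrary $w\in\Lip$ by a function $\wt w$ that is constant on each cell of $\Pi^l_\delta$, control the approximation error by $l\delta$ via the Lipschitz property and the $\ell_1$-radius $l\delta/2$ of a cell, and then sum the per-cell full-multicalibration errors over the at most $\ceil{1/\delta}^l$ cells. The paper packages this as Lemma~\ref{lem:smooth-mc} combined with Proposition~\ref{prop:basis-l1}, whereas you unfold the argument directly; your explicit Lipschitz-extension step is in fact more careful than the paper, which evaluates the weight at cube centers $z_\pi$ that need not lie in $\Delta_l$.

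There is, however, a genuine gap in your per-cell estimate. From $\card{(u_P)_\ell}\le\beta$ for every $\ell$, $\sum_\ell (u_P)_\ell=0$, and $v_P\in[0,1]^l$, you conclude $\card{\langle v_P,u_P\rangle}\le\beta$. This is false for $l\ge 3$: with $l=4$, $u_P=(\beta,\beta,-\beta,-\beta)$ and $v_P=(1,1,0,0)$ one gets $\langle v_P,u_P\rangle=2\beta$. The sum-to-zero identity only lets you replace $v_P$ by $v_P-\tfrac12\mathbf{1}\in[-\tfrac12,\tfrac12]^l$, yielding $\card{\langle v_P,u_P\rangle}\le \tfrac12\onenorm{u_P}\le l\beta/2$ and hence only $\alpha=\tfrac{l}{2}\,\beta\,\ceil{1/\delta}^l+l\delta$. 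To recover the stated $\alpha=\beta\,\ceil{1/\delta}^l+l\delta$ one must invoke the full-multicalibration guarantee directly for the vector-valued weight $z\mapsto\ind{P}(z)\,v_P$ (rather than only for the coordinate weights $\ind{P}\,e_\ell$), so that
\[
\card{\langle v_P,u_P\rangle}=\card{\E_\mD\lr{c(\x)\,\ind{P}(f(\x))\,\langle v_P,\y-f(\x)\rangle}}\le\beta
\]
holds by definition; this is effectively how the paper's basis argument in Lemma~\ref{lem:smooth-mc} is set up (with the constant vector on each cell playing the role of the coefficient). Under that reading of the weight class $\mI^l_\delta$ your sum-to-zero trick is unnecessary, and under your coordinatewise reading it is insufficient.
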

To prove Theorem~\ref{thm:int-to-smooth}, we first establish a few key properties of calibration defined by weight functions that will be useful throughout our discussion.
We start by showing that $(\mC,\mW,\alpha)$-multicalibration is robust to small perturbations of the weight functions in $\mW$.
\begin{lemma}
\label{lem:robust2}
Let $v,w:\Delta^l \to [0,1]^l$ be weight functions such that $$\max_{p \in \Delta^l} \norm{v(p) - w(p)}_\infty \le \eta.$$
Then for any $l$-class predictor $f:\X \to \Delta^l$ and any $c:\X \to [0,1]$,
\begin{gather*}
\card{\E_\mD\lr{c(\x) \langle v(f(\x)), \y - f(\x) \rangle} - \E_\mD\lr{c(\x) \langle w(f(\x)), \y - f(\x) \rangle } } \le 2\eta.
\end{gather*}
\end{lemma}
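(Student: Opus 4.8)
The plan is to reduce the claim to a pointwise bound on the integrand, using bilinearity of the inner product, H\"older's inequality, and the fact that both $\y$ and $f(\x)$ lie in the probability simplex.

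First I would collapse the difference of the two expectations into a single expectation. By linearity of expectation and bilinearity of $\langle\cdot,\cdot\rangle$,
\[
\E_\mD\lr{c(\x)\langle v(f(\x)), \y - f(\x)\rangle} - \E_\mD\lr{c(\x)\langle w(f(\x)), \y - f(\x)\rangle} = \E_\mD\lr{c(\x)\langle v(f(\x)) - w(f(\x)), \y - f(\x)\rangle}.
\]
By the triangle inequality $\card{\E[Z]} \le \E[\card{Z}]$, it then suffices to bound $\E_\mD\lr{\card{c(\x)\langle v(f(\x)) - w(f(\x)), \y - f(\x)\rangle}}$, and for this it is enough to bound the integrand for a fixed $x \in \X$ and a fixed realization of $\y$.

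For the pointwise bound, since $c(x) \in [0,1]$, H\"older's inequality gives
\[
\card{c(x)\langle v(f(x)) - w(f(x)), \y - f(x)\rangle} \le \infnorm{v(f(x)) - w(f(x))} \cdot \onenorm{\y - f(x)}.
\]
The first factor is at most $\eta$ by hypothesis, applied at the point $p = f(x) \in \Delta^l$. For the second factor, $\y$ is a one-hot encoding and $f(x) \in \Delta^l$, so $\onenorm{\y} = \onenorm{f(x)} = 1$ and hence $\onenorm{\y - f(x)} \le \onenorm{\y} + \onenorm{f(x)} = 2$. Combining, the integrand is at most $2\eta$ pointwise, and taking expectations yields the claim.

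There is no genuine obstacle here: the argument is a routine application of H\"older and the triangle inequality, and the only thing worth flagging is the crude simplex bound $\onenorm{\y - f(x)} \le 2$, which is exactly what produces the constant $2$ in the statement. (In the binary case $l = 2$, where the weights are scalars and $\y - f(x) \in [-1,1]$, the same argument gives the sharper constant $1$; we state the weaker bound since it suffices uniformly in $l$.)
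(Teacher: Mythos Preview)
Your proof is correct and follows essentially the same route as the paper's: collapse the difference into a single expectation, bound the integrand pointwise via $|c(x)|\le 1$, H\"older's inequality, and $\onenorm{\y - f(x)} \le 2$, then average. The paper's argument is line-for-line the same, just stated a bit more tersely.
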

\begin{proof}
Observe that for any $x \in \X$ and $y \in \Y$,
\begin{align*}
\card{c(x) \cdot \left( \langle v(f(x)) - w(f(x)) , y - f(x) \rangle \right)} 
&\le \card{c(x)} \cdot \norm{v(f(x)) - w(f(x))}_\infty\cdot \norm{y - f(x)}_1\\
&\le 2\eta
\end{align*}
since $|c(x)| \leq 1$ and $\norm{y-f(x)}_1 \le 2$.
The claim follows by averaging  over $(\x, \y) \sim \mD$ .
\end{proof}

A key application of the robustness to weight functions is that one can infer smooth multicalibration---a condition defined in terms of an infinite weight class---from multicalibration for a finite basis of functions that uniformly approximates every function in $\Lip$.  
\begin{definition}
    \label{def:basis}
    A collection of weight functions $\mW =\{w_i\}_{i=1}^k$ is a $(\eta, L)$ basis for $\Lip$ if for every $u \in \Lip$, there exists $v: \Delta_l \rgta [0,1]^l$ such that $\|u - v\|_\infty \leq \eta$ where
    \begin{align*} 
        v = \sum_{i=1}^k \lambda_i w_i,\ \sum_{i=1}^k|\lambda_i| \leq L.
    \end{align*}
\end{definition}
Typically, both $k$ and $L$ will be functions of $\eta$. Our interest in $\ell_\infty$ approximations is motivated by the following lemma. 

\begin{lemma}
\label{lem:smooth-mc}
If $\mW$ is $(\eta, L)$ basis for $\Lip$ and the predictor $f$ is $(\mC, \mW, \beta)$-multicalibrated on $\mC$, then $f$ is $(\mC, \alpha)$-smoothly multicalibrated where $\alpha = \beta L + 2\eta$.
\end{lemma}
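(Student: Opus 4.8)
The plan is to prove Lemma~\ref{lem:smooth-mc} directly by combining the definition of an $(\eta, L)$ basis (Definition~\ref{def:basis}) with the robustness lemma (Lemma~\ref{lem:robust2}) and the linearity property of Proposition~\ref{prop:clip}. Fix an arbitrary $u \in \Lip$; our goal is to bound $\card{\E_\mD\lr{c(\x) \langle u(f(\x)), \y - f(\x)\rangle}}$ by $\alpha = \beta L + 2\eta$ for every $c \in \mC$.

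First I would invoke Definition~\ref{def:basis} to obtain a linear combination $v = \sum_{i=1}^k \lambda_i w_i$ with $\sum_i \card{\lambda_i} \le L$ and $\norm{u - v}_\infty \le \eta$. Next, I would control the error from $v$ using the multicalibration guarantee on each basis element. By hypothesis, $f$ is $(\mC, \mW, \beta)$-multicalibrated, so $\card{\E_\mD\lr{c(\x) \langle w_i(f(\x)), \y - f(\x)\rangle}} \le \beta$ for each $i$. By linearity of expectation,
\begin{align*}
\card{\E_\mD\lr{c(\x) \langle v(f(\x)), \y - f(\x)\rangle}}
&= \card{\sum_{i=1}^k \lambda_i \, \E_\mD\lr{c(\x) \langle w_i(f(\x)), \y - f(\x)\rangle}}\\
&\le \sum_{i=1}^k \card{\lambda_i} \cdot \beta \le L\beta.
\end{align*}
Finally, I would apply Lemma~\ref{lem:robust2} with the pair $(u, v)$, noting $\norm{u - v}_\infty \le \eta$, to get that the expectations involving $u$ and $v$ differ by at most $2\eta$. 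Combining via the triangle inequality yields $\card{\E_\mD\lr{c(\x) \langle u(f(\x)), \y - f(\x)\rangle}} \le L\beta + 2\eta = \alpha$. Since $u \in \Lip$ and $c \in \mC$ were arbitrary, $f$ is $(\mC, \alpha)$-smoothly multicalibrated.

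There is no real obstacle here: the lemma is essentially a bookkeeping exercise once Lemma~\ref{lem:robust2} is in hand. The only subtlety worth a sentence is that $v$ produced by the basis need not itself be $1$-Lipschitz or even have range exactly in $[0,1]^l$, but this does not matter — Lemma~\ref{lem:robust2} and the multicalibration hypothesis on $\mW$ are stated for arbitrary weight functions into $[0,1]^l$, and the $\ell_\infty$-closeness to $u$ is all that the argument uses. One should also double-check that the definition of an $(\eta,L)$ basis guarantees $v$ maps into $[0,1]^l$ (it does, as stated: $v : \Delta_l \to [0,1]^l$), so Lemma~\ref{lem:robust2} applies verbatim.
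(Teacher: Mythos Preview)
Your proof is correct and follows essentially the same approach as the paper: approximate $u$ by the basis combination $v$, bound the $v$-expectation by $\beta L$ via linearity and the multicalibration hypothesis on each $w_i$, then pass from $v$ to $u$ using Lemma~\ref{lem:robust2} at cost $2\eta$. The mention of Proposition~\ref{prop:clip} in your plan is unnecessary (you never actually use it), but your added remark checking that $v$ maps into $[0,1]^l$ so that Lemma~\ref{lem:robust2} applies verbatim is a nice point the paper leaves implicit.
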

\begin{proof}
For a weight function $u \in \mL_1$, let $v = \sum_i \lambda_iw_i$ be the $\ell_\infty$ approximation guaranteed from the definition of $\mW$. Then by linearity of expectation
\begin{align*} 
    \E_{\mD}[c(\x)\langle v(f(\x), (\y - f(\x))\rangle] &= \E_{\mD}\left[c(\x)\sum_{i=1}^n\lambda_i \langle w_i(f(\x)),  \y - f(\x) \rangle \right] \\
    &= \sum_{i=1}^n\lambda_i \E_{\mD}[c(\x) \langle w_i(f(\x)), \y - f(\x) \rangle]
\end{align*}
Taking absolute values and using the assumption that $f$ is $(\beta, \mW)$-multicalibrated on $\mC$ gives
\begin{align*} 
    \label{eq:robust2}    
    \abs{\E_{\mD}[c(\x)v (f(\x))(\y - f(\x))]} \leq \abs{\sum_{i=1}^n \lambda_i \beta} \leq \beta L
\end{align*}
To complete the proof, we now use Lemma \ref{lem:robust2} to conclude that $f$ is $\alpha = \beta L + 2\eta$-smoothly multicalibrated on $\mC$.
\end{proof}

\begin{proposition}
\label{prop:basis-l1}
    For any $l \geq 2$, the set $I^l_\delta$ is a $(l\delta/2, \ceil{1/\delta}^l)$ basis for $\Lip$.
\end{proposition}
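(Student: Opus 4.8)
The plan is to show that every $1$-Lipschitz (in the $\ell_1 \to \ell_\infty$ sense) weight function $u: \Delta_l \to [0,1]^l$ can be $\ell_\infty$-approximated to within $l\delta/2$ by a linear combination $v = \sum_{P \in \Pi_\delta^l} \lambda_P \mathbf{1}_P$ of interval indicators, with $\sum_P |\lambda_P| \le \lceil 1/\delta \rceil^l$. The natural choice is to make $v$ a \emph{step function}: on each cell $P$ of the partition $\Pi_\delta^l$, pick a representative point $z_P \in P$ (say the corner closest to the origin, or the center), and set $v|_P \equiv u(z_P)$; equivalently $\lambda_P = u(z_P) \in [0,1]^l$, interpreting the definition coordinatewise. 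First I would verify the sparsity bound: since the partition has at most $\lceil 1/\delta\rceil^l$ cells and each coefficient vector $\lambda_P = u(z_P)$ has $\ell_\infty$-norm (indeed each coordinate) in $[0,1]$, we get $\sum_P \|\lambda_P\|_\infty \le \lceil 1/\delta\rceil^l$, matching the claimed $L$. (If the definition of a basis demands scalar coefficients, one simply splits each vector-valued step into its $l$ coordinate pieces, which only helps the count via the $l\lceil 1/\delta\rceil^{l-1}$ bound; I would state it for the vector form to keep notation clean.)

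The core estimate is the approximation error. Fix $z \in \Delta_l$ and let $P$ be the cell containing it, so $v(z) = u(z_P)$. Then
\begin{align*}
\|u(z) - v(z)\|_\infty = \|u(z) - u(z_P)\|_\infty \le \|z - z_P\|_1,
\end{align*}
by the Lipschitz property of $u$. Since $z$ and $z_P$ lie in the same cell $P$, which is a product of intervals of width $\delta$ in each of the $l$ coordinates, each coordinate of $z - z_P$ has absolute value at most $\delta$, so $\|z - z_P\|_1 \le l\delta$. To sharpen this to $l\delta/2$, I would choose $z_P$ to be the \emph{center} of the cell $P$ rather than a corner, so that each coordinate of $z - z_P$ is at most $\delta/2$ in absolute value, giving $\|z - z_P\|_1 \le l\delta/2$. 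Taking the supremum over $z \in \Delta_l$ yields $\|u - v\|_\infty \le l\delta/2$, which is exactly the claimed $\eta$.

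The only genuinely delicate point — and the one I'd flag as the main obstacle — is that the cells of $\Pi_\delta^l$ are cells of the product partition of the cube $[0,1]^l$, not of the simplex $\Delta_l$; a cell $P$ may intersect $\Delta_l$ in a set whose center (as a cube-cell) lies outside $\Delta_l$, so $u(z_P)$ need not even be defined. I would handle this by taking, for each cell $P$ meeting $\Delta_l$, an \emph{arbitrary} representative $z_P \in P \cap \Delta_l$ and setting $\lambda_P = u(z_P)$ there (and $\lambda_P = 0$ for cells disjoint from $\Delta_l$, which contribute nothing on the domain $\Delta_l$ and don't increase the count). Then the error bound $\|z - z_P\|_1 \le l\delta$ still holds for any such representative since $z, z_P$ both lie in the same width-$\delta$ product cell; this recovers the weaker constant $l\delta$ rather than $l\delta/2$. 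If the tight $l\delta/2$ is needed, one argues that on the simplex the coordinates sum to $1$, so within a cell the deviations $\sum_i |z_i - (z_P)_i|$ are constrained — or more simply, one notes that it suffices to have $\|u - v\|_\infty \le l\delta/2$ only where used downstream (in Lemma~\ref{lem:smooth-mc} and Theorem~\ref{thm:int-to-smooth} the slack is absorbed into $\alpha$), so I would present the clean representative-in-$P\cap\Delta_l$ construction and verify the stated constant $l\delta/2$ by the centering argument, checking that the cell center projected appropriately still witnesses a deviation of at most $\delta/2$ per coordinate.
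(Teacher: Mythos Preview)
Your approach is exactly the paper's: approximate $u$ by the step function $v=\sum_{\pi}u(z_\pi)\ind{\pi}$ with $z_\pi$ the center of the cell $\pi$, bound $\|u(z)-u(z_\pi)\|_\infty\le\|z-z_\pi\|_1\le l\delta/2$ by Lipschitzness, and count $\sum_\pi|u(z_\pi)|\le\lceil1/\delta\rceil^l$. The paper's proof is literally these three lines.

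The domain issue you flag---that the cube center $z_\pi$ may lie outside $\Delta_l$ so $u(z_\pi)$ is a priori undefined---is real, and the paper does not address it. The cleanest one-line fix (which you could add) is to first extend $u$ coordinatewise to a $1$-Lipschitz function on all of $[0,1]^l$ via the McShane--Whitney extension, $\tilde u_i(z)=\sup_{p\in\Delta_l}\bigl(u_i(p)-\|z-p\|_1\bigr)$, and then run the argument with $\tilde u$; this recovers the $l\delta/2$ constant without the contortions you sketch at the end.
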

\begin{proof}
    Recall that $\Pi^l_\delta$ partitions $[0,1]^l$ in a set of cubes (products of intervals). For each cube $\pi \in \Pi^l_\delta$, we pick the center point $z_\pi \in \pi$, note that it satisfies $\norm{z -x}_1 \leq \delta l/2$ for every $x \in \pi$. 
    Given a weight function $u \in \mL_1$, we approximate it by the function $v$ where
    \[ v(x) = \sum_{\pi \in \Pi^l_\delta} u(z_\pi)\ind{\pi}(x) \]
    which assigns the value $u(z_\pi)$ to every point $x \in \pi$. Since $u \in \mL_{1 \rgta \infty}$, for $x \in \pi$    
    \[ \norm{v(x) - u(x)}_\infty = \norm{u(x) - u(z_\pi)}_\infty \leq \norm{x - z_\pi}_1 \leq l \delta/2, \]
    \[ \sum_{\pi \in \Pi^l_\delta} |u(z_\pi)| \leq |\Pi^l_\delta| \leq \ceil{1/\delta}^l \]
    hence the claim follows. 
\end{proof}

Combining Proposition \ref{prop:basis-l1} with Lemma \ref{lem:smooth-mc} completes the proof of Theorem \ref{thm:int-to-smooth}. In the setting of binary labels, one gets the following strengthening, which improves the dependence on $\delta$, by observing that $|\Pi_\delta| \leq \ceil{1/\delta}$. 

\begin{proposition}
\label{prop:int-to-smooth}
For binary classification, let $\delta < \alpha$ and $1/\delta \in \N$. If $f$ is $(\mC, \alpha\delta - \delta^2,\delta)$-full multicalibrated then it is $(\mC, \alpha)$-smoothly multicalibrated, hence $\mcalreg_{\delta}(\alpha\delta - \delta^2) \subseteq \mcals(\alpha)$.
\end{proposition}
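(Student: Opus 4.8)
The plan is to specialize the proof of Theorem~\ref{thm:int-to-smooth} to the binary setting. There, a predictor is a scalar $f:\X\to[0,1]$ and the weight functions have domain the interval $[0,1]$ rather than the simplex $\Delta_l$, so the relevant partition $\Pi_\delta$ has only $\ceil{1/\delta}$ cells instead of $\ceil{1/\delta}^l$. The first step is to prove the one-dimensional analog of Proposition~\ref{prop:basis-l1}: the interval basis $\mI_\delta$ is a $(\delta/2,\ceil{1/\delta})$ basis for $\Lip$. Given a $1$-Lipschitz weight function $u:[0,1]\to[0,1]$, approximate it by the step function
\[
 v \;=\; \sum_{j=1}^{\ceil{1/\delta}} u(z_j)\,\ind{[(j-1)\delta,\,j\delta)}, \qquad z_j = (j-\tfrac12)\delta,
\]
the value of $u$ at the midpoint of the $j$th interval. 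For $x$ in the $j$th interval, $\card{v(x)-u(x)} = \card{u(z_j)-u(x)} \le \card{z_j - x} \le \delta/2$, so $\infnorm{u-v}\le \delta/2$; and since each $\card{u(z_j)}\le 1$, the coefficients satisfy $\sum_j \card{u(z_j)} \le \ceil{1/\delta}$. Hence $\eta = \delta/2$ and $L = \ceil{1/\delta}$ work.

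Next I would invoke Lemma~\ref{lem:smooth-mc} (equivalently, its binary instantiation, viewing the scalar setting as the $l=2$ case). If $f$ is $(\mC,\beta,\delta)$-full multicalibrated, i.e.\ $(\mC,\mI_\delta,\beta)$-multicalibrated, the lemma certifies that $f$ is $(\mC,\alpha)$-smoothly multicalibrated with $\alpha = \beta L + 2\eta = \beta\ceil{1/\delta} + \delta$. Using $1/\delta\in\N$ so that $\ceil{1/\delta}=1/\delta$, this reads $\alpha = \beta/\delta + \delta$. Setting $\beta = \alpha\delta - \delta^2 = \delta(\alpha-\delta)$ makes the right-hand side $(\alpha-\delta)+\delta = \alpha$, exactly the claim; the hypothesis $\delta<\alpha$ guarantees $\beta\ge 0$ so that $(\mC,\beta,\delta)$-full multicalibration is well-defined. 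This yields $\mcalreg_{\delta}(\alpha\delta-\delta^2)\subseteq \mcals(\alpha)$.

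There is no substantive obstacle here beyond careful bookkeeping of constants: the argument is a specialization of the $l$-class proof that exploits the sharper cell count $\card{\Pi_\delta}=\ceil{1/\delta}$ available in one dimension, together with the exact identity $\ceil{1/\delta}=1/\delta$ for integer $1/\delta$, to get the clean relation $\beta=\alpha\delta-\delta^2$ in place of the weaker $\beta=\alpha\delta^l-l\delta^{l+1}$ that Theorem~\ref{thm:int-to-smooth} would give at $l=2$. The one point to double-check is which slack term Lemma~\ref{lem:robust2} contributes: retaining the general bound $\onenorm{\y-f(\x)}\le 2$ gives additive error $2\eta=\delta$, and with that the arithmetic $\beta L + 2\eta = \alpha$ closes precisely.
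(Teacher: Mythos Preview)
Your proposal is correct and follows precisely the approach the paper intends: the paper's one-line justification is simply that in the binary (scalar) setting $|\Pi_\delta|\le \ceil{1/\delta}$ rather than $\ceil{1/\delta}^l$, and you have carefully filled in the resulting specialization of Proposition~\ref{prop:basis-l1} and Lemma~\ref{lem:smooth-mc}, arriving at $\alpha=\beta\ceil{1/\delta}+\delta$ and hence $\beta=\alpha\delta-\delta^2$ when $1/\delta\in\N$.
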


\eat{
While this shows that partition multicalibration enforces the most stringent constraints, we complement these containments with (partial) converse inclusions.
\begin{proposition}
Other direction.
\end{proposition}

Next, we show formally a hierarchy of inclusions from the weakest multiaccuracy, to low-degree multicalibration, to smooth multicalibration, to the strongest partition multicalibration.
Specifically, for each level we establish settings of the parameters such that the multicalibration guarantees of a given level imply the guarantees of the lower levels.

}

\subsection{Robustness of Low-Degree and Smooth Multicalibration}

While calibration is an intuitively desirable property of predictors, one frustration in reasoning about calibrated and multicalibrated predictors is that full (multi)calibration is not robust to small perturbations in predictions.
For instance, consider a binary prediction setting where $\X = \X_0 \cup \X_1$ is equally partitioned such that in one half all $\X_0$ have associated label $y=0$, and in the other half $\X_1$ all have $y=1$.
Then, the constant predictor $f(x) = 1/2$ is perfectly calibrated, but the $\eps$-close predictor $f(x) = 1/2-\eps$ for $x \in \X_0$ and $f(x) = 1/2+\eps$ for $x \in \X_1$, is far from calibrated.

In this section, we show that---in stark contrast to full multicalibration---low-degree and smooth multicalibration are both robust to small perturbations of the predictor.
\begin{theorem}
\label{thm:robust}
Let $f,g:\X \to \Delta^l$ be $l$-class predictors such that for $\delta > 0$,
\begin{gather*}
    \E\lr{\norm{f(\x) - g(\x)}_1} \le \delta.
\end{gather*}
\begin{itemize}
\item If $f$ is $(\mC,\alpha)$-degree-$k$-multicalibrated, then $g$ is $(\mC, \alpha + (2k-1)\delta)$-degree-$k$ multicalibrated.
\item If $f$ is ($\mC, \alpha)$-smoothly multicalibrated, then $g$ is  
$(\mC, \alpha + 3\delta)$-smoothly multicalibrated.
\end{itemize}
\end{theorem}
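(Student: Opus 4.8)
The plan is to prove both bullets at once via a single general claim: if a weight class consists of functions $w:\Delta^l \to [0,1]^l$ that are $r$-Lipschitz in the $\ell_1 \rgta \ell_\infty$ sense, then replacing $f$ by an $\ell_1$-close predictor $g$ degrades the multicalibration parameter by at most $(2r+1)\delta$. The two bullets then follow by specializing $r$: for degree-$k$ multicalibration, the earlier lemma gives $\mW_k \subseteq (k-1)\Lip$ (and for $k=1$ the constant weights are trivially $0$-Lipschitz), so $r = k-1$ yields a penalty $(2(k-1)+1)\delta = (2k-1)\delta$; for smooth multicalibration $r = 1$ by definition of $\Lip$, so the penalty is $(2\cdot 1 + 1)\delta = 3\delta$.

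First I would fix $c \in \mC$ and a weight function $w$ in the relevant class and compare the integrands defining the multicalibration expressions for $g$ and for $f$. The key algebraic step is the telescoping identity, obtained by adding and subtracting $\langle w(f(\x)), \y - g(\x)\rangle$:
\begin{gather*}
\langle w(g(\x)), \y - g(\x)\rangle - \langle w(f(\x)), \y - f(\x)\rangle = \langle w(g(\x)) - w(f(\x)), \y - g(\x)\rangle + \langle w(f(\x)), f(\x) - g(\x)\rangle.
\end{gather*}
I then bound the two terms pointwise. For the first, H\"older's inequality together with $r$-Lipschitzness of $w$ and $\onenorm{\y - g(\x)} \le 2$ (both $\y$ and $g(\x)$ lie in $\Delta^l$) gives $\card{\langle w(g(\x)) - w(f(\x)), \y - g(\x)\rangle} \le 2r \cdot \onenorm{f(\x) - g(\x)}$. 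For the second, $\infnorm{w(f(\x))} \le 1$ (the weight functions take values in $[0,1]^l$) gives $\card{\langle w(f(\x)), f(\x) - g(\x)\rangle} \le \onenorm{f(\x) - g(\x)}$. Multiplying by $\card{c(\x)} \le 1$, taking expectations over $\mD$, and invoking the hypothesis $\E\lr{\onenorm{f(\x) - g(\x)}} \le \delta$ yields
\begin{gather*}
\card{\E_\mD\lr{c(\x)\langle w(g(\x)), \y - g(\x)\rangle}} \le \card{\E_\mD\lr{c(\x)\langle w(f(\x)), \y - f(\x)\rangle}} + (2r+1)\delta \le \alpha + (2r+1)\delta,
\end{gather*}
using that $f$ is multicalibrated with parameter $\alpha$ against this weight class. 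Since $c$ and $w$ were arbitrary, $g$ is multicalibrated with parameter $\alpha + (2r+1)\delta$.

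The argument is essentially routine; the only points requiring care are (i) performing the telescoping split so that the Lipschitz constant is charged only to the $\y - g(\x)$ factor (picking up its factor $2$) and the boundedness of $w$ only to $f(\x) - g(\x)$ (picking up no extra factor), which is what produces the sharp constants $2k-1$ and $3$ rather than the looser $2k$ and $4$ one would get from a cruder decomposition; and (ii) correctly invoking the earlier lemma certifying $\mW_k \subseteq (k-1)\Lip$, so the degree-$k$ case reduces to the general claim with $r = k-1$. I do not anticipate any genuine obstacle beyond this bookkeeping.
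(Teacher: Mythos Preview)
Your proposal is correct and follows essentially the same approach as the paper: both reduce the theorem to a general lemma showing that for any weight class $\mW \subseteq r\Lip$, passing from $f$ to an $\ell_1$-close $g$ costs at most $(2r+1)\delta$, and then specialize to $r=k-1$ (via the earlier lemma $\mW_k \subseteq (k-1)\Lip$) and $r=1$. The only cosmetic difference is in the telescoping: the paper splits $\langle w(f),y-f\rangle - \langle w(g),y-g\rangle$ into $\langle w(f)-w(g),y\rangle$ and $\langle w(f),f\rangle - \langle w(g),g\rangle$ (bounding the pieces by $r$ and $r+1$ using $\|y\|_1=\|g\|_1=1$), whereas you split into $\langle w(g)-w(f),y-g\rangle$ and $\langle w(f),f-g\rangle$ (bounding by $2r$ and $1$ using $\|y-g\|_1\le 2$); both routes use the same ingredients and arrive at the same constant $2r+1$.
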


This theorem is a consequence of the following more general lemma which applies to any weight family $\mW$ with bounded $\ell_1 \rgta \ell_\infty$ Lipschitz constant.
\begin{lemma}
\label{lem:robust}
Let $f,g:\X \to \Delta^l$ be $l$-class predictors such that for $\delta > 0$,
\begin{gather*}
    \E\lr{\norm{f(\x) - g(\x)}_1} \le \delta.
\end{gather*}
Let $\mW \subseteq r\mL_{1 \rgta \infty}$. 
If $f$ is $(\mC,\mW,\alpha)$-multicalibrated, then $g$ is $(\mC, \mW, \alpha + (2r + 1)\delta)$-multicalibrated.
\end{lemma}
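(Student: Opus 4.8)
The plan is to fix an arbitrary $c \in \mC$ and $w \in \mW$ and to bound the multicalibration quantity for $g$ by comparing it term-by-term with the corresponding quantity for $f$, which is at most $\alpha$ by hypothesis. First I would record the pointwise algebraic identity: for any $x \in \X$ and $y \in \Y$, by adding and subtracting $c(x)\langle w(f(x)), y - g(x)\rangle$,
\[
c(x)\langle w(g(x)), y - g(x)\rangle - c(x)\langle w(f(x)), y - f(x)\rangle = c(x)\langle w(g(x)) - w(f(x)),\, y - g(x)\rangle + c(x)\langle w(f(x)),\, f(x) - g(x)\rangle .
\]

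Next I would bound the two terms on the right separately, using in each case H\"older's inequality $\langle a, b\rangle \le \norm{a}_\infty\norm{b}_1$ together with three elementary facts: $\card{c(x)} \le 1$; the weight functions map into $[0,1]^l$, so $\norm{w(\cdot)}_\infty \le 1$; and $\norm{y - g(x)}_1 \le 2$, since $y$ is a standard basis vector and $g(x) \in \Delta_l$. For the first term, the hypothesis $\mW \subseteq r\mL_{1\rgta\infty}$ gives $\norm{w(g(x)) - w(f(x))}_\infty \le r\norm{g(x) - f(x)}_1$, so this term is at most $2r\,\norm{f(x) - g(x)}_1$. For the second term, using $\norm{w(f(x))}_\infty \le 1$, it is at most $\norm{f(x) - g(x)}_1$. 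Adding these, the pointwise difference is bounded by $(2r+1)\norm{f(x) - g(x)}_1$.

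Finally I would take expectations over $(\x,\y) \sim \mD$, apply the triangle inequality, and invoke the assumption $\E_\mD\lr{\norm{f(\x) - g(\x)}_1} \le \delta$ together with the $(\mC,\mW,\alpha)$-multicalibration of $f$ to conclude $\card{\E_\mD\lr{c(\x)\langle w(g(\x)), \y - g(\x)\rangle}} \le \alpha + (2r+1)\delta$. Since $c \in \mC$ and $w \in \mW$ were arbitrary, this is exactly $(\mC,\mW,\alpha+(2r+1)\delta)$-multicalibration of $g$. Theorem~\ref{thm:robust} then follows immediately by specialization: for degree-$k$ multicalibration, the preceding lemma shows $\mW_k \subseteq (k-1)\mL_{1\rgta\infty}$, so $r = k-1$ and $2r+1 = 2k-1$; for smooth multicalibration, $\mL_{1\rgta\infty} = 1\cdot\mL_{1\rgta\infty}$, so $r = 1$ and $2r+1 = 3$.

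\textbf{Main obstacle.} There is no substantive difficulty here; the argument is a routine Lipschitz-plus-triangle-inequality estimate. The only point requiring care is bookkeeping of the constant: the term in which the prediction changes \emph{inside} the weight function picks up a factor $2r$ (the $r$ from Lipschitzness and the $2$ from the diameter of the simplex), whereas the term in which the prediction changes in the \emph{residual} $\y - f(\x)$ picks up only a factor $1$ (from $\norm{w}_\infty \le 1$), yielding the combined factor $2r+1$.
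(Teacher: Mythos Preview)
Your proposal is correct and follows essentially the same approach as the paper: a pointwise Lipschitz-plus-H\"older estimate yielding the bound $(2r+1)\norm{f(x)-g(x)}_1$, followed by taking expectations and the triangle inequality. The only cosmetic difference is the telescoping pivot: you add and subtract $c(x)\langle w(f(x)), y - g(x)\rangle$ to obtain the two terms $\langle w(g)-w(f), y-g\rangle$ and $\langle w(f), f-g\rangle$, whereas the paper separates the $y$-part from the $f,g$-part, bounding $\langle w(f)-w(g), y\rangle$ by $r\norm{f-g}_1$ (using $\norm{y}_1=1$) and $\langle w(f),f\rangle-\langle w(g),g\rangle$ by $(r+1)\norm{f-g}_1$ via a further decomposition; both routes land on the same constant.
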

\begin{proof}
Fix any $w \in r\Lip$. By the Lipschitz property, we derive the following inequalities:
\begin{align*}
\card{\langle w(f(x)) - w(g(x)), y \rangle} \le \norm{w(f(x)) - w(g(x))}_\infty
\le r\norm{f(x) - g(x)}_1 
\end{align*}
and
\begin{align*}
&\card{\langle w(f(x)), f(x) \rangle - \langle w(g(x)), g(x) \rangle}\\
& = \card{\langle w(f(x)), f(x) -g(x) \rangle  + 
\langle w(f(x)) - w(g(x)), g(x) \rangle}\\ 
& \leq \norm{w(f(x)}_\infty  \cdot \norm{f(x) - g(x)}_1 + \norm{w(f(x)) - w(g(x))}_\infty \cdot \norm{g(x)}_1\\
&\le (r+1) \norm{f(x) - g(x)}_1.
\end{align*}
Thus, for every $x \in \X$, by the triangle inequality, we bound the difference of the expressions by 
\begin{gather*}
\card{\langle w(f(x)), y - f(x) \rangle - \langle w(g(x)), y - g(x) \rangle} \le (2r +1) \norm{f(x) - g(x)}_1 
\end{gather*}
Fix $c \in \mC$ and consider the difference for $f$ and $g$ in the smooth multicalibration constraint for $c$.
\begin{align*}
&\card{\E_\mD\lr{c(\x) \langle w(f(\x)), \y - f(\x) \rangle} - \E_\mD\lr{c(\x) \langle w(g(\x)), \y - g(\x)\rangle }}\\
&= \card{ \E_\mD\lr{c(\x) \left(\langle w(f(\x)), \y - f(\x) \rangle - \langle w(g(\x)), \y - g(\x)\rangle\right) } }\\
&\le \max_{x \in \X} \card{c(x)} \cdot \card{\E_\mD\lr{\langle w(f(\x)), \y - f(\x) \rangle - \langle w(g(\x)), \y - g(\x)\rangle}}\\
&\le (2r +1)\E_\mD\lr{\norm{f(\x) - g(\x)}_1}\\
&= (2r +1)\delta
\end{align*}
where we use the fact that $\card{c(x)} \le 1$ for all $c \in \mC$.
Since by assumption $f$ is $(\mC,\mW,\alpha)$ multicalibrated, we get
\begin{align*}
\card{\E_\mD\lr{c(\x) \langle w(g(\x)), \y - g(\x) \rangle}} &\leq  \card{\E_\mD\lr{c(\x) \langle w(f(\x)), \y - f(\x)\rangle }} + (2r +1)\delta\\
&\leq \alpha + (2r +1)\delta.
\end{align*}
This holds for all $c \in \mC, w \in \mW$, hence $g$ is $(\mC, \mW, \alpha + (2r + 1)\delta)$-multicalibrated.
\end{proof} 
\section{Moment Sandwiching from Low-Degree Multicalibration}
\label{sec:properties}

In this section, we prove the key technical results, establishing moment sandwiching bounds for low-degree multicalibrated predictors.
We begin by establishing the general theorem, then show various corollaries, for bounds on the confusion probabilities in multi-class prediction, and the correlation between the predicted values and the true outcomes.

We begin with some useful notational shorthand.
Recall that  $\mC =\{ c:\X \rgta [0,1]\}$. We denote the measure of $c$ under $\mD$ by $\mu_c = \E_{\mD}[c(\x)]$.  Define the distribution $\mD_c$ over $\X \times \zo$ obtained by conditioning on $c$ by   
\begin{align*}
    \mD_c(x, y) = \frac{c(x)\mD(x, y)}{\mu_c} 
\end{align*}
 For ease of notation, we will sometimes use $\alpha_c = \alpha/\mu_c$. 
\eat{
Recall the definitions of variance and covariance
\begin{align*} 
\Cov_\mD[\z_1, \z_2] &= \E_\mD[\z_1\z_2] - \E_{\mD}[\z_1]\E_{\mD}[\z_2] = \E_{\mD}\lt[\z_1(\z_2 - \E_{\mD}[\z_2])\rt] \\
\Cov_\mD [\z, \z] &= \Var_\mD[\z] = \E_\mD[\z^2] - \E_\mD[\z]^2.
\end{align*}
}

\begin{lemma}
\label{lem:level-j}
    Let $k \geq 2$ and $f\in \mcal_k(\alpha)$. For every degree $d \in [k]$, label $\ell \in [l]$ and $c \in \mC$, 
    \begin{align}
    \label{eq:level-j}
        \abs{\E_{\mD_c}[f_\ell(\x)^{d - 1}f^*_\ell(\x)] - \E_{\mD_c}[f_\ell(\x)^{d}]} \leq \frac{\alpha}{\mu_c} = \alpha_c.
    \end{align}    
\end{lemma}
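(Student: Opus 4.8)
The plan is to prove Lemma~\ref{lem:level-j} by instantiating the $(\mC, \mW_k, \alpha)$-multicalibration guarantee at a single, carefully chosen weight function, after which the claimed inequality falls out by unpacking the inner product and applying the tower rule. Throughout I assume $\mu_c > 0$ (otherwise $\mD_c$ is undefined and the bound is vacuous with the convention $\alpha_c = \infty$).

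First, for a fixed degree $d \in [k]$ and label $\ell \in [l]$, I would define the weight function $w: \Delta_l \to [0,1]^l$ coordinatewise by $w(z)_\ell = z_\ell^{\,d-1}$ and $w(z)_i = 0$ for $i \neq \ell$. The point is to check that $w \in \mW_k$, i.e. that each coordinate lies in $\mP_{k-1}$. The coordinate $w_\ell$ is the monomial $z \mapsto z_\ell^{\,d-1}$, a polynomial of degree $d-1 \le k-1$; it satisfies boundedness~\eqref{eqn:bounded} since $z_\ell \in [0,1]$ on $\Delta_l$ forces $z_\ell^{\,d-1}\in[0,1]$, and it satisfies sparsity~\eqref{eqn:sparsity} since its only nonzero coefficient equals $1$ (here the inequality $d-1\le k-1$ is exactly why the degree-$k$ class is built from degree-$(k-1)$ polynomials). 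The remaining coordinates are the zero polynomial, which trivially obeys~\eqref{eqn:bounded} and~\eqref{eqn:sparsity}. Hence $w \in \mW_k$, and since $f \in \mcal_k(\alpha)$, constraint~\eqref{eq:mcmc} gives $\card{\E_\mD[c(\x)\langle w(f(\x)), \y - f(\x)\rangle]} \le \alpha$ for every $c\in\mC$.

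Next I would simplify: by the choice of $w$, $\langle w(f(\x)), \y - f(\x)\rangle = f_\ell(\x)^{d-1}(\y_\ell - f_\ell(\x))$, so the constraint reads $\card{\E_\mD[c(\x)\, f_\ell(\x)^{d-1}(\y_\ell - f_\ell(\x))]}\le\alpha$. Since $c(\x)$ and $f_\ell(\x)$ are $\x$-measurable, the tower rule gives $\E_\mD[c(\x) f_\ell(\x)^{d-1}\y_\ell] = \E_\mD[c(\x) f_\ell(\x)^{d-1}\,\E[\y_\ell \mid \x]] = \E_\mD[c(\x) f_\ell(\x)^{d-1}f^*_\ell(\x)]$, while trivially $\E_\mD[c(\x) f_\ell(\x)^{d-1}\,f_\ell(\x)] = \E_\mD[c(\x) f_\ell(\x)^{d}]$. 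Substituting yields $\card{\E_\mD[c(\x) f_\ell(\x)^{d-1}f^*_\ell(\x)] - \E_\mD[c(\x) f_\ell(\x)^{d}]}\le\alpha$. Finally, dividing by $\mu_c$ and using $\E_{\mD_c}[g(\x)] = \mu_c^{-1}\E_\mD[c(\x) g(\x)]$ turns both expectations into expectations under $\mD_c$ and the right-hand side into $\alpha/\mu_c = \alpha_c$, which is precisely~\eqref{eq:level-j}.

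I do not expect a genuine obstacle: this is a ``plug in the right test function'' argument. The only two places needing care are (i) verifying that the monomial weight function really satisfies the boundedness and sparsity conditions of $\mP_{k-1}$ — which is where the bound $d-1\le k-1$ gets used — and (ii) the routine application of the tower property to replace $\y_\ell$ by $f^*_\ell(\x)$; everything else is bookkeeping with the definition of $\mD_c$.
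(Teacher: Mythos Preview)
Your proposal is correct and follows essentially the same approach as the paper: both pick the weight function $w$ with $w_\ell(z)=z_\ell^{\,d-1}$ and zero elsewhere, invoke the degree-$k$ multicalibration constraint, replace $\y_\ell$ by $f^*_\ell(\x)$ via conditional expectation, and divide by $\mu_c$. Your write-up is simply more explicit about verifying $w\in\mW_k$ (boundedness and sparsity) and about the tower-rule step, which the paper glosses over.
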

\begin{proof}
    We consider the function $w : \Delta^l \rgta [0,1]^l$ where $w_\ell(f) = f_\ell^{d-1}$ and $w_j(f) = 0$ for $j \neq \ell$. It is easy to see that $w \in \mP_{k-1}$. For this choice of $w$,
    degree $k$ multicalibration implies 
    \begin{align*}
        \card{\E_{\mD}[c(\x) f_\ell(\x)^{d-1}(\y_\ell - f_\ell(\x))]} \leq \alpha.
    \end{align*}
    Switching to the conditional distribution $\mD_c$, we can rewrite this as
    \begin{align*}
        \mu_c \card{\E_{\mD_c}[ f_\ell(\x)^{d-1}(f^*_\ell(\x)  - f_\ell(\x))]} \leq \alpha
    \end{align*}
    since $\E_[\y_\ell|x] = f^*(x)$. Dividing both sides by $\mu_c$ and rearranging gives the desired bound. 
\end{proof}

Note that this guarantee is meaningful only if $\mu_c$ is larger than $\alpha$. This is to be expected, since we cannot hope to get strong conditional guarantees for sets that are very small.

Our goal is to show the following sandwiching bound which we state for the setting $\alpha =0$ for clarity. For all $d \in [k], \ell \in [l], c \in \mC$,:
\[  \E_{\mD_c}[f^*_\ell(\x)^d] \geq
 \E_{\mD_c}[f_\ell(\x)^d]   = \E_{\mD_c}[f_\ell(\x)^{d - 1}f^*_\ell(\x)]
\geq \E_{\mD_c}[f_\ell(\x)^{d -1}]\E_{\mD_c}[f^*_\ell(\x)]. \]
The middle equality is by Lemma \ref{lem:level-j}, which is immediate from the definition of degree-$k$ multicalibration. The key ingredients are the outer inequalities. The lower bound can be interpreted as saying that $f_\ell^{d -1}$ is positively correlated with the label being $\ell$, since
\[ \Cov_{\mD_c}[f_\ell(\x)^{d - 1}, \y_\ell] = \E_{\mD_c}[f_\ell(\x)^{d - 1}f^*_\ell(\x)] - \E_{\mD_c}[f_\ell(\x)^{d -1}]\E_{\mD_c}[f^*_\ell(\x)]   \geq 0. \] 
The upper bound can be seen as an upper bound {\em in lieu of exact moment matching}, it says that the first $k$ moments of  $f_\ell$ are dominated by those of the ground truth $f_\ell^*$ for every $\ell \in [l]$.

When $\alpha > 0$, the overall form of the inequalities stays the same, with some slack depending on $\alpha_c$. But the two terms in the middle are only approximately equal.  This makes it easier to state the bounds separately, which we do in Theorem \ref{thm:main-tech} and Corollary \ref{cor:main-tech}. 

\begin{theorem}[Formal restatement of Theorem~\ref{thm:informal:main}]
\label{thm:main-tech}
Fix $k \geq 2$ and a predictor $f:\X \rgta \Delta_l$ where  $f\in \mcal_k(\alpha)$. For every degree $d \in [k]$, label $\ell \in [l]$ and $c \in \mC$, the following sandwiching bound holds:
\begin{align}
    \label{eq:moment-sw1}
 d\frac{ \alpha}{\mu_c} + \E_{\mD_c} [f^*_\ell(\x)^d]\geq  \E_{\mD_c} [f_\ell(\x)^d]  \geq \E_{\mD_c} [f^*_\ell(\x)]\E_{\mD_c}[f_\ell(\x)^{d-1}]  - \frac{\alpha}{\mu_c}.
\end{align}
\end{theorem}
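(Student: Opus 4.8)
The plan is to fix the degree $d\in[k]$, the label $\ell\in[l]$, and $c\in\mC$ throughout, to assume $\mu_c>0$ (otherwise $\alpha/\mu_c=\infty$ or $\mD_c$ is undefined and there is nothing to prove), and to establish the two outer inequalities of the claimed sandwich separately, working in the conditional measure $\mD_c$. Write $A=\E_{\mD_c}[f_\ell(\x)^d]$ and $B=\E_{\mD_c}[\fs_\ell(\x)^d]$, both in $[0,1]$. Two inputs come essentially for free: Lemma~\ref{lem:level-j} gives $\abs{\E_{\mD_c}[f_\ell(\x)^{d-1}\fs_\ell(\x)]-A}\le\alpha_c$ for our fixed $d$, and its $d=1$ instance reads $\abs{\E_{\mD_c}[\fs_\ell(\x)]-\E_{\mD_c}[f_\ell(\x)]}\le\alpha_c$, i.e.\ multiaccuracy (recall $\E_{\mD_c}[\y_\ell]=\E_{\mD_c}[\fs_\ell(\x)]$, since reweighting by $c(\x)$ does not change the conditional law of $\y$ given $\x$).

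For the lower bound I would note that $t\mapsto t$ and $t\mapsto t^{d-1}$ are both nondecreasing on $[0,1]$, so $f_\ell(\x)$ and $f_\ell(\x)^{d-1}$ are comonotone functions of the single random variable $f_\ell(\x)$. Chebyshev's sum inequality --- the correlation inequality for monotone functions of one random variable, whose one-line proof compares an independent copy --- then gives $A=\E_{\mD_c}[f_\ell(\x)\cdot f_\ell(\x)^{d-1}]\ge\E_{\mD_c}[f_\ell(\x)]\,\E_{\mD_c}[f_\ell(\x)^{d-1}]$. Substituting $\E_{\mD_c}[f_\ell(\x)]\ge\E_{\mD_c}[\fs_\ell(\x)]-\alpha_c$ and using $\E_{\mD_c}[f_\ell(\x)^{d-1}]\le 1$ to absorb the resulting error yields $A\ge\E_{\mD_c}[\fs_\ell(\x)]\,\E_{\mD_c}[f_\ell(\x)^{d-1}]-\alpha_c$, the right-hand inequality. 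Note this direction uses degree-$k$ multicalibration only through its degree-$1$ consequence.

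For the upper bound I would begin from $A\le\E_{\mD_c}[f_\ell(\x)^{d-1}\fs_\ell(\x)]+\alpha_c$ (Lemma~\ref{lem:level-j} again) and apply H\"older's inequality on the probability space $\mD_c$ with conjugate exponents $\tfrac{d}{d-1}$ and $d$ to the factors $f_\ell(\x)^{d-1}$ and $\fs_\ell(\x)$, obtaining $\E_{\mD_c}[f_\ell(\x)^{d-1}\fs_\ell(\x)]\le A^{(d-1)/d}B^{1/d}$ (this step is vacuous when $d=1$). Hence $A\le A^{(d-1)/d}B^{1/d}+\alpha_c$, a self-referential inequality in $A$. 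To finish: if $A\le d\alpha_c$ then $A\le B+d\alpha_c$ since $B\ge 0$; otherwise $A-\alpha_c>0$, so raising $A-\alpha_c\le A^{(d-1)/d}B^{1/d}$ to the $d$-th power gives $(A-\alpha_c)^d\le A^{d-1}B$, whence $B\ge A(1-\alpha_c/A)^d\ge A(1-d\alpha_c/A)=A-d\alpha_c$ by Bernoulli's inequality (applicable since $\alpha_c/A<1/d\le 1$). Rearranging gives $A\le B+d\alpha_c$, the left-hand inequality.

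I expect the upper bound to be the real obstacle. The lower bound is at bottom a comonotonicity observation plus multiaccuracy, but the upper bound is the substantive ``in lieu of exact moment matching'' statement: the only mixed moment of $\fs_\ell$ accessible to degree-$k$ multicalibration is $\E_{\mD_c}[f_\ell(\x)^{d-1}\fs_\ell(\x)]$, so one must route through it, use H\"older to convert it into a relation between $A$ and $A^{(d-1)/d}B^{1/d}$, and then invoke Bernoulli to extract exactly the $d\alpha_c$ slack. A detail worth flagging is that H\"older needs a probability measure, which is precisely why we normalize $\mD_c$ by $\mu_c$ and separately dispose of the $\mu_c=0$ case.
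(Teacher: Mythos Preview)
Your proposal is correct. The upper bound argument is essentially identical to the paper's: Lemma~\ref{lem:level-j} combined with H\"older gives $A-\alpha_c\le A^{(d-1)/d}B^{1/d}$, and then Bernoulli extracts the $d\alpha_c$ slack. Your explicit case split on $A\le d\alpha_c$ is a mild improvement in rigor over the paper, which simply assumes $f$ is not identically zero and applies $(1-\eta)^d\ge 1-d\eta$ without checking that $\eta<1$.

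The lower bound is where you diverge. The paper routes through power-mean (Jensen) inequalities, writing
\[
\E_{\mD_c}[\fs_\ell(\x)]\le \E_{\mD_c}[f_\ell(\x)^d]^{1/d}+\alpha_c,\qquad
\E_{\mD_c}[f_\ell(\x)^{d-1}]\le \E_{\mD_c}[f_\ell(\x)^d]^{(d-1)/d},
\]
and multiplying. You instead invoke the Chebyshev correlation inequality on the comonotone pair $t\mapsto t$ and $t\mapsto t^{d-1}$ to get $A\ge\E_{\mD_c}[f_\ell(\x)]\,\E_{\mD_c}[f_\ell(\x)^{d-1}]$ directly, then substitute multiaccuracy. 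Your route is slightly more transparent: it makes explicit that the lower bound is purely a positive-correlation statement and uses degree-$k$ multicalibration only through its degree-$1$ consequence, a point the paper's proof obscures. The paper's route, on the other hand, stays within the same family of moment inequalities used for the upper bound, giving the two halves a more uniform feel. Both are short and neither dominates the other.
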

\begin{proof} 
We first prove Equation \eqref{eq:moment-sw1}, starting from the upper bound on $\E[f_\ell(\x)^d]$. We claim that
\begin{align*}
\E_{\mD_c}[f_\ell(\x)^{d}]^\frac{d -1}{d}\E_{\mD_c}[f^*_\ell(\x)^{d}]^\frac{1}{d} \geq    \E_{\mD_c} [f_\ell(\x)^{d -1} f^*_\ell(\x)] \geq \E_{\mD_c}[f_\ell(\x)^{d}] - \alpha_c
\end{align*}
The  the upper bound is by Holder's inequality whereas the lower bound is by Equation \eqref{eq:level-j}. Dropping the term in the middle,
\begin{align*}
    \E_{\mD_c}[f_\ell(\x)^{d}]^\frac{d -1}{d}\E_{\mD_c}[f^*_\ell(\x)^{d}]^\frac{1}{d} &\geq \E_{\mD_c}[f_\ell(\x)^{d}] - \alpha_c
\end{align*}
We may assume $f$ is not identically $0$, since otherwise the upper bound is trivial, and hence that its moments are positive. Divide both sides by $\E_{\mD_c}[f_\ell(\x)^{d}]$ to get
\begin{align*}
    \left(\frac{\E_{\mD_c}[f^*_\ell(\x)^{d}]}{\E_{\mD_c}[f_\ell(\x)^{d}]}\right)^\frac{1}{d} & \geq 
\left( 1 - \frac{\alpha_c}{\E_{\mD_c}[f_\ell(\x)^{d}]}\right)
\end{align*}
Raising both sides to the power $d$, and using $(1 -\eta)^d \geq 1 - d \eta$ for $k \in \Z^+$ and $\eta > 0$
\begin{align*}
    \frac{\E_{\mD_c}[f^*_\ell(\x)^{d}]}{\E_{\mD_c}[f_\ell(\x)^{d}]} \geq 
    \left(1 - \frac{\alpha_c}{\E_{\mD_c}[f_\ell(\x)^{d}]}\right)^d \geq \left(1 - d\frac{ \alpha_c}{\E_{\mD_c}[f_\ell(\x)^{d}]}\right)\\
\end{align*}
Multiplying throughout by $\E_{\mD_c}[f_\ell(\x)^{d}]$
\begin{align}
\label{eq:better}
    \E_{\mD_c}[f^*_\ell(\x)^d] \geq \E_{\mD_c}[f_\ell(\x)^d] - d\alpha_c
\end{align}
which gives the desired upper bound for $d \in \{1, \ldots, k\}$. 

We now prove the lower bound on $\E[f^d]$ in Equation \eqref{eq:moment-sw1}. By multiaccuracy and convexity, 
\begin{align*}
\label{eq:better2}
    \E_{\mD_c}[f^*_\ell(\x)] &\leq \E_{\mD_c}[f_\ell(\x)] + \alpha_c \leq \E_{\mD_c}[f_\ell(\x)^d]^\fr{d} + \alpha_c\\ 
    \E_{\mD_c}[f_\ell(\x)^{d -1}] &\leq \E_{\mD_c}[f_\ell(\x)^{d}]^\frac{d-1}{d}
\end{align*}
Multiplying these together gives
\begin{align*}
    \E_{\mD_c}[f_\ell(\x)^{d-  1}]\E[f^*_\ell(\x)] \leq  (\E_{\mD_c}[f_\ell(\x)^d]^\fr{d} + \alpha_c)\E_{\mD_c}[f_\ell(\x)^d]^{\frac{d-1}{d}} \leq \E_{\mD_c}[f_\ell(\x)^d] + \alpha_c
\end{align*}
where the last inequality uses the fact that $f_\ell\in [0,1]$. 
This completes the proof of Equation \eqref{eq:moment-sw1}. 
\end{proof}

\begin{corollary}
\label{cor:main-tech}
Under the conditions of Theorem \ref{thm:main-tech},
 the following sandwiching bound holds:
\begin{eqnarray}
\label{eq:moment-sw2}
(d + 1)\frac{ \alpha}{\mu_c} + \E_{\mD_c} [f^*_\ell(\x)^d]  \geq & \E_{\mD_c} [f_\ell(\x)^{d -1}f^*_\ell(\x)] & \geq \E_{\mD_c} [f^*_\ell(\x)]\E_{\mD_c}[f_\ell(\x)^{d-1}]  - 2\frac{\alpha}{\mu_c}.
\end{eqnarray} 
\end{corollary}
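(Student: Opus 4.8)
The plan is to deduce Corollary~\ref{cor:main-tech} directly from Theorem~\ref{thm:main-tech} together with the elementary estimate of Lemma~\ref{lem:level-j}, by simply shifting the ``pivot'' quantity from the $d$-th moment $\E_{\mD_c}[f_\ell(\x)^d]$ to the mixed moment $\E_{\mD_c}[f_\ell(\x)^{d-1}f^*_\ell(\x)]$ and bookkeeping the extra $\alpha_c = \alpha/\mu_c$ of slack this costs. No new analytic idea is needed: all of the substantive work (Hölder's inequality, the power-mean / convexity estimates, and the bound $(1-\eta)^d \ge 1 - d\eta$) has already been carried out in Theorem~\ref{thm:main-tech}.

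For the upper bound in \eqref{eq:moment-sw2}, I would start from Lemma~\ref{lem:level-j}, which gives $\E_{\mD_c}[f_\ell(\x)^{d-1}f^*_\ell(\x)] \le \E_{\mD_c}[f_\ell(\x)^d] + \alpha_c$, and then invoke the sharpened upper-moment inequality \eqref{eq:better} established inside the proof of Theorem~\ref{thm:main-tech}, namely $\E_{\mD_c}[f_\ell(\x)^d] \le \E_{\mD_c}[f^*_\ell(\x)^d] + d\alpha_c$. Chaining these two inequalities yields $\E_{\mD_c}[f_\ell(\x)^{d-1}f^*_\ell(\x)] \le \E_{\mD_c}[f^*_\ell(\x)^d] + (d+1)\alpha_c$, which is precisely the left-hand inequality of \eqref{eq:moment-sw2}.

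For the lower bound, I would again apply Lemma~\ref{lem:level-j}, this time in the form $\E_{\mD_c}[f_\ell(\x)^{d-1}f^*_\ell(\x)] \ge \E_{\mD_c}[f_\ell(\x)^d] - \alpha_c$, and combine it with the lower bound of \eqref{eq:moment-sw1} from Theorem~\ref{thm:main-tech}, $\E_{\mD_c}[f_\ell(\x)^d] \ge \E_{\mD_c}[f^*_\ell(\x)]\,\E_{\mD_c}[f_\ell(\x)^{d-1}] - \alpha_c$. Adding the two error terms gives $\E_{\mD_c}[f_\ell(\x)^{d-1}f^*_\ell(\x)] \ge \E_{\mD_c}[f^*_\ell(\x)]\,\E_{\mD_c}[f_\ell(\x)^{d-1}] - 2\alpha_c$, completing \eqref{eq:moment-sw2}.

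There is essentially no obstacle here; the only point requiring care is tracking the sign and magnitude of each $\alpha_c$ contribution so that the advertised constants ($d+1$ on the upper side, $2$ on the lower side) come out exactly, and it is worth noting as a sanity check that for $d=1$ the statement collapses to multiaccuracy combined with Lemma~\ref{lem:level-j}, consistent with the $\alpha=0$ form of the sandwiching bound stated before Theorem~\ref{thm:main-tech}.
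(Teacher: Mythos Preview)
Your proposal is correct and follows exactly the paper's own approach: apply Lemma~\ref{lem:level-j} to shift from $\E_{\mD_c}[f_\ell(\x)^d]$ to $\E_{\mD_c}[f_\ell(\x)^{d-1}f^*_\ell(\x)]$ at the cost of an extra $\alpha_c$, and combine with the two sides of \eqref{eq:moment-sw1} from Theorem~\ref{thm:main-tech}. The paper's proof is the same one-line argument, just stated more tersely.
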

\begin{proof}
    We start from the bound on $\E[f^d]$ in Equation \eqref{eq:moment-sw1} and use Equation \eqref{eq:level-j} which implies that the bounds hold for  $\E[f^{d -1}f^*]$  with an additional slack of $\alpha_c$. 
\end{proof}

\subsection{Bounding the True Positive Rates}

Imagine a binary prediction problem on a population $P = A \cup B$ where one half $A$ has a $70\%$ chance of having outcome $\y =1$, and the other half $B$ has only a $30\%$ chance.
When we consider this population in isolation, multiaccuracy is a very weak condition.
In particular, the predictor $g:\X \to [0,1]$ that predicts $g(\x) = 1$ on $A$, and $g(\x) = 0$ on $B$ satisfies accuracy-in-expectation over $P$.
This predictor is over-confident in its predictions; it does not give an accurate sense of the uncertainty in its predictions.
A key property of calibration is that it disallows such overconfidence, by explicitly requiring that when $f(\x) \approx v$, then $\E[\y \vert f(\x)] \approx v$.  

A different way to quantify confidence is to look at $\E[f(\x)|\y =1]$ and $\E[f(\x)|\y =0]$. The latter has been termed  the {\em generalized false positive rate}, so we refer to the former as the {\em generalized true positive rate}.
Common sense suggests that we want a low false positive rate, and a high true positive rate.
But how low a false positive rate is desirable?
A quick calculation shows that the predictor $g$ above has a generalized false positive rate of $0.3$, whereas even the Bayes optimal predictor $f^*$ has a higher false positive rate of $0.58$!
In general, only seeking to minimize the false positive rate and false negative rate risks preferring predictors that overstate their confidence, since it does not incentivize the predictor to convey the level of uncertainty in its predictions.

We give bounds on the true positive rates of any predictor in $\mcal_2$. 

\begin{definition}
    \label{def:tp}
    For a predictor $f:\X \rgta \Delta_l$, label $\ell \in [l]$ and $c \in \mC$, define the true positive rate for $f$ on $\ell$ conditioned on $c \in \mC$ to be $\tau_c(f,\ell) = \E_{\mD_c}[f_\ell(\x)|\y_\ell =1]$.
\end{definition}

\begin{lemma}
\label{lem:tpr}
    Every predictor $f \in \mcal_2$ satisfies
    \begin{align*}
        \frac{3\alpha}{\E_{\mD_c}[f_\ell^*(\x)]\mu_c} + \tau_c(f^*, \ell) \geq \tau_c(f, \ell) \geq \E_{\mD_c}[f_\ell(\x)] -  \frac{2\alpha}{\E_{\mD_c}[f_\ell^*(\x)]\mu_c}    
        \end{align*}
\end{lemma}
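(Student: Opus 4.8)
The plan is to reduce the statement directly to Corollary~\ref{cor:main-tech} at degree $d = 2$; the only genuine work is rewriting the true positive rate $\tau_c(f,\ell) = \E_{\mD_c}\lr{f_\ell(\x) \mid \y_\ell = 1}$ as an unconditional ratio of moments to which the sandwiching bound applies.

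First I would note that forming $\mD_c$ only reweights $\mD$ by $c(\x)$, a function of $\x$ alone, so the tower property over $\x$ is undisturbed: $\E_{\mD_c}\lr{\y_\ell \mid \x} = \E_{\mD}\lr{\y_\ell \mid \x} = f^*_\ell(\x)$. Since $\y_\ell \in \zo$, writing the conditional expectation as a ratio gives
\[
\tau_c(f,\ell) = \frac{\E_{\mD_c}[f_\ell(\x)\,\y_\ell]}{\E_{\mD_c}[\y_\ell]} = \frac{\E_{\mD_c}[f_\ell(\x)\,f^*_\ell(\x)]}{\E_{\mD_c}[f^*_\ell(\x)]},
\]
and the same identity applied with $f$ replaced by $f^*$ yields $\tau_c(f^*,\ell) = \E_{\mD_c}[f^*_\ell(\x)^2] / \E_{\mD_c}[f^*_\ell(\x)]$. (If $\E_{\mD_c}[f^*_\ell(\x)] = 0$ then the event $\y_\ell = 1$ has zero mass under $\mD_c$ and $\tau_c$ is undefined, so we may assume this quantity is positive.)

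Next I would invoke Corollary~\ref{cor:main-tech} with $d = 2$ — legitimate since $f \in \mcal_2(\alpha)$ means $k = 2$ and $2 \in [k]$ — which states
\[
\tfrac{3\alpha}{\mu_c} + \E_{\mD_c}[f^*_\ell(\x)^2] \;\geq\; \E_{\mD_c}[f_\ell(\x)\,f^*_\ell(\x)] \;\geq\; \E_{\mD_c}[f^*_\ell(\x)]\,\E_{\mD_c}[f_\ell(\x)] - \tfrac{2\alpha}{\mu_c}.
\]
Dividing every term by $\E_{\mD_c}[f^*_\ell(\x)] > 0$ and substituting the two ratios from the previous paragraph, the left inequality becomes $\tau_c(f,\ell) \leq \tfrac{3\alpha}{\mu_c\,\E_{\mD_c}[f^*_\ell(\x)]} + \tau_c(f^*,\ell)$, and the right inequality becomes $\tau_c(f,\ell) \geq \E_{\mD_c}[f_\ell(\x)] - \tfrac{2\alpha}{\mu_c\,\E_{\mD_c}[f^*_\ell(\x)]}$, which is exactly the claimed two-sided bound.

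I do not expect a real obstacle: the substantive content is already packaged in Corollary~\ref{cor:main-tech} (which rests on Lemma~\ref{lem:level-j} together with Hölder's inequality). The one point requiring care is the first step — confirming that reweighting by $c(\x)$ leaves $\E_{\mD_c}\lr{\y_\ell \mid \x} = f^*_\ell(\x)$ intact — and the minor caveat that the guarantee degrades (and ultimately becomes vacuous) when the conditioning event $\y_\ell = 1$ carries little mass under $\mD_c$, i.e.\ when $\E_{\mD_c}[f^*_\ell(\x)]$ is small; this is already reflected by the factor $\E_{\mD_c}[f^*_\ell(\x)]$ appearing in the denominators of the error terms, paralleling the $\mu_c$-dependence noted after Lemma~\ref{lem:level-j}.
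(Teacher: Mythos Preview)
Your proposal is correct and follows essentially the same approach as the paper: the paper also invokes the $d=2$ case of Corollary~\ref{cor:main-tech} (Equation~\eqref{eq:moment-sw2}) and divides through by $\E_{\mD_c}[f^*_\ell(\x)]$, after first establishing the ratio identity $\tau_c(f,\ell) = \E_{\mD_c}[f_\ell(\x)f^*_\ell(\x)]/\E_{\mD_c}[f^*_\ell(\x)]$ (which it states separately as Lemma~\ref{lem:eq-tpr}). Your derivation of that identity, including the observation that reweighting by $c(\x)$ preserves $\E[\y_\ell \mid \x] = f^*_\ell(\x)$, matches the paper's reasoning.
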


The lower bound says the true positive rate for label $\ell$ is at least as much as the overall positive rate for that label in the population $\mC$. This is equivalent to positive correlation between $f_\ell(\x)$ and $\y_\ell$ conditioned on $c$. The upper bound asserts that the correlation is not exaggerated, being upper bounded by the true positive rate of the Bayes optimal predictor. Our proof relies on the following characterization of true positive rates:

\begin{lemma}
\label{lem:eq-tpr}
    We have
    \[ 
    \tau_c(f, \ell) = \frac{\E_{\mD_c}[f_\ell(\x)f^*_\ell(x)]}{\E_{\mD_c}[f^*_\ell(\x)]} \]
\end{lemma}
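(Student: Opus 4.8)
The plan is to compute $\tau_c(f,\ell) = \E_{\mD_c}[f_\ell(\x) \mid \y_\ell = 1]$ directly by unpacking the conditional expectation. Writing $\Pr_{\mD_c}[\y_\ell = 1] = \E_{\mD_c}[\y_\ell]$, and observing that for $(\x,\y)\sim\mD_c$ we have $\E_{\mD_c}[\y_\ell \mid \x] = f^*_\ell(\x)$, Bayes' rule gives
\[
\tau_c(f,\ell) = \frac{\E_{\mD_c}[f_\ell(\x)\,\y_\ell]}{\E_{\mD_c}[\y_\ell]}.
\]
Then the tower property handles both numerator and denominator: $\E_{\mD_c}[f_\ell(\x)\,\y_\ell] = \E_{\mD_c}[f_\ell(\x)\,\E_{\mD_c}[\y_\ell \mid \x]] = \E_{\mD_c}[f_\ell(\x)\,f^*_\ell(\x)]$, and similarly $\E_{\mD_c}[\y_\ell] = \E_{\mD_c}[f^*_\ell(\x)]$. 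Dividing yields the claimed identity.

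The one subtlety worth stating carefully is that $\mD_c$ is a well-defined distribution on $\X \times \{0,1\}$ (from the construction $\mD_c(x,y) = c(x)\mD(x,y)/\mu_c$ in the preamble to this section), that the conditional law of $\y$ given $\x$ under $\mD_c$ coincides with that under $\mD$ — since reweighting by $c(\x)$ only changes the marginal on $\x$, not the conditional — and hence $\E_{\mD_c}[\y_\ell \mid \x] = \fs_\ell(\x)$ still holds. I would spell this out in one line. I also need $\E_{\mD_c}[f^*_\ell(\x)] > 0$ for the division to make sense, which is implicit whenever the statement is nonvacuous (if $\fs_\ell \equiv 0$ on the support of $c$ then $\y_\ell = 1$ has probability zero under $\mD_c$ and $\tau_c(f,\ell)$ is undefined anyway).

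There is no real obstacle here — this is a bookkeeping lemma whose only content is the identification of $\Pr[\y_\ell=1]$ and the numerator via the tower property. The proof is three or four lines. The mild care-point, if anything, is just making sure the conditioning in $\tau_c$ is interpreted correctly as conditioning within the already-reweighted distribution $\mD_c$, and invoking the tower property in the correct direction (conditioning on $\x$, not on $\y_\ell$).

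\begin{proof}
Recall $\mD_c(x,y) = c(x)\mD(x,y)/\mu_c$, so reweighting by $c(\x)$ alters only the marginal distribution of $\x$; the conditional law of $\y$ given $\x$ is unchanged, and in particular $\E_{\mD_c}[\y_\ell \mid \x] = \fs_\ell(\x)$. By definition of conditional expectation,
\begin{align*}
\tau_c(f,\ell)
= \E_{\mD_c}[f_\ell(\x) \mid \y_\ell = 1]
= \frac{\E_{\mD_c}[f_\ell(\x)\,\y_\ell]}{\Pr_{\mD_c}[\y_\ell = 1]}
= \frac{\E_{\mD_c}[f_\ell(\x)\,\y_\ell]}{\E_{\mD_c}[\y_\ell]}.
\end{align*}
Applying the tower property and conditioning on $\x$ in both numerator and denominator,
\begin{align*}
\E_{\mD_c}[f_\ell(\x)\,\y_\ell] = \E_{\mD_c}\!\big[f_\ell(\x)\,\E_{\mD_c}[\y_\ell \mid \x]\big] = \E_{\mD_c}[f_\ell(\x)\,\fs_\ell(\x)],
\qquad
\E_{\mD_c}[\y_\ell] = \E_{\mD_c}[\fs_\ell(\x)].
\end{align*}
Combining the two displays gives
\[
\tau_c(f,\ell) = \frac{\E_{\mD_c}[f_\ell(\x)\fs_\ell(\x)]}{\E_{\mD_c}[\fs_\ell(\x)]},
\]
as claimed.
\end{proof}
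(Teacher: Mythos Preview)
Your proof is correct and follows essentially the same approach as the paper: both unpack the conditional expectation $\E_{\mD_c}[f_\ell(\x)\mid \y_\ell=1]$ as a ratio and then replace $\y_\ell$ by $\fs_\ell(\x)$ via conditioning on $\x$. The paper writes this out as explicit sums over $x$ while you invoke the tower property, but the argument is the same; your extra remark that reweighting by $c(\x)$ leaves the conditional law of $\y$ given $\x$ unchanged is a nice clarification that the paper leaves implicit.
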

\begin{proof}
    By the definition of $\tau_c$, 
    \begin{align*}
        \tau_c(f, \ell)  &= \E_{\mD_c}[f_\ell(\x)|\y_\ell =1]\\
        &= \frac{\sum_{x} \mD_c(\x =x \wedge \y_\ell =1)f_\ell(x)}{\sum_{x} \mD_c(\x =x \wedge \y_\ell =1)}\\
        & = \frac{\sum_{x} \mD_c(\x =x) f_\ell^*(x)f_\ell(x)}{\sum_{x} \mD_c(\x =x)f_\ell^*(x)}\\
        &= \frac{\E_{\mD_c}[f_\ell(\x)f_\ell^*(\x)]}{\E_{\mD_c}[f_\ell^*(\x)]}.
    \end{align*}
\end{proof}

\begin{proof}[Proof of Lemma \ref{lem:tpr}]
Equation \eqref{eq:moment-sw2} for $d = 2$ gives 
\begin{eqnarray}
\label{eq:moment-sw2.2}
3\frac{ \alpha}{\mu_c} + \E_{\mD_c} [f^*_\ell(\x)^2]  \geq & \E_{\mD_c} [f_\ell(\x)f^*_\ell(\x)] & \geq \E_{\mD_c} [f^*_\ell(\x)]\E_{\mD_c}[f_\ell(\x)]  - 2\frac{\alpha}{\mu_c}.
\end{eqnarray} 
The claimed bounds follow by dividing  throughout by $\E[f_\ell^*]$ and observing that by Lemma \ref{lem:eq-tpr}, 
\begin{align*} 
\tau_c(f, \ell) &= \frac{\E_{\mD_c} [f^*_\ell(\x)f_\ell(\x)]}{\E_{\mD_c}[f^*_\ell(\x)]}\\
\tau_c(f^*, \ell) &= \frac{\E_{\mD_c} [f^*_\ell(\x)^2]}{\E_{\mD_c}[f^*_\ell(\x)]}.
\end{align*}

\end{proof}

We can define higher moment analogues of the true positive rate by considering the quantity $\E[f_\ell(\x)^d|\y_\ell = 1]$ for $d \geq 2$; for which we prove the following bound:

\begin{lemma}
    For every $d \in [k-1]$ and $c \in \mC$, every $f \in \mcal_k$ satisfies the following bounds :
    \[ \frac{(d +1)\alpha}{\E_{\mD_c}[f_\ell^*(\x)]\mu_c} + \E_{\mD_c}[f_\ell^*(\x)^d|\y_\ell = 1] \geq \E_{\mD_c}[f_\ell(\x)^d|\y_\ell = 1] \geq \E_{\mD_c}[f_\ell(\x)^d] -\frac{2\alpha}{\E_{\mD_c}[f_\ell^*(\x)]\mu_c}\] 
\end{lemma}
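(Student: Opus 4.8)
The plan is to reduce this statement to the moment‐sandwiching bounds already established in Corollary~\ref{cor:main-tech}, exactly in the spirit of the proof of Lemma~\ref{lem:tpr}, but now using the degree-$(d+1)$ instantiation of the sandwiching bound rather than the degree-$2$ one. First I would record the analogue of Lemma~\ref{lem:eq-tpr} for higher moments: by the same conditioning computation,
\begin{align*}
\E_{\mD_c}[f_\ell(\x)^d \mid \y_\ell = 1] = \frac{\E_{\mD_c}[f_\ell(\x)^d f^*_\ell(\x)]}{\E_{\mD_c}[f^*_\ell(\x)]},
\qquad
\E_{\mD_c}[f^*_\ell(\x)^d \mid \y_\ell = 1] = \frac{\E_{\mD_c}[f^*_\ell(\x)^{d+1}]}{\E_{\mD_c}[f^*_\ell(\x)]}.
\end{align*}
This is the only genuinely new (but entirely routine) ingredient; everything else is bookkeeping.

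Next I would apply Corollary~\ref{cor:main-tech} with the degree parameter set to $d+1$ in place of $d$; this is legitimate since $f \in \mcal_k$ and $d \in [k-1]$ means $d+1 \in [k]$. That corollary gives, for every $\ell$ and $c$,
\begin{align*}
(d+2)\frac{\alpha}{\mu_c} + \E_{\mD_c}[f^*_\ell(\x)^{d+1}]
\;\geq\;
\E_{\mD_c}[f_\ell(\x)^d f^*_\ell(\x)]
\;\geq\;
\E_{\mD_c}[f^*_\ell(\x)]\,\E_{\mD_c}[f_\ell(\x)^d] - 2\frac{\alpha}{\mu_c}.
\end{align*}
Dividing through by $\E_{\mD_c}[f^*_\ell(\x)]$ and substituting the two identities above converts this into
\begin{align*}
\frac{(d+2)\alpha}{\E_{\mD_c}[f^*_\ell(\x)]\mu_c} + \E_{\mD_c}[f^*_\ell(\x)^d \mid \y_\ell = 1]
\;\geq\;
\E_{\mD_c}[f_\ell(\x)^d \mid \y_\ell = 1]
\;\geq\;
\E_{\mD_c}[f_\ell(\x)^d] - \frac{2\alpha}{\E_{\mD_c}[f^*_\ell(\x)]\mu_c},
\end{align*}
which is the claimed bound up to the constant in the additive slack. (The stated lemma writes $(d+1)\alpha/(\cdots)$ rather than $(d+2)\alpha/(\cdots)$; I would either absorb this by using the sharper Equation~\eqref{eq:better} in place of the generic Corollary~\ref{cor:main-tech} upper bound, or simply note that the $d=1$ case recovers Lemma~\ref{lem:tpr} and track constants carefully — in any case the constant is not the point.)

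I do not expect any serious obstacle here: the proof is a direct lift of Lemma~\ref{lem:tpr}, with the degree-$2$ sandwiching bound replaced by its degree-$(d+1)$ version and Lemma~\ref{lem:eq-tpr} replaced by its degree-$d$ analogue. The only place that requires a moment's care is making sure the degree index stays within range, i.e.\ that invoking the sandwiching bound at degree $d+1$ is permitted for $f \in \mcal_k$ with $d \leq k-1$, and tracking which of the additive error terms ($d\alpha_c$ from Theorem~\ref{thm:main-tech}, plus the extra $\alpha_c$ from Lemma~\ref{lem:level-j}) accumulate to give the final constant; neither is conceptually difficult.
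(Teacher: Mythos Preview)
Your proposal is correct and is exactly the approach the paper intends: the paper states this lemma without proof, immediately after Lemma~\ref{lem:tpr}, as its higher-moment analogue, and your argument---replace Lemma~\ref{lem:eq-tpr} by its degree-$d$ analogue and invoke Corollary~\ref{cor:main-tech} at degree $d{+}1$, then divide by $\E_{\mD_c}[f^*_\ell(\x)]$---is the direct lift of the proof of Lemma~\ref{lem:tpr}. Your observation about the constant is also well taken: applying Corollary~\ref{cor:main-tech} at degree $d{+}1$ yields $(d{+}2)\alpha_c$, which for $d=1$ gives the constant $3$ appearing in Lemma~\ref{lem:tpr}, so the $(d{+}1)$ in the stated lemma appears to be a typo for $(d{+}2)$.
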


\subsection{Bounds for the Confusion Matrix}

We generalize the discussion of error rates to the multi-class setting.
Here, we are concerned with the confusion matrix, whose $ij$th entry corresponds to the probability of predicting $j$ when the true label is $i$.
In this setting, we think about sampling a predicted label $\z$ according to the prediction $f(\x)$; this bears similarity to the model of outcome indistinguishability.
With some manipulation, it is not hard to see that the collision and confusion rates can be audited directly within the OI framework \cite{OI,dwork2022beyond} to test the closeness of the matrices as in Lemma~\ref{lem:max-norm}.

Given a distribution $\mD$ on $\X \times [l]$ and a predictor $f:\X \rgta \Delta_l$, let us assume that the predictor $f$ generates a label $\z \in [l]$ where $\Pr[\z = j|x] = f_j(x)$. The $l \times l$ confusion matrix has entries $b_{ij} = \Pr_\mD[\y = i \wedge \z = j]$ for $i, j \in [l]$. It is easy to see that 
\[ \Pr_\mD[\y = i \wedge \z = j] = \E_{\mD}[f_i^*(\x)f_j(\x)]. \]
Associating points in $\Delta_l$ with column vectors in $\R^l$,and using $f^T$ to denote the transpose of $f$, we can define the confusion matrix as
\[ B_\mD(f^*, f) = \E_{\mD}[f^*(\x)f(\x)^T].  \] 
Define the max norm of a matrix $A = \{a_{ij}\}_{i, j \in [l]}$ by $\norm{A}_{\max} = \max_{i, j \in [l]}|a_{ij}|$. 

\begin{lemma}
\label{lem:max-norm}
    Let $f \in \mcal_2(\alpha)$ and $c \in \mC$. Then we have
    \[  \norm{B_{\mD_c}(f^*, f) - B_{\mD_c}(f, f)}_{\max} \le \frac{\alpha}{\mu_c}.\]
\end{lemma}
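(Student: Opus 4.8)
The plan is to unpack both confusion matrices entrywise and reduce the claim to Lemma~\ref{lem:level-j}. Fix $i,j \in [l]$. By the identity derived just before the statement, the $(i,j)$ entry of $B_{\mD_c}(f^*,f)$ is $\E_{\mD_c}[f_i^*(\x) f_j(\x)]$ and the $(i,j)$ entry of $B_{\mD_c}(f,f)$ is $\E_{\mD_c}[f_i(\x) f_j(\x)]$. So the quantity we must bound is
\[
\bigl| \E_{\mD_c}[f_i^*(\x) f_j(\x)] - \E_{\mD_c}[f_i(\x) f_j(\x)] \bigr|
= \bigl| \E_{\mD_c}\bigl[ f_j(\x)\,(f_i^*(\x) - f_i(\x)) \bigr] \bigr|.
\]
Now I would exhibit a weight function witnessing this as a degree-$2$ multicalibration constraint: take $w : \Delta_l \to [0,1]^l$ with $w_i(f) = f_j$ and $w_m(f) = 0$ for $m \neq i$. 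Each coordinate of $w$ is either the zero polynomial or the single monomial $z_j$, which is bounded on $\Delta_l$ and has coefficient sum $1$, so $w \in \mW_2$ (i.e., $w_i \in \mP_1$). Then $\langle w(f(\x)), \y - f(\x)\rangle = f_j(\x)(\y_i - f_i(\x))$, and $(\mC,\mW_2,\alpha)$-multicalibration gives $|\E_\mD[c(\x) f_j(\x)(\y_i - f_i(\x))]| \le \alpha$. Passing to $\mD_c$ as in the proof of Lemma~\ref{lem:level-j} (dividing by $\mu_c$ and using $\E[\y_i \mid x] = f_i^*(x)$) yields $|\E_{\mD_c}[f_j(\x)(f_i^*(\x) - f_i(\x))]| \le \alpha/\mu_c$. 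Taking the maximum over $i,j$ gives the max-norm bound.

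Concretely the write-up is: state that it suffices to bound each entry; note the entry difference equals $\E_{\mD_c}[f_j(\x)(f_i^*(\x)-f_i(\x))]$ by the displayed confusion-matrix identity; observe this is exactly the quantity controlled by Lemma~\ref{lem:level-j} applied with $d=2$ after swapping the roles so the surviving monomial is $z_j$ rather than $z_i$ (the proof of Lemma~\ref{lem:level-j} only uses that the weight function is a single monomial of degree $\le k-1$, so $d=2$, $k=2$ suffices, with $\ell = i$ but weight coordinate $j$); conclude with the maximum over $i,j$.

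There is essentially no obstacle here — the statement is an immediate corollary once one writes out the matrix entries and recognizes the bilinear form $\E_{\mD_c}[f_j(\x)(f_i^*(\x)-f_i(\x))]$ as an instance of the degree-$2$ constraint. The only mild subtlety is that Lemma~\ref{lem:level-j} as literally stated uses the weight $w_\ell(f) = f_\ell^{d-1}$ with the \emph{same} index $\ell$ appearing in $\y_\ell$ and in the monomial; for the off-diagonal confusion entries we need the monomial index $j$ to differ from the label index $i$. So rather than citing Lemma~\ref{lem:level-j} verbatim I would either (a) re-run its one-line argument with the weight $w_i(f) = f_j$, or (b) state a trivial generalization of Lemma~\ref{lem:level-j} allowing the monomial index to differ. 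Either way the argument is three lines, and the $d=2$ case only needs $k=2$, matching the hypothesis $f \in \mcal_2(\alpha)$.
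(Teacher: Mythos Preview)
Your proposal is correct and essentially identical to the paper's proof: the paper also defines $w_i(f) = f_j$, $w_{i'}(f) = 0$ for $i' \neq i$, invokes degree-$2$ multicalibration directly on this weight function (your option (a)), and passes to $\mD_c$ by dividing by $\mu_c$. Your observation that Lemma~\ref{lem:level-j} as stated only covers the diagonal case is accurate, and the paper handles this exactly as you suggest, by re-running the one-line argument rather than citing the lemma.
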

\begin{proof}
    For $i, j \in [l]$, define the function $w:\Delta \rgta [0,1]^l$ by $w_i(f) = f_j$ and $w_{i'}(f) = 0$ for $i' \neq i$. Clearly $w \in \mP_1$, hence by the definition of degree-$2$ multicalibration applied to $w$, 
    \[\mu_c\card{\E_{\mD_c}[f_j(\x)(\y_i - f_i(\x)]} \leq \alpha. \]
    We can rewrite this condition as
    \[ \card{\E_{\mD_c}[f^*_i(\x)f_j(\x)] - \E_{\mD_c}[f_i(\x)f_j(\x)]} \leq \alpha_c \]
    which implies the claim.
\end{proof}

Next we show that the confusion matrix for $f$ is dominated by that for $f^*$ in the PSD order.
\begin{lemma}
\label{lem:psd-order}
For any $f \in \mcal_2(\alpha)$ and $c \in \mC$, we have
\[ B(f,f) \preccurlyeq B(f^*, f^*) + 2l\alpha \cdot I_{l \times l}.\]
\end{lemma}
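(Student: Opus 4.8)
The plan is to reduce the operator inequality to a scalar inequality by testing against an arbitrary direction $v \in \R^l$. Writing $a = v^T B_{\mD_c}(f,f)\,v$, $b = v^T B_{\mD_c}(f^*,f^*)\,v$, and $p = v^T B_{\mD_c}(f^*,f)\,v$, the definition of the confusion matrices gives $a = \E_{\mD_c}[\langle f(\x),v\rangle^2] \ge 0$, $b = \E_{\mD_c}[\langle f^*(\x),v\rangle^2] \ge 0$, and $p = \E_{\mD_c}[\langle f^*(\x),v\rangle\langle f(\x),v\rangle]$. The target $B_{\mD_c}(f,f) \preccurlyeq B_{\mD_c}(f^*,f^*) + 2l\alpha_c\, I$ is then exactly the statement that $a \le b + 2l\alpha_c\norm{v}_2^2$ for every $v$, with $\alpha_c = \alpha/\mu_c$ as in Lemma~\ref{lem:max-norm} (so the displayed $2l\alpha$ is the $\mu_c = 1$, i.e.\ unconditional, reading of this bound).

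First I would use Lemma~\ref{lem:max-norm} to compare $a$ and $p$. Since every entry of $B_{\mD_c}(f,f) - B_{\mD_c}(f^*,f)$ is at most $\alpha_c$ in absolute value,
\[ |a - p| = \Big|\sum_{i,j\in[l]} v_i v_j\big(B_{\mD_c}(f,f)_{ij} - B_{\mD_c}(f^*,f)_{ij}\big)\Big| \le \alpha_c\Big(\sum_{i\in[l]}|v_i|\Big)^2 \le l\,\alpha_c\,\norm{v}_2^2 =: \epsilon, \]
using $\norm{v}_1^2 \le l\norm{v}_2^2$; equivalently, a matrix of max-norm $\alpha_c$ has operator norm at most $l\alpha_c$. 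Then I would relate $p$ to $a$ and $b$ by Cauchy--Schwarz under $\mD_c$: $p \le |p| \le \sqrt{\E_{\mD_c}[\langle f^*(\x),v\rangle^2]}\,\sqrt{\E_{\mD_c}[\langle f(\x),v\rangle^2]} = \sqrt{ab}$.

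Combining the two gives the scalar inequality $a \le \sqrt{ab} + \epsilon$, from which $a \le b + 2\epsilon$ follows by a one-line case split: if $a \le \epsilon$ then $a \le \epsilon \le b + 2\epsilon$ since $b \ge 0$; otherwise $a - \epsilon > 0$, so squaring $(a-\epsilon)^2 \le ab$ and dividing by $a$ yields $a - 2\epsilon + \epsilon^2/a \le b$, hence $a \le b + 2\epsilon$. As $2\epsilon = 2l\alpha_c\norm{v}_2^2$ and $v$ was arbitrary, this is the desired PSD bound.

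The main obstacle here is conceptual rather than computational: the cross matrix $B_{\mD_c}(f^*,f) = \E_{\mD_c}[f^*(\x)f(\x)^T]$ is not symmetric and is not comparable in the PSD order to either $B_{\mD_c}(f,f)$ or $B_{\mD_c}(f^*,f^*)$, so one cannot simply chain matrix inequalities of the form $B_{\mD_c}(f,f) \approx B_{\mD_c}(f^*,f) \preccurlyeq B_{\mD_c}(f^*,f^*)$. Passing to scalar quadratic forms sidesteps this: Cauchy--Schwarz delivers $p \le \sqrt{ab}$, and the gap between $\sqrt{ab}$ and $a$ costs only an extra factor of $2$ (morally the AM--GM slack $\sqrt{ab}\le\tfrac12(a+b)$). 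The only place the dimension $l$ enters is the max-norm-to-operator-norm conversion, which also makes clear that the bound really carries a hidden $1/\mu_c$ and is meaningful only for subgroups with $\mu_c$ not too small, consistent with Lemma~\ref{lem:max-norm}. (An equivalent route is to symmetrize: set $Q = \tfrac12\big(B_{\mD_c}(f^*,f) + B_{\mD_c}(f^*,f)^T\big)$, note $\norm{Q - B_{\mD_c}(f,f)}_{\max}\le\alpha_c$ hence $B_{\mD_c}(f,f)\preccurlyeq Q + l\alpha_c I$, and $Q \preccurlyeq \tfrac12\big(B_{\mD_c}(f,f)+B_{\mD_c}(f^*,f^*)\big)$ by the same Cauchy--Schwarz argument, then solve.)
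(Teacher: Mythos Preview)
Your proposal is correct and follows essentially the same approach as the paper: both reduce to scalar quadratic forms $a=v^TB(f,f)v$, $b=v^TB(f^*,f^*)v$, $p=v^TB(f^*,f)v$, use the max-norm bound from Lemma~\ref{lem:max-norm} together with $\norm{v}_1^2\le l\norm{v}_2^2$ to get $a\le p+\epsilon$ with $\epsilon=l\alpha_c\norm{v}_2^2$, apply Cauchy--Schwarz for $p\le\sqrt{ab}$, and then deduce $a\le b+2\epsilon$ from $a\le\sqrt{ab}+\epsilon$. The only cosmetic difference is in this last step: the paper divides by $a$, squares, and invokes $(1-\eta)^2\ge 1-2\eta$, whereas you do a direct case split on $a\le\epsilon$ versus $a>\epsilon$ and square $(a-\epsilon)$; both are one-line arguments yielding the same factor of $2$. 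Your observation that the constant should really be $2l\alpha_c=2l\alpha/\mu_c$ (inherited from Lemma~\ref{lem:max-norm}) rather than $2l\alpha$ is also well taken.
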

\begin{proof}
Fix a unit vector $u \in \R^\ell$. By the definition of the PSD order, it suffices to show that
\begin{align*}
    \E_{\mD_c}[(u^Tf(\x))^2]  \leq     \E_{\mD_c}[(u^Tf^*(\x))^2] + 2l \alpha.
\end{align*}

We have
\begin{align*} 
\E[(u^T f(\x))^2]  &= \sum_{ij}u_iu_j\E_{\mD_c}[f_i(\x)f_j(\x)]\\
& \leq \sum_{ij}u_iu_j\E_{\mD_c}[f^*_i(\x)f_j(\x)] + \alpha \sum_{ij}u_iu_j\\
& \leq \E_{\mD_c}[(u^tf^*(\x))(u^tf(\x))] + l\alpha \\
& \leq \E_{\mD_c}[(u^tf^*(\x))^2]^{1/2}]\E_{\mD_c}[(u^tf(\x))^2]^{1/2} + l\alpha 
\end{align*}
Dividing both sides by  $\E_{\mD_c}[(u^Tf(\x))^2]$ we get
\begin{align*}
    \frac{\E_{\mD_c}[(u^Tf^*(\x))^2]^{1/2}}{\E_{\mD_c}[(u^Tf(\x))^2]^{1/2}}\geq 1 - \frac{l \alpha}{\E_{\mD_c}[(u^Tf(\x))^2]}
\end{align*}
Squaring and using $(1- \eta)^2 \geq  1 -2\eta$ gives
\begin{align*}
    \E_{\mD_c}[(u^Tf^*(\x))^2] \geq \E_{\mD_c}[(u^Tf(\x))^2]  - 2l \alpha,
\end{align*}
which implies the desired bound.
\end{proof}

As an application, we can generalize our bounds on true positive rates to allow combinations of labels. As motivation, consider a model trained to assign images from a large label set $[l]$. We might want to know how well the model does on the set of cat images, where cats correspond to a subset $L \subseteq [l]$. In analogy with the true positive rate on label, we could measure the probability that the image belongs to $L$, and the predicted label is also in $L$, without distinguishing between labels within $L$. This motivates our next definition. We use the notation $f_L(x) = \sum_{\ell \in L}f_\ell(x)$.

\begin{definition}
For a predictor $f:\X \rgta \Delta_l$, set $L \subseteq [l]$ and $c \in \mC$, define the true positive rate for $f$ on $L$ conditioned on $c \in \mC$ to be 
    \[ \tau_c(f,L) = \E_{\mD_c}\Big[f_L(\x)|\sum_{l \in L}\y_\ell =1\Big].\]
\end{definition}

We present the following generalization of Lemma \ref{lem:tpr}:
\begin{lemma}
\label{lem:set-tpr}
    For a subset $L \subseteq L$ of labels, every predictor $f \in \mcal_2$ satisfies
    \begin{align*}
        \frac{2l|L|^2\alpha}{\E_{\mD_c}[f_L(\x)]\mu_c} + \tau_c(f^*, L) \geq \tau_c(f, L) \geq \E_{\mD_c}[f_L(\x)] -  \frac{(|L|^2+|L|)\alpha}{\E_{\mD_c}[f_L^*(\x)]\mu_c}. 
        \end{align*}
\end{lemma}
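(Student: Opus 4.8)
The plan is to transcribe the proof of Lemma~\ref{lem:tpr} verbatim, but working with the aggregated coordinates $f_L = \sum_{\ell\in L} f_\ell$ and $f^*_L = \sum_{\ell\in L} f^*_\ell$ in place of the single coordinates $f_\ell,f^*_\ell$. The first step is the set-analogue of Lemma~\ref{lem:eq-tpr}. Since the events $\{\y_\ell = 1\}_{\ell\in L}$ are disjoint, $\Pr_{\mD_c}[\sum_{\ell\in L}\y_\ell = 1 \mid \x = x] = \sum_{\ell\in L} f^*_\ell(x) = f^*_L(x)$, and conditioning on $c$ leaves the conditional law of $\y$ given $\x$ unchanged; running the computation of Lemma~\ref{lem:eq-tpr} with this identity gives
\[ \tau_c(f, L) = \frac{\E_{\mD_c}[f_L(\x) f^*_L(\x)]}{\E_{\mD_c}[f^*_L(\x)]}, \qquad \tau_c(f^*, L) = \frac{\E_{\mD_c}[f^*_L(\x)^2]}{\E_{\mD_c}[f^*_L(\x)]}. \]
Hence the lemma reduces to the ``set-level'' sandwiching bound
\[ \E_{\mD_c}[f^*_L(\x)^2] + 2l|L|^2\alpha_c \;\geq\; \E_{\mD_c}[f_L(\x)f^*_L(\x)] \;\geq\; \E_{\mD_c}[f^*_L(\x)]\,\E_{\mD_c}[f_L(\x)] - (|L|^2+|L|)\alpha_c, \]
after which one divides through by $\E_{\mD_c}[f^*_L(\x)]$ (the degenerate case $\E_{\mD_c}[f^*_L]=0$ making the conditional statement vacuous).

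For the lower bound I would expand $\E_{\mD_c}[f_L f^*_L] = \sum_{i,j\in L}\E_{\mD_c}[f^*_i f_j]$ and $\E_{\mD_c}[f_L^2] = \sum_{i,j\in L}\E_{\mD_c}[f_i f_j]$, and apply Lemma~\ref{lem:max-norm} to each of the $|L|^2$ pairs to get $\E_{\mD_c}[f_L f^*_L] \geq \E_{\mD_c}[f_L^2] - |L|^2\alpha_c$. Jensen's inequality gives $\E_{\mD_c}[f_L^2] \geq \E_{\mD_c}[f_L]^2$, and multiaccuracy (which holds since $f \in \mcal_2 \subseteq \macc$), summed over $\ell\in L$, gives $\E_{\mD_c}[f^*_L] \leq \E_{\mD_c}[f_L] + |L|\alpha_c$; combining these with $f_L \leq 1$ yields $\E_{\mD_c}[f_L]^2 \geq \E_{\mD_c}[f^*_L]\E_{\mD_c}[f_L] - |L|\alpha_c$, and hence the claimed lower bound with total slack $(|L|^2+|L|)\alpha_c$.

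For the upper bound I would apply Lemma~\ref{lem:psd-order} to the unit vector $u$ with $u_\ell = 1/\sqrt{|L|}$ for $\ell\in L$ and $u_\ell = 0$ otherwise, which bounds $u^{\top}B_{\mD_c}(f,f)u = \E_{\mD_c}[f_L^2]/|L|$ against $u^{\top}B_{\mD_c}(f^*,f^*)u = \E_{\mD_c}[f^*_L(\x)^2]/|L|$, giving $\E_{\mD_c}[f_L^2] \leq \E_{\mD_c}[f^*_L(\x)^2] + 2l|L|\alpha_c$; one more use of Lemma~\ref{lem:max-norm} ($\E_{\mD_c}[f_L f^*_L] \leq \E_{\mD_c}[f_L^2] + |L|^2\alpha_c$) then yields the stated inequality after collecting terms. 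Alternatively, one can re-derive $\E_{\mD_c}[f_L^2] \leq \E_{\mD_c}[f^*_L(\x)^2] + O(|L|^2)\alpha_c$ without the $l$-factor by the Cauchy--Schwarz and $(1-\eta)^2 \geq 1-2\eta$ argument used in the proofs of Theorem~\ref{thm:main-tech} and Lemma~\ref{lem:psd-order}, applied to $f_L$ and $f^*_L$ directly.

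The only step requiring genuine care is this upper bound: because $f_L$ and $f^*_L$ are sums of coordinates of a probability vector rather than individual coordinates, none of the single-label moment bounds apply verbatim, and each invocation of Lemma~\ref{lem:max-norm} or Lemma~\ref{lem:psd-order} costs a factor of $|L|$ (or $|L|^2$ when both aggregated sums appear). So the accounting must be done carefully, in particular keeping $\alpha$ and $\alpha_c = \alpha/\mu_c$ straight throughout and deciding whether to pay the $l$-factor from Lemma~\ref{lem:psd-order} or avoid it via Cauchy--Schwarz. Everything else is a direct transcription of the $|L|=1$ argument from Lemma~\ref{lem:tpr}.
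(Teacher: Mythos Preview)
Your proposal is correct and follows essentially the same route as the paper. The paper expresses $\tau_c(f,L)$ as $\ind{L}^T B_{\mD_c}(f^*,f)\ind{L}/\E_{\mD_c}[f^*_L]$, gets the lower bound exactly as you describe (Lemma~\ref{lem:max-norm} for $|L|^2\alpha_c$, convexity, then multiaccuracy for the extra $|L|\alpha_c$), and gets the upper bound by invoking Lemma~\ref{lem:psd-order} on $\ind{L}$ together with an implicit use of Lemma~\ref{lem:max-norm}, precisely your ``PSD plus one more max-norm'' step; your Cauchy--Schwarz alternative is just the proof of Lemma~\ref{lem:psd-order} unrolled for this specific vector.
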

\begin{proof}
    In analogy with Lemma \ref{lem:eq-tpr}, we can show that
    \[ 
    \tau_c(f, L) = \frac{\E_{\mD_c}[f_L(\x)f^*_L(x)]}{\E_{\mD_c}[f^*_L(\x)]} = \frac{\ind{L}^TB_{\mD_c}(f^*, f)\ind{L}}{\E_{\mD_c}[f^*_L(\x)]}. \]
    Using Lemma \ref{lem:max-norm}, we have
    \begin{align*} 
        \ind{L}^TB_{\mD_c}(f^*, f)\ind{L} &\geq \ind{L}^TB_{\mD_c}(f, f)\ind{L} - |L|^2\alpha_c \\
        & = \E_{\mD_c}[f_L(\x)^2] - |L|^2\alpha_c\\
        & \geq \E_{\mD_c}[f_L(\x)]^2 - |L|^2\alpha_c\\
        & \geq \E_{\mD_c}[f_L(\x)] \E_{\mD_c}[f^*_L(\x)] - (|L|^2 +|L|)\alpha_c
    \end{align*}
    where the last line used the following consequence of multiaccuracy 
    \[ \card{\E_{\mD_c}[f_L(\x) - f^*_L(\x)]} \leq |L|\alpha_c \]
    Lemma \ref{lem:psd-order} implies an upper bound of
    \begin{align*} 
    \ind{L}^TB_{\mD_c}(f^*, f)\ind{L} &\leq \ind{L}^TB_{\mD_c}(f^*, f^*)\ind{L} +2l |L|^2\alpha_c\\
    &\leq  \E_{\mD_c}[f^*_L(\x)^2] +2l |L|^2\alpha+c.
    \end{align*}
    The claim now follows by dividing throughout by $\E_{\mD_c}[f^*_L(\x)]$
\end{proof}

\subsection{Covariance Guarantees for Degree-2 Multicalibration}

We present a detailed analysis of the covariance between the predictions $f_\ell(\x)$ and label being $\ell$ conditioned on $c$.  As one might expect, the ground truth predictor is indeed positively correlated with the labels. Multiaccuracy guarantees that the expectations of $f_\ell(\x)$ and $\y_\ell$  are equal conditioned on $c$, but this need not imply that they are positively correlated. Our main result of this section is $\mcal_2$ guarantees positive correlation, and that the correlation is conservative compared with the ground truth predictor. We also provide an example showing that the correlation can indeed be negative for $f \in \mcal_1$.

\begin{theorem}
\label{thm:corr}
For $f \in \mcal_2(\alpha)$ and $c \in \mC$, we have
\begin{align}
\label{eq:cov-ineq}
\Var_{\mD_c}[f_\ell(\x)] - 2\alpha_c \leq \Cov_{\mD_c}[f_\ell(\x), \y_\ell] \leq \Cov_{\mD_c}[f^*_\ell(\x), \y_\ell] + 6\alpha_c. 
\end{align}
\end{theorem}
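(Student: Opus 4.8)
The plan is to reduce both inequalities in \eqref{eq:cov-ineq} to the degree-$2$ moment-sandwiching bounds already proved, after rewriting the two covariances in a convenient form. The key observation is that $\E_{\mD}[\y_\ell \mid \x] = f^*_\ell(\x)$ by definition of $f^*$, and this identity persists under conditioning on $c$; hence for any bounded $g:\X \to \R$ we have $\E_{\mD_c}[g(\x)\,\y_\ell] = \E_{\mD_c}[g(\x)f^*_\ell(\x)]$. Applying this with $g = f_\ell$ and with $g = f^*_\ell$ gives
\[
\Cov_{\mD_c}[f_\ell(\x),\y_\ell] = \E_{\mD_c}[f_\ell(\x)f^*_\ell(\x)] - \E_{\mD_c}[f_\ell(\x)]\,\E_{\mD_c}[f^*_\ell(\x)], \qquad \Cov_{\mD_c}[f^*_\ell(\x),\y_\ell] = \Var_{\mD_c}[f^*_\ell(\x)].
\]
It remains to handle the cross term $\E_{\mD_c}[f_\ell(\x)]\,\E_{\mD_c}[f^*_\ell(\x)]$. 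Since $f \in \mcal_2(\alpha) \subseteq \mcal_1(\alpha) = \macc(\alpha)$ by Proposition~\ref{prop:ma:mc1}, multiaccuracy gives $\card{\E_{\mD_c}[f_\ell(\x)] - \E_{\mD_c}[f^*_\ell(\x)]} \le \alpha_c$, and combining this with $f_\ell(\x) \in [0,1]$ yields the two-sided estimate $\E_{\mD_c}[f^*_\ell(\x)]^2 - \alpha_c \le \E_{\mD_c}[f_\ell(\x)]\,\E_{\mD_c}[f^*_\ell(\x)] \le \E_{\mD_c}[f_\ell(\x)]^2 + \alpha_c$.

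For the lower bound, I would invoke Lemma~\ref{lem:level-j} with $d = 2$, which gives $\E_{\mD_c}[f_\ell(\x)f^*_\ell(\x)] \ge \E_{\mD_c}[f_\ell(\x)^2] - \alpha_c$; plugging this into the expansion of $\Cov_{\mD_c}[f_\ell(\x),\y_\ell]$ and using the upper estimate on the cross term makes the right-hand side at least $\E_{\mD_c}[f_\ell(\x)^2] - \alpha_c - \E_{\mD_c}[f_\ell(\x)]^2 - \alpha_c = \Var_{\mD_c}[f_\ell(\x)] - 2\alpha_c$, exactly the claimed bound. For the upper bound, I would use the upper half of the $d = 2$ sandwiching inequality from Corollary~\ref{cor:main-tech}, namely $\E_{\mD_c}[f_\ell(\x)f^*_\ell(\x)] \le \E_{\mD_c}[f^*_\ell(\x)^2] + 3\alpha_c$, together with the lower estimate on the cross term; this gives $\Cov_{\mD_c}[f_\ell(\x),\y_\ell] \le \E_{\mD_c}[f^*_\ell(\x)^2] + 3\alpha_c - \E_{\mD_c}[f^*_\ell(\x)]^2 + \alpha_c = \Var_{\mD_c}[f^*_\ell(\x)] + 4\alpha_c = \Cov_{\mD_c}[f^*_\ell(\x),\y_\ell] + 4\alpha_c$, which implies the stated bound with $6\alpha_c$ (with room to spare, so a looser invocation of Corollary~\ref{cor:main-tech} would also do).

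There is no serious technical obstacle here: once the covariances are expanded via $\E[\y_\ell \mid \x] = f^*_\ell(\x)$, the theorem is a direct consequence of the degree-$2$ moment-sandwiching inequalities together with multiaccuracy. The only point requiring care is bookkeeping — matching each of the two one-sided sandwiching inequalities to the sign with which $\E_{\mD_c}[f_\ell(\x)f^*_\ell(\x)]$ enters the covariance, and routing the cross term $\E_{\mD_c}[f_\ell(\x)]\,\E_{\mD_c}[f^*_\ell(\x)]$ through multiaccuracy rather than through a sandwiching inequality, since that is precisely what upgrades the bare $-2\alpha_c$ lower bound to the sharper $\Var_{\mD_c}[f_\ell(\x)] - 2\alpha_c$.
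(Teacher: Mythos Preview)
Your proof is correct and uses the same ingredients as the paper: the identity $\E_{\mD_c}[\y_\ell\mid\x]=f^*_\ell(\x)$ to rewrite both covariances, Lemma~\ref{lem:level-j} at $d=2$, the degree-$2$ moment bound, and multiaccuracy for the cross term. The lower bound matches the paper's argument (which is packaged there as the two-sided estimate $\card{\Cov_{\mD_c}[f_\ell,\y_\ell]-\Var_{\mD_c}[f_\ell]}\le 2\alpha_c$ in Lemma~\ref{lem:low-var}).

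For the upper bound your routing is slightly more direct than the paper's. The paper first proves $\Cov_{\mD_c}[f_\ell,\y_\ell]\le \Var_{\mD_c}[f_\ell]+2\alpha_c$, then separately shows $\Var_{\mD_c}[f_\ell]\le\Var_{\mD_c}[f^*_\ell]+4\alpha_c$ (via $\E_{\mD_c}[f_\ell^2]\le\E_{\mD_c}[(f^*_\ell)^2]+2\alpha_c$ and a $2\alpha_c$ bound on the difference of squared means), and chains these for a total of $6\alpha_c$. You instead go straight from $\E_{\mD_c}[f_\ell f^*_\ell]$ to $\E_{\mD_c}[(f^*_\ell)^2]+3\alpha_c$ via Corollary~\ref{cor:main-tech}, skipping the intermediate variance comparison and picking up only $4\alpha_c$. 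Both arguments ultimately rest on the same H\"older step inside Theorem~\ref{thm:main-tech}; yours just avoids one detour and yields a sharper constant.
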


Our key technical lemma uses these to show that conditioned on any $c \in \mC$, every predictor in $\mcal_2$ has variance not much larger than the Bayes optimal predictor. Its proof will use the following properties every $f \in \mcal_2(\alpha)$ satisfies by Equation \eqref{eq:level-j}.

\begin{align}
    \abs{\E_{\mD_c}[f_\ell(\x)] - \E_{\mD_c}[\y]} \leq \alpha_c, \label{eq:deg0}\\
    \abs{\E_{\mD_c}[f_\ell(\x)^2] - \E_{\mD_c}[f_\ell(\x)f^*_\ell(\x)} \leq \alpha_c. \label{eq:deg1}
\end{align}
\begin{lemma}
\label{lem:low-var}
For $f \in \mcal_2$ and $c \in \mC_1$,
\begin{align}
        \Var_{\mD_c}[f_\ell(\x)] &\leq \Var_{\mD_c}[f^*_\ell(\x)] + 4\alpha_c. \label{eq:var}\\
        \label{eq:cov-f*}
        \Cov_{\mD_c}[f^*_\ell(\x)\y] &= \Var_{\mD_c}[f^*_\ell(\x)].\\
    \label{eq:cov-f}
        \abs{\Cov_{\mD_c}[f_\ell(\x)\y] - \Var_{\mD_c}[f_\ell(\x)]} & \leq 2\alpha_c.
\end{align}
\end{lemma}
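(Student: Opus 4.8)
The plan is to reduce all three identities to moment bounds already in hand: the degree-$0$ and degree-$1$ consequences of Lemma~\ref{lem:level-j} recorded in \eqref{eq:deg0} and \eqref{eq:deg1}, together with the second-moment domination $\E_{\mD_c}[f_\ell(\x)^2] \le \E_{\mD_c}[f^*_\ell(\x)^2] + 2\alpha_c$, which is the $d=2$ instance of Equation~\eqref{eq:better} from Theorem~\ref{thm:main-tech}. The one structural fact I would isolate up front is that $\mD_c$ reweights only the marginal on $\x$ by $c(\x)/\mu_c$, so the conditional law of $\y$ given $\x$ is unchanged; hence $\E_{\mD_c}[\y_\ell \mid \x] = f^*_\ell(\x)$, and by the tower rule $\E_{\mD_c}[g(\x)\,\y_\ell] = \E_{\mD_c}[g(\x)\,f^*_\ell(\x)]$ for every $g$, and in particular $\E_{\mD_c}[\y_\ell] = \E_{\mD_c}[f^*_\ell(\x)]$.

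Given this, Equation~\eqref{eq:cov-f*} is immediate, since $\Cov_{\mD_c}[f^*_\ell(\x),\y_\ell] = \E_{\mD_c}[f^*_\ell(\x)\y_\ell] - \E_{\mD_c}[f^*_\ell(\x)]\,\E_{\mD_c}[\y_\ell] = \E_{\mD_c}[f^*_\ell(\x)^2] - \E_{\mD_c}[f^*_\ell(\x)]^2 = \Var_{\mD_c}[f^*_\ell(\x)]$. For Equation~\eqref{eq:cov-f}, I would use the tower rule to write $\Cov_{\mD_c}[f_\ell(\x),\y_\ell] = \E_{\mD_c}[f_\ell(\x)\,f^*_\ell(\x)] - \E_{\mD_c}[f_\ell(\x)]\,\E_{\mD_c}[f^*_\ell(\x)]$, so that
\[
\Cov_{\mD_c}[f_\ell(\x),\y_\ell] - \Var_{\mD_c}[f_\ell(\x)] = \big(\E_{\mD_c}[f_\ell(\x)\,f^*_\ell(\x)] - \E_{\mD_c}[f_\ell(\x)^2]\big) + \E_{\mD_c}[f_\ell(\x)]\big(\E_{\mD_c}[f_\ell(\x)] - \E_{\mD_c}[f^*_\ell(\x)]\big).
\]
The first bracket is at most $\alpha_c$ in absolute value by \eqref{eq:deg1}, the second difference is at most $\alpha_c$ by \eqref{eq:deg0}, and $\E_{\mD_c}[f_\ell(\x)] \le 1$ because $f_\ell \in [0,1]$, so the whole expression is bounded by $2\alpha_c$.

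For Equation~\eqref{eq:var}, expand $\Var_{\mD_c}[f_\ell(\x)] = \E_{\mD_c}[f_\ell(\x)^2] - \E_{\mD_c}[f_\ell(\x)]^2$, bound the second moment above by $\E_{\mD_c}[f^*_\ell(\x)^2] + 2\alpha_c$ via \eqref{eq:better}, and bound the squared mean below using \eqref{eq:deg0}: $\E_{\mD_c}[f_\ell(\x)]^2 \ge (\E_{\mD_c}[f^*_\ell(\x)] - \alpha_c)^2 \ge \E_{\mD_c}[f^*_\ell(\x)]^2 - 2\alpha_c$, where the last step uses $\E_{\mD_c}[f^*_\ell(\x)] \le 1$. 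Subtracting yields $\Var_{\mD_c}[f_\ell(\x)] \le \Var_{\mD_c}[f^*_\ell(\x)] + 4\alpha_c$. I do not expect a genuine obstacle here; the only things that need care are applying the tower rule with respect to the tilted measure $\mD_c$ rather than $\mD$, and tracking the additive $\alpha_c$ slack so that the constants come out to exactly $4$ and $2$ as stated.
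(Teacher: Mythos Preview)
Your argument is correct and matches the paper's proof: both invoke \eqref{eq:deg0}, \eqref{eq:deg1}, and the $d=2$ case of \eqref{eq:better}, and both derive \eqref{eq:cov-f*} from the tower property and \eqref{eq:cov-f} by the same two-term split you give. One small caution on \eqref{eq:var}: the paper bounds $\lvert \E_{\mD_c}[f_\ell(\x)]^2 - \E_{\mD_c}[f^*_\ell(\x)]^2\rvert$ via the factorization $(a+b)(a-b)$ with $a+b \leq 2$, which sidesteps the edge case $\E_{\mD_c}[f^*_\ell(\x)] < \alpha_c$ where your intermediate step $\E_{\mD_c}[f_\ell(\x)]^2 \geq (\E_{\mD_c}[f^*_\ell(\x)] - \alpha_c)^2$ can fail as written (though your final inequality $\E_{\mD_c}[f_\ell(\x)]^2 \geq \E_{\mD_c}[f^*_\ell(\x)]^2 - 2\alpha_c$ still holds in that regime).
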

\begin{proof}
We observe that by Equation \eqref{eq:better} with $d =2$, 
\begin{align}
\label{eq:2-norm}
    \E_{\mD_c}[f^*_\ell(\x)^2] \geq \E_{\mD_c}[f_\ell(\x)^2] - 2 \alpha_c.  
\end{align}

By Equation \eqref{eq:deg0} we have
\begin{align*}
    \abs{\E_{\mD_c}[f_\ell(\x)]^2 - \E_{\mD_c}[f^*_\ell(\x)^2]} \leq \abs{\E_{\mD_c}[f_\ell(\x)] + \E_{\mD_c}[f^*_\ell(x)]} \abs{\E_{\mD_c}[f_\ell(\x)] - \E_{\mD_c}[f^*_\ell(x)]} \leq 2\alpha_c. 
\end{align*}
Applying this together with Equation \eqref{eq:2-norm} gives
    \begin{align*}
            \Var_{\mD_C}[f^*_\ell(\x)] - \Var_{\mD_c}[f_\ell(\x)] &\geq \E_{\mD_c}[f^*_\ell(\x)^2] - \E_{\mD_C}[f_\ell(\x)^2] + \E_{\mD_C}[f_\ell(\x)]^2 - \E_{\mD_c}[f^*_\ell(\x)]^2 \\
            & \geq -2\alpha_c -2\alpha_c = -4\alpha_c
    \end{align*}
    which implies the desired bound. 

By definition of $f^*$, $\E[\y|\x] = f^*_\ell(\x)$. Hence
\begin{align}
    \label{eq:eq-f*-corr}
\Cov_{\mD_c}[f^*_\ell(\x)\y] = \E_{\mD_c}[f^*_\ell(\x)\y] - \E_{\mD_c}[f^*_\ell(\x)]\E_\mD[\y] = 
\E_{\mD_c}[f^*_\ell(\x)^2] - \E_{\mD_c}[f^*_\ell(\x)]^2 = \Var_{\mD_c}[f^*_\ell(\x)].
\end{align} 
    
By Equation \eqref{eq:deg1}, 
    \begin{align}
    \label{eq:lb1}
        \left| \E[f_\ell(\x)\y] - \E[f_\ell(\x)^2]\right| \leq \alpha_c
    \end{align}
    Using Equation \eqref{eq:deg0} we have
    \begin{align}
    \label{eq:lb2}
         \abs{\E[f_\ell(\x)]\E[\y] - \E[f_\ell(\x)]^2 } \leq \E[f_\ell(\x)\abs{\E[f_\ell(\x) -\y]} \leq \alpha_c \E[f_\ell(\x)] \leq \alpha_c.
    \end{align}
Finally, using the definitions of variance and covariance, we have     
\begin{align*}
        \abs{\Cov[f_\ell(\x), \y] - \Var[f_\ell(\x)]} \leq  \abs{\E[f_\ell(\x)\y] - \E[f_\ell(\x)^2]} + \abs{\E[f_\ell(\x)]\E[\y] - \E[f_\ell(\x)]^2 } \leq 2\alpha_c
\end{align*}
    where we use Equations \eqref{eq:lb1} and \eqref{eq:lb2}.
\end{proof}

\begin{proof}[Proof of Theorem \ref{thm:main-tech}]
The lower bound is an immediate consequence of Equation \eqref{eq:cov-f}. The upper bound follows since
\begin{align*}
    \Cov_{\mD_c}[f_\ell(\x), y] \leq \Var_{\mD_c}[f_\ell(\x)] + 2\alpha_c \leq \Var_{\mD_c}[f^*_\ell(x)] + 6\alpha_c = \Cov_{\mD_c}[f^*_\ell(x), y] + 6\alpha_c
\end{align*}
where we use Equations \eqref{eq:cov-f}, \eqref{eq:var} and \eqref{eq:cov-f*} respectively.
\end{proof}

We complement this by an example  showing that the correlation between $f_\ell(\x)$ and $\y_\ell$ can be negative for $f \in \macc =\mcal_1$. In particular, we show that this is true even when $\y_\ell$ is obtained by standard methods such as least-squres or logistic regression (on linear combinations from $\mC$).

\begin{lemma}
There exist a distribution $\mD$ on $\zo^2 \times \zo$, a constraint set $\mC$, a constraint $c$, and a label $\ell \in \zo$, such that if $f \in \macc$ is obtained by least-squares or logistic regression, then 
\[\Cov_{D_c}[f_\ell(\x), \y_\ell] = -1/12 < 0.\]
\end{lemma}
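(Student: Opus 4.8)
The plan is to exhibit an explicit four‑point instance on which least‑squares and logistic regression return the \emph{same} predictor, and then read off the conditional covariance. Take $\mC=\{\mathbf 1,x_1\}$, the constant function together with the first coordinate, so that both model classes — linear combinations of $\mC$, respectively sigmoids of linear combinations of $\mC$ — consist of functions that are \emph{constant on each cell} of the partition $\{x_1=0\},\{x_1=1\}$. For squared loss over $f=\lambda_0+\lambda_1 x_1$, the first‑order optimality conditions are precisely $\E_\mD[\mathbf 1\cdot(\y-f(\x))]=\E_\mD[x_1\cdot(\y-f(\x))]=0$, i.e.\ $(\mC,0)$‑multiaccuracy, so $f\in\macc$; by strict convexity (full support, $x_1$ nonconstant) the unique minimizer assigns to each cell the conditional label mean $\E[\y\mid x_1=b]$, and we will arrange for these to lie in $(0,1)$, making $f$ a valid predictor. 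For logistic loss over $f=\sigma(\lambda_0+\lambda_1 x_1)$ the objective \emph{separates} over the two cells, and the minimizer of $\theta\mapsto\E[\llgt(\sigma(\theta),\y)]$ is the logit of $\E[\y]$; hence logistic regression returns the \emph{same} per‑cell conditional means. So it suffices to analyze this one predictor.

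For the instance, on $\X=\zo^2$ put masses $\Pr_\mD[(1,0)]=\Pr_\mD[(0,0)]=\tfrac16$ and $\Pr_\mD[(1,1)]=\Pr_\mD[(0,1)]=\tfrac13$, with deterministic labels $\fs(1,0)=\fs(0,1)=0$ and $\fs(0,0)=\fs(1,1)=1$ (an XOR pattern), and let $c(x)=1-x_2$ and $\ell=1$. The cell $\{x_1=1\}=\{(1,0),(1,1)\}$ has conditional label mean $\tfrac{(1/3)\cdot1}{1/6+1/3}=\tfrac23$ and the cell $\{x_1=0\}=\{(0,0),(0,1)\}$ has conditional label mean $\tfrac{(1/6)\cdot1}{1/6+1/3}=\tfrac13$, so $f\equiv\tfrac23$ on $\{x_1=1\}$ and $f\equiv\tfrac13$ on $\{x_1=0\}$. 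Conditioning on $c$ (i.e.\ $x_2=0$) leaves the points $(1,0)$ and $(0,0)$, each of conditional probability $\tfrac12$, where $f$ takes values $\tfrac23,\tfrac13$ while $\fs$ takes values $0,1$. Hence $\E_{\mD_c}[f(\x)]=\E_{\mD_c}[\y]=\tfrac12$ and $\E_{\mD_c}[f(\x)\y]=\E_{\mD_c}[f(\x)\fs(\x)]=\tfrac12\cdot\tfrac23\cdot0+\tfrac12\cdot\tfrac13\cdot1=\tfrac16$, so $\Cov_{\mD_c}[f(\x),\y]=\tfrac16-\tfrac14=-\tfrac1{12}<0$, as claimed. (If one insists that $c\in\mC$, add $1-x_2$ to $\mC$; one checks $\E_\mD[(1-x_2)(\y-f(\x))]=0$ directly, so $f$ remains the unique least‑squares minimizer over $\mathrm{span}\{\mathbf 1,x_1,x_2\}$, and since $f=\sigma(-\log 2+2\log 2\cdot x_1)$ it is also the logistic minimizer.)

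The main obstacle is not any single calculation but the co‑design of the instance: one needs $\mC$ coarse enough that both losses return the same closed‑form predictor (so that the sigmoid's nonlinearity drops out entirely), a ground truth $\fs$ whose marginally‑useful feature $x_1$ points the \emph{wrong} way once we restrict to $\{x_2=0\}$, and masses tuned so that the resulting cell means — $\tfrac23$ against label $0$ on $(1,0)$ and $\tfrac13$ against label $1$ on $(0,0)$ — produce a covariance of exactly $-\tfrac1{12}$. The one technical point to check en route is that the logistic minimizer is attained in the interior rather than at infinity; this holds because each cell of the partition $\{x_1=0\},\{x_1=1\}$ contains both a label‑$0$ point and a label‑$1$ point, which forces the per‑cell conditional means strictly inside $(0,1)$.
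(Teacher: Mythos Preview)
Your construction is correct and is, up to swapping the coordinates and complementing the label, the same four-point XOR instance the paper uses (their $\mC=\{c_{i,b}\}$ consists of the four axis-aligned indicators, which span exactly $\{\mathbf 1,x_1,x_2\}$, and they condition on $c_{1,1}$). Your route to the predictor---observing that both model classes are constant on each $x_1$-cell, so that either loss returns the per-cell conditional label mean---is a bit more direct than the paper's symmetry argument for the regression coefficients. The one place that deserves another sentence is the logistic case after you enlarge $\mC$: the objective no longer separates over cells, but the first-order optimality conditions for logistic regression are precisely the multiaccuracy constraints $\E_\mD[c(\x)(\y-f(\x))]=0$, which you have already verified for all of $\mathbf 1,x_1,1-x_2$, and strict convexity (linearly independent features on the support, predictions in $(0,1)$) gives uniqueness.
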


The example we provide is in the binary classification setting, with $\mD = \{0, 1\}^2$ and the sets $\mC$ all being edges (that is, the sets $x_1=0$, $x_1=1$, $x_2=0$, and $x_2=1$). The covariance we achieve is $-1/12$, though we note that it can be made arbitrarily close to $-1/4$, the minimum possible covariance of two $[0, 1]$ random variables. The full example is provided in Appendix~\ref{sec:correlation-counterexample}.
 
\section{The Complexity of Low-Degree Multicalibration}
\label{sec:alg}
\newcommand{\WAL}{\mathsf{WAL}}

Here, we establish upper bounds on the time and sample complexity for obtaining low-degree multicalibration.
We begin by describing a completely generic multicalibration algorithm that works for any class of weight functions $\mW$ and for any number of class labels $k \in \N$.
Importantly, following \cite{hkrr2018,kgz}, the algorithm reduces the task of learning a weighted multicalibrated predictor to the task of \emph{weak agnostic learning} the class $\mC$.
We analyze the algorithm in terms of its oracle-efficiency, assuming access to a weak agnostic learner.

With upper bounds on the complexity of learning a multicalibrated predictor using a generic weight class $\mW$, we instantiate the bounds for the low-degree, smooth, and indicator variants of multicalibration.
We show that, for meaningful settings of the parameters, low-degree multicalibration is considerably more sample efficient than the original formulation of multicalibration.
This effect is particularly pronounced as the number of class labels $l$ grows.

\subsection{Learning Weighted Multicalibrated Predictors}

In Algorithm~\ref{alg:wmc}, we describe a procedure for learning multicalibrated predictors.
The algorithm assumes oracle-access to a weak agnostic learner \cite{KalaiMV08,feldman2009distribution}.
\begin{definition}[Weak Agnostic Learning]
For a data distribution $\mD$ supported on $\X \times [-1,1]$, a weak agnostic learner for a hypothesis class $\mC \subseteq \set{c:\X \to [0,1]}$ is a learning procedure that takes labeled data $D = \set{(x_i,z_i)}_{i=1}^m$, where each sample $(\x,\z) \sim \mD$.
For $\alpha > 0$, the learning procedure returns an element of $\mC \cup \set{\bot}$
\begin{gather*}
    c \gets \WAL_{\mC,\alpha}(D)
\end{gather*}
satisfying the following properties.
\begin{enumerate}[(1)]
    \item if there exists $c' \in \mC$ such that $\E_\mD\lr{c'(\x) \cdot \z} \not \in [-\alpha,\alpha]$, then $c \neq \bot$ and $\E_\mD\lr{c(\x) \cdot \z} \ge \alpha/2$.
    \item if $c = \bot$, then for all $c' \in \mC$, $\E_\mD\lr{c'(\x) \cdot \z} \in [-\alpha, \alpha]$.
\end{enumerate}
We say the sample complexity of the weak agnostic learner, $m = m(\mC,\alpha,\beta)$, is the number of samples from $\mD$ necessary to guarantee properties (1) and (2) with probability at least $1-\beta$.
\end{definition}
Intuitively, a weak agnostic learner searches for some $c \in \mC$ that correlates nontrivially with the labels given by $z$.\footnote{Our analysis does not assume that $\WAL$ is a proper learner.  All of the results hold equally for improper weak agnostic learners.}
With this definition in place, we can describe the Weighted Multicalibration algorithm and state its guarantees.

\begin{algorithm}
\caption{\label{alg:wmc}Weighted Multicalibration\\
\textbf{Input:}  training data $\set{(x_i,y_i)}_{i=1}^m$\\
Concept class $\mC \subseteq \set{c:\X \to [0,1]}$,\\
Weight class $\mW \subseteq \set{w:\Delta_l \to [0,1]^l}$,\\
approximation $\alpha > 0$,\\
step size $\eta$\\
\textbf{Output:}  $(\mC,\mW,\alpha)$-multicalibrated predictor $f:\X \to \Delta_l$}
\begin{algorithmic}
\STATE $f_0(\cdot) \gets (1/l,\hdots,1/l) \in \Delta_l$
\STATE $mc \gets \mathsf{false}$
\STATE $t \gets 0$
\WHILE{$\neg mc$}
    \STATE $mc \gets \mathsf{true}$
    \FOR{$w \in \mW$}
        \STATE $c_{t+1} \gets \mathsf{WAL}_{\mC,\alpha}\left(\set{\big(x_i, \left\langle w(f_t(x_i)), y_i - f_t(x_i) \right \rangle \big)}_{i=1}^m\right)$
        \IF{$c_{t+1} = \bot$}
            \STATE \textbf{continue}
        \ELSE
            \STATE $\delta_{t+1}(\cdot) \gets w(f_t(\cdot))\cdot c_{t+1}(\cdot)$
            \STATE $f_{t+1}(\cdot) \gets \pi_{\Delta_l}\left(f_t(\cdot) + \eta \cdot \delta_{t+1}(\cdot)\right)$\hfill\textit{// where $\pi_{\Delta_l}$ projects onto $\Delta_l$}
            \STATE $mc \gets \mathsf{false}$
            \STATE $t \gets t+1$
            \STATE \textbf{break}
        \ENDIF
    \ENDFOR
\ENDWHILE
\RETURN $f_t$
\end{algorithmic}
\end{algorithm}

The algorithm is an iterative boosting-style procedure.
We initialize the hypothesis to be the constant function $f_0(\x) = (1/l,\hdots,1/l) \in \Delta_l$.
Then, in the $t$th iteration, for each $w \in \mW$ we reduce the problem of searching for some $c \in \mC$ where $f_t$ is miscalibrated to the problem of weak agnostic learning.
If we find some $c \in \C$ such that
\begin{gather*}
    \E\lr{c(\x) \langle w(f_t(\x)), \y - f_t(\x) \rangle} > \alpha/2
\end{gather*}
then we can use $c(\x)\cdot w(f_t(\x))$ to update the predictor to be better calibrated in this direction.
If for all $w \in \mW$ we fail to find any $c \in \mC$ that correlates with the residual, then we return the current hypothesis.
We describe the procedure in Algorithm~\ref{alg:wmc}.

\paragraph{Analysis of the algorithm.}
The exact running time of the algorithm depends intimately on the model of computation and the time complexity of weak agnostic learning, which for most classes $\mC$ will dominate the time complexity.
With this in mind, we bound the iteration complexity $T$ of the algorithm, noting that each iteration makes at most $\card{\mW}$ calls to $\WAL_{\mC,\alpha}$, which results in a time complexity bounded by $T \cdot \card{\mW}$ times the complexity of weak agnostic learning $\mC$.

First, we argue correctness---that if the algorithm terminates, then the returned hypothesis satisfies multicalibration.
\begin{lemma}
If Algorithm~\ref{alg:wmc} returns a hypothesis $f:\X \to [0,1]^l$, then $f$ is $(\mC,\mW,\alpha)$-multicalibrated.
\end{lemma}
\begin{proof}
Observe that Algorithm~\ref{alg:wmc} only returns a hypothesis $f_t$ if, in the $t$th iteration, for every $w \in \mW$, the call to the weak agnostic learner $\WAL_{\mC,\alpha}$ returns $\bot$.
By the weak agnostic learning property (2), returning $\bot$ in every call indicates that for all $w \in \mW$ and for all $c \in \C$, the correlation between $c$ and the weighted residual is bounded by $\alpha$ in magnitude.
\begin{gather*}
\E\lr{c(\x) \cdot \left\langle w(f_t(\x)), \y - f_t(\x) \right \rangle} \in [-\alpha,\alpha]
\end{gather*}
By definition, this means that $f_t$ is $(\mC,\mW,\alpha)$-multicalibrated.
\end{proof}

Next, we argue that the number of iterations that the algorithm ever runs for is bounded polynomially in $l$ and $1/\alpha$.
\begin{lemma}
\label{lem:iters}
With step-size $\eta = \alpha/2l$, Algorithm~\ref{alg:wmc} returns $f_T$ after $T \le 8l/\alpha^2$ iterations.
\end{lemma}
\begin{proof}
The bound on the number of iterations  follows by a potential argument.
Using the expected squared error as a potential function, we lower bound the progress at each iteration.
Specifically, we use the following potential function,
\begin{gather*}
    \phi(f) = \E\lr{\norm{\fs(\x) - f(\x)}^2}.
\end{gather*}
By the assumption that $\fs:\X \to \Delta_l$ and our choice of $f_0(\x)_i = 1/l$ for all $i \in [l]$, the initial potential value is at most $\phi(f_0) \le 2$.

Consider the change in potential after the $t$th update.
\begin{align*}
\E&\lr{\norm{\fs(\x) - f_t(\x)}^2} - \E\lr{\norm{\fs(\x) - f_{t+1}(\x)}^2}\\
&=\E\lr{\norm{\fs(\x) - f_t(\x)}^2} - \E\lr{\norm{\fs(\x) - f_{t}(\x) - \eta \cdot \delta_{t+1}(\x)}^2}\\
&=2\eta \cdot \E\lr{\left\langle \fs(\x) - f_t(\x), \delta_{t+1}(\x) \right\rangle} - \eta^2 \cdot \E\lr{\norm{\delta_{t+1}(\x)}^2}\\
&\ge 2 \eta \cdot \E\lr{\left\langle \fs(\x) - f_t(\x), w (f_t(\x)) \cdot c_{t+1}(\x)  \right\rangle} - \eta^2 \cdot \E\lr{\norm{w( f_t(\x)) c_t(\x)}} \\
& \geq \alpha \eta - \eta^2 l
\end{align*}
where we use the following bounds 
\begin{itemize}
    \item By our definition of $w$ and $c$,
    \[ \norm{w( f_t(\x)) c_t(\x)}^2 \leq \norm{w_t(f(\x))}^2 \leq l. \]
    \item By the weak agnostic learning property (1), 
\begin{align*}
\E\lr{\left\langle \fs(\x) - f_t(\x), w (f_t(\x)) \cdot c_{t+1}(\x)  \right\rangle}
= \E\lr{c_{t+1}(\x) \cdot \left\langle w(f_t(\x)), \y - f_t(\x)\right \rangle} \ge \alpha/2
\end{align*}
\end{itemize}

Taking $\eta = \alpha/2l$, the progress in $\phi$ in each iteration is at least $\alpha^2/(4l)$.
Since $\phi(f_0) \leq 2$ and $\phi(f_t) \geq 0$ for all $t$, the total number of iterations is upper bounded by $T \le 8l/\alpha^2$.
\end{proof}

We upper bound the sample complexity necessary to run Algorithm~\ref{alg:wmc} in terms of the number of iterations, the cardinality of the weight class $\mW$, and the sample complexity of the weak agnostic learner for $\mC$.
Note that in the $t$th iteration, for each $w \in \mW$, we assign $x_i$ a label that depends on $f_t$.
This dependence on prior hypotheses (and thus prior access to the data), results in an adaptive data analysis problem.
Naively, we can handle this by resampling at each iteration.
We obtain the following generic bound.
\begin{proposition}
\label{prop:samples}
For a hypothesis class $\mC$ and approximation parameter $\alpha_0 > 0$,
the sample complexity $m$ to run Algorithm~\ref{alg:wmc} with success probability at least $1-\beta$ is upper bounded by
\begin{gather*}
    m \le O\left(\frac{l \cdot m(\mC,\alpha_0,\beta_0)}{\alpha_0^2}\right)
\end{gather*}
where $m(\mC,\alpha_0,\beta_0)$ is the sample complexity of running $\WAL_{\mC,\alpha_0}$ with failure probability $\beta_0 \le \frac{\alpha_0^2\beta}{l \cdot \card{\mW}}$.
\end{proposition}
\begin{proof}
The upper bound follows by using a fresh sample for each iteration.
We leverage the upper bound on the number of iterations necessary from Lemma~\ref{lem:iters}, $T \le l/\alpha_0^2$.
Then, to obtain an overall failure probability of $\beta$, we take $\beta_0$ small enough that we can union bound the failure probability of $\WAL_{\mC,\alpha_0}$ over $T \cdot \card{\mW}$ calls.
Again, leveraging the bound on $T$, we bound $\beta_0 \le \frac{\alpha_0^2\beta}{l \cdot \card{\mW}}$.
\end{proof}

Using Proposition~\ref{prop:samples}, we obtain a concrete upper bound on the sample complexity based on specifying a weak agnostic learner and a class of weight functions $\mW$.
For instance, if $\mC$ is a finite class, the weak agnostic learner that iterates over $\mC$ and evaluates the correlation with labels as a statistical query obtains sample complexity $\log(\card{\mC}/\beta_0)/\alpha_0^2$, for an overall sample complexity of
\begin{gather*}
    m \le O\left(\frac{l \cdot \log(l \card{\mC}\card{\mW}/\alpha_0\beta)}{\alpha_0^4}\right).
\end{gather*}
For classes $\mC$ of bounded VC dimension, it is known that the optimal sample complexity of  weak learning is 
\begin{align}
\label{eq:opt-wal}
    m(\mC,\alpha_0,\beta_0) =O\left(\frac{\VC(\mC) + \log(1/\beta_0)}{\alpha_0^2}\right).
\end{align}
Assuming we have access to a weak agnostic learner that has optimal sample complexity, the  sample complexity of Algorithm \ref{alg:wmc} given by Proposition \ref{prop:samples} is
\begin{gather}
\label{eqn:vc:optimal}
    m \le O\left(\frac{l \cdot \left(\mathrm{VC}(\mC) + \log(l\card{\mW}/\alpha_0\beta)\right)}{\alpha_0^4}\right).
\end{gather}

\paragraph{Better sample complexity.}

Improved sample complexity analyses are possible for specialized implementations of the weak agnostic learner.
Following \cite{hkrr2018,KimThesis}, we can avoid some of the cost of resampling by appealing to generalization guarantees for differentially-private learning algorithms \cite{dwork2006calibrating}.
Note that Algorithm~\ref{alg:wmc} only touches the data through the weak agnostic learner, in order to search for a violated constraint for some $c \in \mC$.
By implementing this search step under differential privacy, we can appeal to the results of \cite{dwork2015preserving,bnsssu15,jung2019new}, demonstrating that such algorithms guarantee statistical generalization, even under adaptive access to the data.
For instance, using a bound from Corollary~6.4 of \cite{bnsssu15}, in the case where $\mC$ is a finite class, we can actually bound total the sample complexity as follows.
\begin{gather*}
    m
\le O\left(\frac{l^{1/2}\cdot \log(\card{\mC}\card{\mW}/\alpha)\cdot \log(1/\alpha\beta)^{3/2}}{\alpha^3}\right)
\end{gather*}
This bound follows by viewing each iteration as an optimization over the simultaneous choice of $w \in \mW$ and $c \in \mC$ to maximize the multicalibration violation.
While this approach improves the sample complexity, computationally it requires exhaustive search over the choice of $c \in \mC$ and $w \in \mW$ to execute the exponential mechanism \cite{mcsherry2007mechanism}.

\subsection{Comparing the Sample Complexity Across Notions}

We instantiate the general bound from Proposition~\ref{prop:samples} using the different weight classes corresponding to low-degree, smooth, and full multicalibration. Since there are many parameters involved, to get a fair comparison, we make the following choices:
\begin{enumerate}
\item We instantiate each notion using the accuracy parameter $\alpha_0$ that is required to guarantee $\alpha$-degree-$(k+1)$ multicalibration for some $k \in \N$. In other words, for smooth and full multicalibration, we upper bound the sample complexity using the best choice of $\alpha_0$ known to guarantee that we get $f \in \mcal_{k+1}(\alpha)$.

\item  We assume that we have access to a weak agnostic leaner $\WAL_{\C,\alpha}$ that gives optimal sample complexity (Equation \eqref{eq:opt-wal}), so that the sample complexity of Algorithm \ref{alg:wmc} is bounded by Equation \eqref{eqn:vc:optimal}. Note that an increase in the sample complexity's dependence on $\alpha_0$ will \emph{increase} the gap in sample complexities. Hence, assuming a sample-optimal weak agnostic learner gives a conservative estimate on the gap.
\end{enumerate}

In this setting, we show a substantial gap in the sample complexities of low-degree multicalibration on one hand versus smooth and full multicalibration on the other. The former has sample complexity that grows polynomially with the number of labels $l$, whereas the latter notion have sample complexity that grows exponentially with $l$.

\begin{theorem}[Formal restatement of Theorem~\ref{thm:samples:informal}]
\label{thm:samples}
Suppose $\mC$ has a weak agnostic learner with sample complexity
\begin{gather*}
    m(\mC,\alpha_0,\beta_0) = O\left(\frac{\VC(\mC) + \log(1/\beta_0)}{\alpha_0^2}\right)
\end{gather*}
to obtain desired accuracy $\alpha_0$ over $\mC$ with all but probability $\beta_0$.
Then, for any $k \in \N$ and any failure probability $\beta > 0$, there exists an implementation of Algorithm~\ref{alg:wmc} to obtain the variants of multicalibration, obtaining sample complexity as follows.
\begin{itemize}
    \item \emph{(Low-Degree).}
    \begin{gather*}
        m_k \le O\left(\frac{l \cdot \left(\VC(\mC) + k \cdot \log(l/\alpha\beta)\right)}{\alpha^4}\right)
    \end{gather*}
    to obtain $(\mC,\alpha)$-degree-$(k+1)$ multicalibration.
    \item \emph{(Smooth).}
    \begin{gather*}
        m_s \le O\left(k^4l \cdot \left(\frac{\VC(\mC) + \log(kl/\alpha\beta)}{\alpha^4} + \frac{(kl)^{l-1}\poly(l,\log(k/\alpha))}{\alpha^{l+3}}\right) \right)
    \end{gather*}
    to obtain $(\mC,\alpha/k)$-smooth multicalibration.
    \item \emph{(Full).}
    \begin{gather*}
        m_i \le O\left(\frac{(2kl)^{4(l+1)} \cdot \left(\VC(\mC) + \log(kl/\alpha\beta)\right)}{\alpha^{4(l+1)}}\right)
    \end{gather*}
    to obtain $(\mC,\alpha_0)$-full multicalibration for $\alpha_0 \le (\alpha/2kl)^{l+1}$.
\end{itemize}
In the case of binary prediction, the full multicalibration bound can be improved to
\begin{gather*}
    m_{i,\mathrm{bin}} \le  O\left(\frac{k^4\cdot \left(\VC(\mC) + \log(k/\alpha\beta)\right)}{\alpha^8}\right).
\end{gather*}
\end{theorem}

\begin{proof}
The proof instantiates the bound in (\ref{eqn:vc:optimal}) with an appropriate weight class to achieve the desired notions.

\noindent\emph{(Low-Degree).}  To guarantee that the calibration constraint is satisfied for all $w \in \mW_{k+1}$, we use a discrete set of functions $\mathcal{M}_k$ defined by monomials of degree $\le k$.
In particular, we know that each coordinate of a given $w$ is implemented by some $q \in \mP_k$.
We consider a finite class of functions, where for each $i \in [l]$ and each monomial $s(z) = \prod_{i \in S} z_i$ of degree $\le k$ (where $S$ is a multiset of elements from $[l]$), we include a $1$-sparse function equal to $s(z)$ in the $i$th coordinate and $0$ elsewhere.
\begin{gather*}
    \mathcal{M}_k = \set{s^{(i)} : i \in [l], S \in [l]^n, n \le k}\\
    \textrm{where } s^{(i)}(z)_j = \begin{cases}\prod_{i \in S} z_i & j = i\\
    0 & \textrm{o.w.}\end{cases}
\end{gather*}
For $z \in \Delta_l$, these functions satisfy boundedness and sparsity.
Further by convexity, obtaining $(\mC,\mathcal{M}_k,\alpha)$-multicalibration implies $(\C,\alpha)$-degree-$(k+1)$ multicalibration.
The cardinality of this set $\card{\mathcal{M}_k}$ grows as $O(l^k)$.
We plug this bound on the number of weight functions into the generic sample complexity bound.
\begin{gather*}
    m_{k+1} \le O\left(\frac{l \cdot \left(\VC(\mC) + k \log(l/\alpha\beta)\right)}{\alpha^4}\right)
\end{gather*}

\noindent\emph{(Smooth).}
By Theorem~\ref{thm:deg-to-smooth}, we can take $\alpha_0 \le \alpha/k$ to guarantee a $(\mC,\alpha_0)$-smooth multicalibrated predictor $f$ is also $(\mC,\alpha)$-degree-$(k+1)$ multicalibrated.
Then, for our choice of weight class $\mW$ to guarantee smooth multicalibration, we appeal to Lemma~\ref{lem:smooth:basis:l}, which upper bounds the cardinality $\card{\mW} \le \exp\left(\tilde{O}((l/\alpha_0)^{l-1})\right)$, proved below in Section~\ref{sec:smooth:basis}.
With the choice of $\alpha_0$, we bound the log of this cardinality as follows.
\begin{align*}
\log \card{\mW} \le \tilde{O}\left(\left(\frac{l}{\alpha_0}\right)^{l-1}\right) \le \left(\frac{kl}{\alpha}\right)^{l-1} \cdot \poly(l,\log(k/\alpha))
\end{align*}
Combining these bounds, we can bound the sample complexity for smooth multicalibration as follows.
\begin{align*}
m_s &\le O\left(\frac{k^4l \cdot \left(\VC(\mC) + \log(kl/\alpha\beta) + \log\card{\mW}\right)}{\alpha^4}\right)\\
&\le O\left(k^4l \cdot \left(\frac{\VC(\mC) + \log(kl/\alpha\beta)}{\alpha^4} + \frac{(kl)^{l-1}\poly(l,\log(k/\alpha))}{\alpha^{l+3}}\right) \right)
\end{align*}

\noindent\emph{(Full).}
For $\delta > 0$, by Theorem~\ref{thm:int-to-smooth}, 
$\mcalreg_\delta(\alpha\delta^l/k - l\delta^{l+1}) \subseteq \mcals(\alpha/k) \subseteq \mcal_{k+1}(\alpha)$.
Balancing terms, we take $\delta = \frac{\alpha}{2kl}$, which results in $\alpha_0 \le l^{-l} \left(\frac{\alpha}{2k}\right)^{l+1}$ to guarantee that a $(\mC,\alpha_0,\delta)$-full multicalibrated predictor is also $(\mC,\alpha)$-degree-$(k+1)$ multicalibrated.
The interval basis $\mI_\delta$ has $1/\delta^l$ functions, so we can bound $\log\card{\mI_\delta}$ as follows.
\begin{gather*}
\log\card{\mI_\delta} = l \cdot \log\left({2kl}/{\alpha}\right)
\end{gather*}
With the choice of $\delta$ and $\alpha_0$, we bound the sample complexity as follows.
\begin{align*}
m_i &\le O\left(\frac{l \cdot \left(\VC(\C) + (l+1) \cdot \log(2kl/\alpha\beta) + l \cdot \log(2kl/\alpha)\right)}{l^4(\alpha/2kl)^{4(l+1)}}\right)\\
&\le O\left(\left(2kl\right)^{4(l+1)} \cdot \frac{\VC(\mC) + \log(kl/\alpha\beta)}{l^2\alpha^{4(l+1)}}\right)\\
&\le  O\left(\frac{(2kl)^{4(l+1)} \cdot \left(\VC(\mC) + \log(kl/\alpha\beta)\right)}{\alpha^{4(l+1)}}\right)
\end{align*}
Finally, using the containment of full multicalibration within smooth multicalibration specialized to binary prediction, we can tighten the analysis for $l=2$.
In this case, we can take the binary interval basis $\mI_\delta$ of size $1/\delta$ functions for $\delta = \Theta(\alpha_0^{1/2})$, and applying Proposition~\ref{prop:int-to-smooth}, we can take $\alpha_0 \le O(\alpha^2/k)$, to ensure $(\mC,\alpha/k)$-smooth multicalibration.
In all, we can bound the sample complexity
\begin{gather*}
    m_{i,\mathrm{bin}} \le  O\left(\frac{k^4\cdot \left(\VC(\mC) + \log(k/\alpha\beta)\right)}{\alpha^8}\right),
\end{gather*}
establishing Theorem~\ref{thm:samples}.
\end{proof}

Consistent with the prior results on the relationship between the notions of multicalibration, we see that focusing on low-degree multicalibration can lead to significant sample complexity savings.
In particular, in the multi-class setting, the low-degree complexity provides exponential savings compared to  the smooth and full complexity.
For the binary prediction case, the savings from low-degree multicalibration are only polynomial factors in $\alpha$, but still practically-relevant.
Even for very modest values of $\alpha$, say $0.25$, low-degree multicalibration obtains more than a $200$-fold decrease in sample complexity.

While this analysis doesn't establish lower bounds on the sample complexity, the point is that the savings are coming from the difference in the necessary choice of $\alpha_0$.
Thus, it seems that any sample complexity upper bound should apply equally well for all notions (in terms of $\alpha_0$), will result in an improved complexity for low-degree multicalibration.
Of particular note, a recent work of \cite{gupta2021online} establishes an (inefficient) algorithm with optimal dependence of $\alpha_0^{-2}$ for full multicalibration in the binary prediction case.
Specifically, in our notation, for classes where each group $c \in \mC$ has constant measure in $\mD$, they achieve $(\mC,\alpha_0,\delta)$-full mutlicalibration with probability at least $1-\beta$ in sample complexity that grows as
$$O\left(\frac{\log(\card{\mC}/\delta\beta)}{\alpha_0^2}\right).$$
Setting $\alpha_0$ and $\delta$ to achieve even $(\mC,\alpha)$-multiaccuracy, gives a dependence of $\alpha^{-4}$.
It would be interesting to extend their game-theoretic analysis to low-degree multicalibration, towards obtaining $\alpha^{-2}$ dependence.

\subsection{Better bases for smooth multicalibration in high dimensions}
\label{sec:smooth:basis}

In $l$ dimensions, we can construct a sparse basis at the cost of a much larger sized family of weight functions.

\begin{lemma}
\label{lem:smooth:basis:l}
For any $\eta \in (0, 1)$, there exists a $(\eta, 1)$-basis $\mB_\eta$ for $\mL_{1 \to \infty}$ of size $\exp(O(l\eta^{-1})^{l-1} \log \f1\eta)$.
\end{lemma}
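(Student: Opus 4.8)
The plan is to take $\mB_\eta$ to be the family of all \emph{piecewise-constant} weight functions that are constant on the cells of a fixed fine net of $\Delta_l$, where each cell is labeled by an arbitrary point of a discretization of $[0,1]^l$. Approximating a given $u\in\Lip$ then amounts to rounding the values of $u$ on the net, and --- crucially --- the resulting approximant is a \emph{single} element of $\mB_\eta$. So taking $\lambda$ to be the indicator vector of that element yields $\sum_i|\lambda_i|=1$, and $L=1$ in Definition~\ref{def:basis} comes for free; all that remains is to control the $\ell_\infty$ approximation error and the cardinality.

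\textbf{The net.} First I would fix an $(\eta/2)$-net $N$ of $\Delta_l$ in the $\ell_1$ metric, exploiting the intrinsic $(l-1)$-dimensionality of the simplex: set $m=\lceil 2l/\eta\rceil$ and $N=\{b/m : b\in\Z_{\ge 0}^l,\ \sum_i b_i=m\}$. Given $z\in\Delta_l$, round each coordinate of $mz$ down and then bump up by $1$ the $k\le l-1$ coordinates with largest fractional parts so that the entries again sum to $m$; the resulting lattice point lies within $\ell_1$-distance $l/m\le\eta/2$ of $z$. Hence $N$ is an $(\eta/2)$-net, and $|N|=\binom{m+l-1}{l-1}\le(6e/\eta)^{l-1}=O(1/\eta)^{l-1}$. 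Assign to every $z\in\Delta_l$ a nearest point $p(z)\in N$, so that $\norm{z-p(z)}_1\le\eta/2$.

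\textbf{The basis and verification.} Let $G=\{0,\eta,2\eta,\dots\}\cap[0,1]$, a grid with $|G|=O(1/\eta)$ such that every $t\in[0,1]$ is within $\eta/2$ of $G$. For each labeling $h\colon N\to G^l$, define the weight function $w_h\colon\Delta_l\to[0,1]^l$ by $w_h(z)=h(p(z))$, and put $\mB_\eta=\{w_h : h\in(G^l)^N\}$. Then
\[
|\mB_\eta|=|G^l|^{|N|}=\big(O(1/\eta)^l\big)^{O(1/\eta)^{l-1}}=\exp\!\big(O(1/\eta)^{l-1}\cdot O(l\log\tfrac1\eta)\big)=\exp\!\big(O(l\eta^{-1})^{l-1}\log\tfrac1\eta\big),
\]
using $l\le l^{l-1}$ for $l\ge2$. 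To see that $\mB_\eta$ is an $(\eta,1)$-basis, take any $u\in\Lip$; for each $p\in N$ pick $h(p)\in G^l$ coordinatewise with $\norm{h(p)-u(p)}_\infty\le\eta/2$ (possible since $u(p)\in[0,1]^l$), and set $v=w_h$, so $v=1\cdot w_h$ with $\sum_i|\lambda_i|=1$. For any $z\in\Delta_l$, writing $p=p(z)$ and using the $\ell_1\to\ell_\infty$ Lipschitz bound $\norm{u(z)-u(p)}_\infty\le\norm{z-p}_1$,
\[
\norm{v(z)-u(z)}_\infty\le\norm{h(p)-u(p)}_\infty+\norm{u(p)-u(z)}_\infty\le\tfrac{\eta}{2}+\tfrac{\eta}{2}=\eta,
\]
so $\norm{u-v}_\infty\le\eta$, as required.

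I do not expect a genuine obstacle: this is a routine covering-and-rounding argument. The only point that needs care is the cardinality estimate --- using the lattice $\{b/m:\sum_i b_i=m\}$ rather than a naive $\eta$-net of the ambient cube $[0,1]^l$ is what keeps the net size (and hence the exponent in $|\mB_\eta|$) at $l-1$ rather than $l$, which is the quantitatively important content of the statement.
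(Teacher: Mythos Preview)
Your proposal is correct and follows essentially the same covering-and-rounding argument as the paper: discretize the simplex into cells, let the basis consist of all piecewise-constant functions taking values on a finite grid, and approximate any $u\in\Lip$ by a single such function obtained by rounding $u$ cellwise. The only difference is cosmetic --- you use an intrinsic lattice net of $\Delta_l$ whereas the paper partitions $[0,1]^{l-1}$ into cubes after projecting away one coordinate --- and your net size $(6e/\eta)^{l-1}$ is in fact slightly tighter than the paper's $(3l/\eta)^{l-1}$.
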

\begin{proof}
We assume that $1/\eta \in \N$ is an integer. It will be enough to show such a basis for $\mL_1$, since we can handle each coordinate of the output separately, at the cost of a multiplicative factor of $l$ on the size of the family. 

Break up $[0, 1]^{l-1}$ into $(3l/\eta)^{l-1}$ cubes of side length $\eta/3l$ each. Then, let $\mB_\eta$ be the all functions on $\Delta_l$ which take one of the constant values $0, \eta/3, 2\eta/3, \dots, 1$ on each of these cubes (where we have projected away the last coordinate $x_l$ in $\Delta_l$). We have
\[
|\mB_\eta| \leq (3/\eta+1)^{(3l/\eta)^{l-1}} = \exp\left(O(l\eta^{-1})^{l-1} \log \f1\eta\right) \] 

Now, we claim that every $u \in \mL_1$ can be approximated to within $\eta$ by a single function in $\mB_\eta$. Indeed, for each of the cubes, round $u$ down to the nearest multiple $\eta/3$ on one of the corners of the cube. Construct the function $v \in \mB_\eta$ by letting $v$ take on this rounded value on that cube. Then, $v$ will be within $\eta/3$ of $u$ on that corner of the cube.

We claim that the whole cube (projected up to $\Delta_l$) is within distance $2\eta/3$ in $L_1$ from the corner. To see this, note that by construction, the distance in $L_1$ in the first $l-1$ coordinates is at most $(l-1) \cdot \eta/3l < \eta/3$. Also, in $\Delta_l$, the distance in the last coordinate is at most the $L_1$ distance in the rest of the coordinates, so this is at most $\eta/3$ as well.

Thus the whole cube is within $2\eta/3$ in $L_1$ from the corner, so since $u \in \mL_1$, this means that the value of $u$ on the whole cube is within $2\eta/3$ of its value of the corner. Therefore, the value of $u$ on the whole cube is within $\eta/3 + 2\eta/3 = \eta$ of the value of $v$ on the cube. Thus, $\infnorm{u -v} \leq \eta$.
\end{proof}

\eat{

\subsection{OLD: Learning Weighted Multicalibrated Predictors}

In Algorithm~\ref{alg:wmc}, we describe a procedure for learning multicalibrated predictors.
The algorithm assumes oracle-access to a weak agnostic learner \cite{KalaiMV08,feldman2009distribution}.
\begin{definition}[Weak Agnostic Learning]
For a data distribution $\mD$ supported on $\X \times [-1,1]$, a weak agnostic learner for a hypothesis class $\mC \subseteq \set{c:\X \to [0,1]}$ is a learning procedure that takes labeled data $D = \set{(x_i,z_i)}_{i=1}^m$, where each sample $(\x,\z) \sim \mD$.
For $\alpha > 0$, the learning procedure returns an element of $\mC \cup \set{\bot}$
\begin{gather*}
    c \gets \WAL_{\mC,\alpha}(D)
\end{gather*}
satisfying the following properties.
\begin{enumerate}[(1)]
    \item if there exists $c' \in \mC$ such that $\E_\mD\lr{c'(\x) \cdot \z} \not \in [-\alpha,\alpha]$, then $c \neq \bot$ and $\E_\mD\lr{c(\x) \cdot \z} \ge \alpha/2$.
    \item if $c = \bot$, then for all $c' \in \mC$, $\E_\mD\lr{c'(\x) \cdot \z} \in [-\alpha, \alpha]$.
\end{enumerate}
We say the sample complexity of the weak agnostic learner, $m = m(\mC,\alpha,\beta)$, is the number of samples from $\mD$ necessary to guarantee properties (1) and (2) with probability at least $1-\beta$.
\end{definition}
Intuitively, a weak agnostic learner searches for some $c \in \mC$ that correlates nontrivially with the labels given by $z$.\footnote{Our analysis does not assume that $\WAL$ is a proper learner.  All of the results hold equally for improper weak agnostic learners.}
With this definition in place, we can describe the Weighted Multicalibration algorithm and state its guarantees.

\begin{algorithm}
\caption{\label{alg:wmc}Weighted Multicalibration\\
\textbf{Input:}  training data $\set{(x_i,y_i)}_{i=1}^m$\\
Concept class $\mC \subseteq \set{c:\X \to [0,1]}$,\\
Weight class $\mW \subseteq \set{w:\Delta_l \to [0,1]^l}$,\\
approximation $\alpha > 0$,\\
step size $\eta$\\
\textbf{Output:}  $(\mC,\mW,\alpha)$-multicalibrated predictor $f:\X \to [0,1]^l$}
\begin{algorithmic}
\STATE $f_0(\cdot) \gets (1/2,\hdots,1/2) \in [0,1]^l$
\STATE $mc \gets \mathsf{false}$
\STATE $t \gets 0$
\WHILE{$\neg mc$}
    \STATE $mc \gets \mathsf{true}$
    \FOR{$w \in \mW$}
        \STATE $c_{t+1} \gets \mathsf{WAL}_{\mC,\alpha}\left(\set{\big(x_i, \left\langle w(f_t(x_i)), y_i - f_t(x_i) \right \rangle \big)}_{i=1}^m\right)$
        \IF{$c_{t+1} = \bot$}
            \STATE \textbf{continue}
        \ELSE
            \STATE $\Delta_{t+1}(\cdot) \gets w(f_t(\cdot))\cdot c_{t+1}(\cdot)$
            \STATE $f_{t+1}(\cdot) \gets \pi_{[0,1]^l}\left(f_t(\cdot) + \eta \cdot \Delta_{t+1}(\cdot)\right)$\hfill\textit{// where $\pi_{[0,1]^l}$ projects onto $[0,1]^l$}
            \STATE $mc \gets \mathsf{false}$
            \STATE $t \gets t+1$
            \STATE \textbf{break}
        \ENDIF
    \ENDFOR
\ENDWHILE
\RETURN $f_t$
\end{algorithmic}
\end{algorithm}

The algorithm is an iterative boosting-style procedure.
We initialize the hypothesis to be the constant function $f_0(\x) = (1/2,\hdots,1/2) \in [0,1]^l$.
Then, in the $t$th iteration, for each $w \in \mW$ we reduce the problem of searching for some $c \in \mC$ where $f_t$ is miscalibrated to the problem of weak agnostic learning.
If we find some $c \in \C$ such that
\begin{gather*}
    \E\lr{c(\x) \langle w(f_t(\x)), \y - f_t(\x) \rangle} > \alpha
\end{gather*}
then we can use $c(\x)\cdot w(f_t(\x))$ to update the predictor to be better calibrated in this direction.
If for all $w \in \mW$ we fail to find any $c \in \mC$ that correlates with the residual, then we return the current hypothesis.
We describe the procedure in Algorithm~\ref{alg:wmc}.

\paragraph{Analysis of the algorithm.}
The exact running time of the algorithm depends intimately on the model of computation and the time complexity of weak agnostic learning, which for most classes $\mC$ will dominate the time complexity.
With this in mind, we bound the iteration complexity $T$ of the algorithm, noting that each iteration makes at most $\card{\mW}$ calls to $\WAL_{\mC,\alpha}$, which results in a time complexity bounded by $T \cdot \card{\mW}$ times the complexity of weak agnostic learning $\mC$.

First, we argue correctness---that if the algorithm terminates, then the returned hypothesis satisfies multicalibration.
\begin{lemma}
If Algorithm~\ref{alg:wmc} returns a hypothesis $f:\X \to [0,1]^l$, then $f$ is $(\mC,\mW,\alpha)$-multicalibrated.
\end{lemma}
\begin{proof}
Observe that Algorithm~\ref{alg:wmc} only returns a hypothesis $f_t$ if, in the $t$th iteration, for every $w \in \mW$, the call to the weak agnostic learner $\WAL_{\mC,\alpha}$ returns $\bot$.
By the weak agnostic learning property (2), returning $\bot$ in every call indicates that for all $w \in \mW$ and for all $c \in \C$, the correlation between $c$ and the weighted residual is bounded by $\alpha$ in magnitude.
\begin{gather*}
\E\lr{c(\x) \cdot \left\langle w(f_t(\x)), \y - f_t(\x) \right \rangle} \in [-\alpha,\alpha]
\end{gather*}
By definition, this means that $f_t$ is $(\mC,\mW,\alpha)$-multicalibrated.
\end{proof}
Next, we argue that the number of iterations that the algorithm ever runs for is bounded polynomially in $l$ and $1/\alpha$.
\begin{lemma}
\label{lem:iters}
Algorithm~\ref{alg:wmc} returns $f_T$ after $T \le l/\alpha^2$ iterations.
\end{lemma}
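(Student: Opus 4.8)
The plan is to use a potential-function argument, the standard device in boosting-style analyses of multicalibration. I would track the squared-error potential
\[
\Phi_t = \E_\mD\!\left[\norm{f_t(\x) - f^*(\x)}_2^2\right],
\]
which lies in $[0, l]$ at $t=0$ (since $f_0 \equiv (1/2,\dots,1/2)$ and $f^*(\x) \in \Delta_l$, so each coordinate difference is at most $1$ in absolute value; in fact $\Phi_0 \le l/4$, but $l$ suffices) and is always nonnegative. The goal is to show each update decreases $\Phi_t$ by at least $\eta\alpha^2$ for an appropriate step size $\eta = \Theta(1)$, so that the number of updates is at most $\Phi_0/(\eta\alpha^2) = O(l/\alpha^2)$, matching the claimed $T \le l/\alpha^2$.

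First I would expand the potential after an update. Ignoring the projection $\pi_{[0,1]^l}$ for the moment, if $f_{t+1}(\x) = f_t(\x) + \eta\,\Delta_{t+1}(\x)$ with $\Delta_{t+1}(\x) = w(f_t(\x))\cdot c_{t+1}(\x)$, then
\[
\Phi_{t+1} - \Phi_t = -2\eta\,\E_\mD\!\left[\langle f^*(\x) - f_t(\x),\, \Delta_{t+1}(\x)\rangle\right] + \eta^2\,\E_\mD\!\left[\norm{\Delta_{t+1}(\x)}_2^2\right].
\]
For the cross term, note $\E_\mD[\langle f^*(\x) - f_t(\x), \Delta_{t+1}(\x)\rangle] = \E_\mD[c_{t+1}(\x)\langle w(f_t(\x)), \y - f_t(\x)\rangle]$ using $\E[\y\mid\x]=f^*(\x)$ (tower rule), and by the weak agnostic learning guarantee (property (1)), since the update was performed precisely because some $c'\in\mC$ correlated at level exceeding $\alpha$, the returned $c_{t+1}$ satisfies this correlation being at least $\alpha/2$. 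So the cross term contributes at most $-\eta\alpha$. For the quadratic term, I would bound $\norm{\Delta_{t+1}(\x)}_2^2 \le \norm{w(f_t(\x))}_2^2 \le l$ crudely (each coordinate of $w$ is in $[0,1]$, and $c_{t+1}(\x)\in[0,1]$), or more sharply using $\norm{w(\cdot)}_\infty \le 1$. Choosing $\eta$ to balance — e.g. $\eta = \alpha/(\text{bound on }\norm{\Delta}_2^2)$ — gives a per-step decrease of roughly $\alpha^2/(\text{that bound})$; to land exactly on $T \le l/\alpha^2$ one takes $\eta$ a universal constant and the quadratic-term bound scaled so the net decrease is at least $\alpha^2$ after dividing by $\Phi_0 \le l$. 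I would reconcile the constants with whatever $\eta$ the paper actually fixes in the theorem statement for the sample complexity bound.

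The one genuine subtlety — and the step I expect to be the main obstacle in writing this cleanly — is the projection step $\pi_{[0,1]^l}$. The update direction $f_t + \eta\Delta_{t+1}$ may leave $[0,1]^l$ (or leave $\Delta_l$), and one must argue the Euclidean projection onto the convex set only helps: since $f^*(\x)\in\Delta_l \subseteq [0,1]^l$, projection onto $[0,1]^l$ is nonexpansive toward $f^*(\x)$, i.e. $\norm{\pi_{[0,1]^l}(z) - f^*(\x)}_2 \le \norm{z - f^*(\x)}_2$, so $\Phi_{t+1}$ with the projection is no larger than without it, and the decrease argument goes through unchanged. (One should double-check whether the algorithm's iterates need to stay in $\Delta_l$ for the weight functions $w$ to be well-defined, but since $w \in \mW_k$ are polynomials defined on all of $[0,1]^l$ and bounded there — or can be extended — this is not an issue; I would remark on it if needed.) Finally I would note this argument is completely generic in $\mW$: it uses only $\norm{w(\cdot)}_\infty \le 1$ and the weak-learning guarantee, which is exactly why the same iteration bound applies to degree-$k$, smooth, and full multicalibration, with the cost differences arising solely from $\card{\mW}$ and the sample complexity of $\WAL$, not from the iteration count.
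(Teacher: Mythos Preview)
Your proposal is correct and follows essentially the same potential argument as the paper. The paper fixes $\eta = \alpha/2$, bounds $\E_\mD[\|\Delta_{t+1}(\x)\|_2^2] \le 1$ (rather than your crude $l$) so that each step drops $\phi$ by $\alpha^2/4$ with $\phi(f_0)\le l/4$, giving exactly $T \le l/\alpha^2$; it never addresses the projection, so your non-expansiveness remark is in fact more careful than the original.
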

\begin{proof}
The iteration complexity follows by a potential argument.
Using the expected squared error as a potential function, we lower bound the progress at each iteration.
Specifically, we use the following potential function,
\begin{gather*}
    \phi(f) = \E\lr{\norm{\fs(\x) - f(\x)}^2}.
\end{gather*}
By the assumption that $\fs:\X \to [0,1]^l$ and our choice of $f_0(\x)_i = 1/2$ for all $i \in [l]$, the initial potential value is at most $\phi(f_0) \le l/4$.

Consider the change in potential after the $t$th update.
\begin{align*}
\E&\lr{\norm{\fs(\x) - f_t(\x)}^2} - \E\lr{\norm{\fs(\x) - f_{t+1}(\x)}^2}\\
&=\E\lr{\norm{\fs(\x) - f_t(\x)}^2} - \E\lr{\norm{\fs(\x) - f_{t}(\x) - \eta \cdot \Delta_{t+1}(\x)}^2}\\
&=2\eta \cdot \E\lr{\left\langle \fs(\x) - f_t(\x), \Delta_{t+1}(\x) \right\rangle} - \eta^2 \cdot \E\lr{\norm{\Delta_{t+1}(\x)}^2}\\
&\ge 2 \eta \cdot \E\lr{\left\langle \fs(\x) - f_t(\x), w (f_t(\x)) \cdot c_{t+1}(\x)  \right\rangle} - \eta^2
\end{align*}
By the weak agnostic learning property (1), we know that if $c_{t+1}$ is the output from $\WAL_{\mC,\alpha}$ with the given training data, $c_{t+1}$ has nontrivial correlation with the labels.
\begin{align*}
\E\lr{\left\langle \fs(\x) - f_t(\x), w (f_t(\x)) \cdot c_{t+1}(\x)  \right\rangle}
= \E\lr{c_{t+1}(\x) \cdot \left\langle w(f_t(\x)), \y - f_t(\x)\right \rangle} \ge \alpha/2
\end{align*}
Plugging this bound into the last inequality and taking $\eta = \alpha/2$, the progress in $\phi$ in each iteration is at least $\alpha^2/4$.
By the initial bound on $\phi(f_0)$, the total number of iterations is upper bounded by $T \le l/\alpha^2$.
\end{proof}

We upper bound the sample complexity necessary to run Algorithm~\ref{alg:wmc} in terms of the number of iterations, the cardinality of the weight class $\mW$, and the sample complexity of the weak agnostic learner for $\mC$.
Note that in the $t$th iteration, for each $w \in \mW$, we assign $x_i$ a label that depends on $f_t$.
This dependence on prior hypotheses (and thus prior access to the data), results in an adaptive data analysis problem.
Naively, we can handle this by resampling at each iteration.
We obtain the following generic bound.
\begin{proposition}
\label{prop:samples}
For a hypothesis class $\mC$ and approximation parameter $\alpha_0 > 0$,
the sample complexity $m$ to run Algorithm~\ref{alg:wmc} with success probability at least $1-\beta$ is upper bounded by
\begin{gather*}
    m \le O\left(\frac{l \cdot m(\mC,\alpha_0,\beta_0)}{\alpha_0^2}\right)
\end{gather*}
where $m(\mC,\alpha_0,\beta_0)$ is the sample complexity of running $\WAL_{\mC,\alpha_0}$ with failure probability $\beta_0 \le \frac{\alpha_0^2\beta}{l \cdot \card{\mW}}$.
\end{proposition}
\begin{proof}
The upper bound follows by using a fresh sample for each iteration.
We leverage the upper bound on the number of iterations necessary from Lemma~\ref{lem:iters}, $T \le l/\alpha_0^2$.
Then, to obtain an overall failure probability of $\beta$, we take $\beta_0$ small enough that we can union bound the failure probability of $\WAL_{\mC,\alpha_0}$ over $T \cdot \card{\mW}$ calls.
Again, leveraging the bound on $T$, we bound $\beta_0 \le \frac{\alpha_0^2\beta}{l \cdot \card{\mW}}$.
\end{proof}

Using Proposition~\ref{prop:samples}, we obtain a concrete upper bound on the sample complexity based on specifying a weak agnostic learner and a class of weight functions $\mW$.
For instance, if $\mC$ is a finite class, the weak agnostic learner that iterates over $\mC$ and evaluates the correlation with labels as a statistical query obtains sample complexity $\log(\card{\mC}/\beta_0)/\alpha_0^2$, for an overall sample complexity of
\begin{gather*}
    m \le O\left(\frac{l \cdot \log(l \card{\mC}\card{\mW}/\alpha_0\beta)}{\alpha_0^4}\right).
\end{gather*}
For classes $\mC$ of bounded VC dimension, it is known that the optimal sample complexity of  weak learning is 
\begin{align}
    m(\mC,\alpha_0,\beta_0) =O\left(\frac{\VC(\mC) + \log(1/\beta_0)}{\alpha_0^2}\right).
\end{align}
Assuming we have access to an optimal learner, the  sample complexity bound we get is
\begin{gather}
\label{eqn:vc:optimal}
    m \le O\left(\frac{l \cdot \left(\mathrm{VC}(\mC) + \log(l\card{\mW}/\alpha_0\beta)\right)}{\alpha_0^4}\right).
\end{gather}

\paragraph{Better analyses.}
Improved sample complexity analyses are possible for specialized implementations of the weak agnostic learner.
Following \cite{hkrr2018,KimThesis}, we can avoid some of the cost of resampling by appealing to generalization guarantees for differentially-private learning algorithms \cite{dwork2006calibrating}.
Note that Algorithm~\ref{alg:wmc} only touches the data through the weak agnostic learner, in order to search for a violated constraint for some $c \in \mC$.
By implementing this search step under differential privacy, we can appeal to the results of \cite{dwork2015preserving,bnsssu15,jung2019new}, demonstrating that such algorithms guarantee statistical generalization, even under adaptive access to the data.
For instance, using a bound from Corollary~6.4 of \cite{bnsssu15}, in the case where $\mC$ is a finite class, we can actually bound total the sample complexity as follows.
\begin{gather*}
    m
\le O\left(\frac{l^{1/2}\cdot \log(\card{\mC}\card{\mW}/\alpha)\cdot \log(1/\alpha\beta)^{3/2}}{\alpha^3}\right)
\end{gather*}
This bound follows by viewing each iteration as an optimization over the simultaneous choice of $w \in \mW$ and $c \in \mC$ to maximize the multicalibration violation.
While this approach improves the sample complexity, computationally it requires exhaustive search over the choice of $c \in \mC$ and $w \in \mW$ to execute the exponential mechanism \cite{mcsherry2007mechanism}.

\color{black}

}

\section{Experiments}
\label{sec:experiments}

We use a numerical experiment to compare boosting for multiaccuracy (MA), degree-$2$ multicalibration (MC2) and full multicalibration (MC-full).
The goal of this experiment is two-fold.
First, the theoretical sample complexity results are asymptotic; here, we show that qualitatively similar results hold empirically in the finite sample regime.
Second, the sandwiching bounds show that MC2 can reduce overconfidence, as compared to multiaccurate predictors;
we supplement the theory by showing a setting in which multiaccuracy post-processing does not correct for initial overconfidence, but degree-2 multicalibration does.
In combination, these preliminary experiments suggest that the strongest notion of multicalibration is not always better.
Given a fixed data set size, the realized fairness guarantees may actually improve by choosing a lower degree of multicalibration.

\paragraph{Metrics.}
We measure the performance of predictors across the first two moments across subpopulations $c \in \mC$.
Specifically, we measure the multiaccuracy error as
\begin{align}
    \text{\textbf{multiaccuracy error}:} & & \max_{c \in \mC} \E[c(\x) (f(\x) - f^*(\x)) ] - \E[(1-c(\x)) (f(\x) - f^*(\x)) ] 
\end{align}
Intuitively, this is the multi-accuracy error on the subpopulation and its complement. 
Second, we measure the excess variance as
\begin{align}
     \text{\textbf{excess variance}:  } & & \max_{c \in \mathcal{C}} \left( \Var[f(\x)  \mid c(\x)=1] - \Var[ f^*(\x) \mid c(\x)=1] \right)\cdot \Pr[c(\x) = 1]
\end{align}
which intuitively is how much the variance of the predicted probability exceeds the variance of the optimal predictions over all subpopulations in $\mC$.

\paragraph{Setup.}  To estimate the excess variance we need access to the true probability $f^*$ which is unavailable for real datasets. Therefore, we use a semi-synthetic dataset by fitting a neural network to the real UCI-adult dataset and use the neural network's predicted probability as the ``true'' Bayes optimal probability. We also fit a generative model (a variational autoencoder) to model the distribution on the features $\x$. Combined we create a synthetic dataset where we can sample $\x$ from the generative model, compute the ``true'' probability $f^*(\x)$, and draw samples $\y$ from the ``true'' probability. Note that all learning algorithm only have access to the samples $(\x, \y)$ and not the ``true'' probability; we use the ``true'' probability exclusively for computing the excess variance. 

We generate three datasets: a pre-training set, a training set, and a test set.
We first pretrain a three-layer neural network on the pre-training set, then use our boosting algorithms to adjust the predictions of the pretrained neural network to achieve multi-accuracy or calibration, and finally use the test set to assess performance.
For the calibration class $\mathcal{C}$, we use linear functions with sigmoid activation.

\paragraph{Results.} Our results are summarized in Figure~\ref{fig:uci_adult}.
We make the following observations.
\begin{itemize}
    \item The boosting-style algorithm for multi-accuracy (MA), degree 2 multi-calibration (MC2) and full multicalibration (MC-full) all improve the multi-accuracy error on the training set.
    This is consistent with our result that MC2 and MC-full imply MA, hence by achieving MC2 and MC-full we can also achieve multiaccuracy (MA).
    However, on the test set, we observe that multicalibration is much more prone to overfitting and the multi-accuracy error increases rapidly without carefully regularization (e.g. by early stopping).
    \item Boosting algorithms for degree-2 multicalibration (MC2) and full multicalibration (MC-full) can significantly decrease the excess variance.
    However, degree-2 multi-calibration is much less prone to overfitting and can consistently keep the excess variance low, while multicalibration rapidly overfits.
\end{itemize}

Overall we observe that running Algorithm~\ref{alg:wmc} for degree-2 multicalibration (MC2) can reduce the excess variance without harming the multiaccuracy error, and generally maintains the generalization performance as compared to multiaccuracy (MA) only.
On the other hand, boosting for multicalibration (MC-full) is significantly more prone to overfitting.

\begin{figure}
    \centering
    \includegraphics[width=1.0\linewidth]{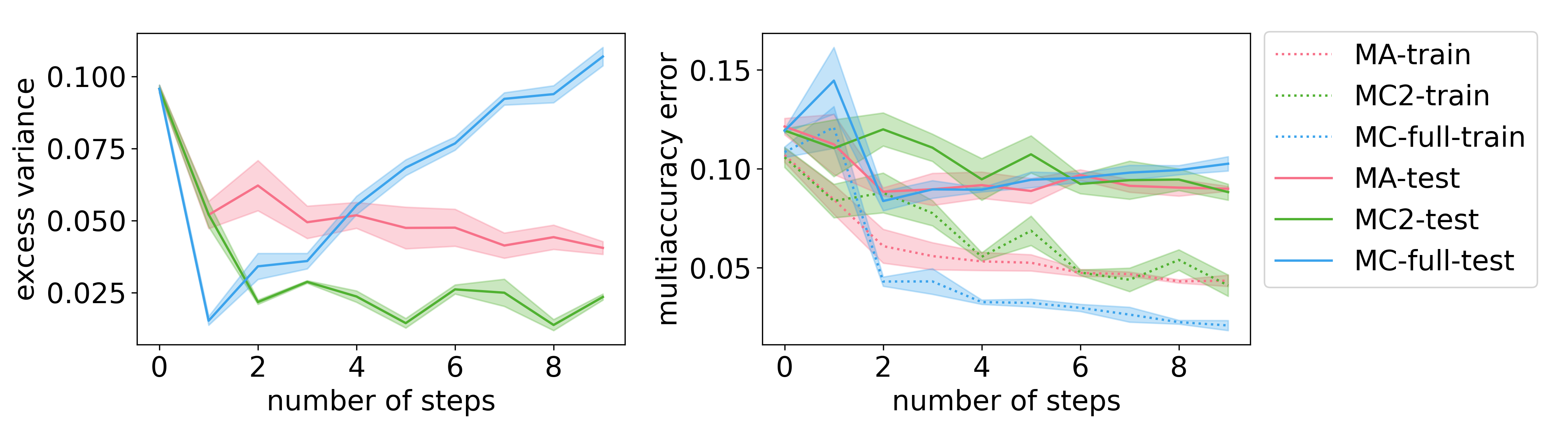}
    \caption{Comparing the excess variance and multiaccuracy error for three methods: boosting for multiaccuracy (MA), degree-2 multicalibration (MC2) and full multicalibration (MC-full). Error bars are 1 standard deviation of the results based on 5 randomly drawn datasets. Both MA and MC2 achieve low multiaccuracy error, but MC2 has much lower excess variance, consistent with our theoretical results. MC-full is very prone to overfitting.}
    \label{fig:uci_adult}
\end{figure}

\paragraph{Acknowledgments.}
The authors thank Gal Yona for useful feedback on an earlier draft of this manuscript and Omer Reingold and Udi Wieder for helpful discussions. We are grateful to Pranay Tankala who alerted us to an error in our analysis of Lemma \ref{lem:iters} in an earlier version of this paper. 

\clearpage
\bibliographystyle{alpha}
\bibliography{refs}

\clearpage
\appendix

\section{Squared loss minimization from degree-2 multicalibration}
\label{app:lossmin}
We give a proof for the claim of squared loss minimization given in the technical overview.
\begin{proposition}
Suppose $f:\X \to [0,1]$ is $(\C,0)$-degree-$2$ multicalibrated for some class $\C$ that contains the constant function $c(\x) = 1$.
Then, for all $c \in \C$
\begin{gather*}
    \E\lr{(c(\x)-\fs(\x))^2} \ge \E\lr{(f(\x) - \fs(\x))^2}
\end{gather*}
In fact the following Pythagorean bound holds:
\begin{gather*}
    \E\lr{(c(\x)-\fs(\x))^2} = \E\lr{(f(\x) - \fs(\x))^2} + \E[\lr{(c(\x) - f(\x))^2}
\end{gather*}
\end{proposition}
\begin{proof}
It suffices to prove the Pythagorean bound, the inequality follows from it.
For $c \in \C$, we consider the difference in squared error with $f$:
\begin{gather*}
\E\lr{(c(\x) - \fs(\x))^2} - \E\lr{(f(\x) - \fs(\x))^2}
= \E\lr{c(\x)^2 - f(\x)^2 - 2c(\x)\fs(\x) + 2f(\x)\fs(x)}
\end{gather*}
By first moment equality, we have that
\begin{gather*}
    \E\lr{c(\x) \fs(\x)} = \E\lr{c(\x) f(\x)}
\end{gather*}
and by degree-$2$ calibration, we have that
\begin{gather*}
    \E\lr{f(\x) \fs(\x)} = \E\lr{f(\x)^2}.
\end{gather*}
Thus, in all, we can simplify the expression as follows.
\begin{align*}
&\E\lr{c(\x)^2 - f(\x)^2 - 2c(\x)\fs(\x) + 2f(\x)\fs(\x)}\\
&=\E\lr{c(\x)^2 + f(\x)^2 - 2c(\x)f(\x) + 2f(\x)\fs(\x) - 2f(\x)^2}
\\
&= \E\lr{(c(\x) - f(\x))^2} \ge 0
\end{align*}
\end{proof}
 \section{Multiaccuracy does not imply positive correlation}

\label{sec:correlation-counterexample}

Note that many common regression methods do not in fact guarantee a positive or near-positive correlation between $f$ and $y$, conditioned on a constraint $c$. We provide an example in the case of binary classification illustrating this.

\begin{example} \label{ex:l2-logistic}
Let $\mathcal X = \{0, 1\}^2$; we will refer to the two coordinates as $x_1$ and $x_2$. Let the four constraints $c_{i, b}$, for $i \in \{1, 2\}$ and $b \in \{0, 1\}$ be 1 exactly when $x_i = b$, and 0 elsewhere. Let the distribution $\mD$ have weight $1/3$ on $(0, 0)$ and $(0, 1)$, and weight $1/6$ on $(1, 0)$ and $(1, 1)$. Finally, let the value $\y$ be always equal to the parity of $\x$; that is, $\y = x_1 \oplus x_2$. \cref{fig:l2-logistic-fig} illustrates this example.

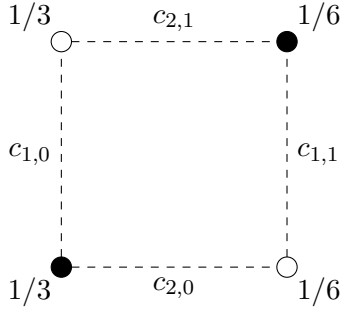
\begin{figure}
\centering
\begin{tikzpicture}[scale=1.5]
\def\r{0.09}
\filldraw[black] (0,0) circle (\r) node[below left]{$1/3$};
\draw[black] (2,0) circle (\r) node[below right]{$1/6$};
\draw[black] (0,2) circle (\r) node[above left]{$1/3$};
\filldraw[black] (2,2) circle (\r) node[above right]{$1/6$};

\draw[dashed] (\r,0)--(2-\r,0);
\draw[dashed] (2,\r)--(2, 2-\r);
\draw[dashed] (0,\r)--(0,2-\r);
\draw[dashed] (\r,2)--(2-\r,2);

\node[below] at (1,0) {$c_{2,0}$};
\node[above] at (1,2) {$c_{2,1}$};
\node[left] at (0,1) {$c_{1,0}$};
\node[right] at (2,1) {$c_{1,1}$};

\end{tikzpicture}
\caption{Diagram illustrating \cref{ex:l2-logistic}. Black points indicate points $\x$ such that $\y=1$ always, and white points indicate where $\y=0$ always. The points are labeled with their probability under $\mD$. The constraints each contain two points, and are drawn as edges.}
\label{fig:l2-logistic-fig}
\end{figure}

\end{example}

We will show that, in \cref{ex:l2-logistic}, both $L_2$ and logistic regression obtain predictions $f$ which have negative correlation with $y$ conditioned on $c_{1,1}$.

First, \textit{$L_2$ regression} finds the predictor $f$ of the form
\[f(\x) = \sum_{i,b} \lambda_{i,b} c_{i,b}(\x),\]
such that the objective
\[\E_{\mD} [(f(\x)-\y)^2]\]
is minimized. By first order optimality, $f$ is actually a multiaccurate predictor. We can see that if we apply $L_2$ regression, the obtained coefficients and predictor $f$ are:
\begin{equation*}
\lambda_{1,0}=\lambda_{1,1}=1/2, \lambda_{2,1}=1/6, \lambda_{2,0}=-1/6,
\end{equation*}
\begin{equation} \label{eq:l2-prediction}
f((0,0))=f((1,0))=2/3, f((0,1))=f((1,1))=1/3.
\end{equation}
This can be seen by running $L_2$ regression, or as follows. First, note that the constraints $c_{i,b}$ are linearly dependent (satisfying $c_{1,0}+c_{1,1}=c_{2,1}+c_{2,0}$), so we may assume that $\lambda_{1,0}=1/2$. The remaining constraints are now linearly independent, so since the objective is strictly convex, there is a unique optimal choice of $\lambda_{i,b}$. Now note that the example exhibits symmetry by exchanging $x_2=0$ and $x_2=1$, and flipping each of the $\y$ values. Thus, the value of the objective is conserved under the substitutions \[\lambda_{1,1} \leftarrow 1-\lambda_{1,1},\; \lambda_{2,0} \leftarrow -\lambda_{2,1},\; \lambda_{2,1} \leftarrow -\lambda_{2,0}.\]
But since the optimum is unique, this implies that $\lambda_{1,1}=1/2$ and $\lambda_{2,0} = -\lambda_{2,1}$. Finally, we can obtain the actual value of $\lambda_{2,0}$ by using multiaccuracy, or by directly optimizing the objective. 
Finally, with the predicted values (\ref{eq:l2-prediction}) we obtain the covariance conditioned on $c=c_{1,1}$ equal to
\[\Cov_{\mD_{c}}[f(\x), \y] = -1/12,\]
showing that $f$ is negatively correlated with $y$ conditioned on $c_{1,1}$.

Next, \textit{logistic regression} finds the predictor 
\[h(\x) = \frac{1}{1+\exp\p{-\sum_{i,b} \theta_{i,b} c_{i,b}(\x)}},\]
maximizing the objective
\[\E_{\mD} [\y \log h(\x) + (1-\y) \log(1-h(\x))].\]
Again, by first-order optimality $h$ is also a multiaccurate predictor. This time, the obtained coefficients and predictor are:
\begin{equation*}
\theta_{1,0}=\theta_{1,1}=0, \theta_{2,1}=\log 2, \theta_{2,0}=-\log 2,
\end{equation*}
\begin{equation}
h((0,0))=h((1,0))=2/3, h((0,1))=h((1,1))=1/3.
\end{equation}
This can again be seen by essentially an identical argument as for $L_2$ regression.
Note that this is also the exact same predictor as that of $L_2$, so we obtain a similar negative correlation.

Note that these examples can also be modified by changing the probabilities under the distribution $\mD$ from $1/3$ and $1/6$ to $1/2-\eps$ and $\eps$, respectively, as $\eps$ gets arbitrarily small. The same argument shows that $f((1,0))$ gets arbitrarily close to $1$ while $f((1,1))$ gets arbitrarily close to $0$. This achieves covariance arbitrarily close to $-1/4$, which is the lowest possible covariance between $[0, 1]$ random variables. This is at the cost of $\mD(c_{1,1})=2\eps$ getting arbitrarily small, so the constraint that we condition on gets arbitrarily low in probability, making statements about the constraint less meaningful.
 
\end{document}